\crefname{hypothesis}{Hypothesis}{Hypotheses}
\Crefname{ALC@unique}{Line}{Lines}
\colorlet{texcscolor}{blue!50!black}
\colorlet{texemcolor}{red!70!black}
\colorlet{texpreamble}{red!70!black}
\colorlet{codebackground}{black!25!white!25}
\definecolor{sky}{RGB}{18,102,153}
\DeclareMathOperator{\VAR}{ Var }
\newcommand{\MEANNN}[2]{ \mathbb{E}_{ #1 } \left [  #2 \right ] }
\lstdefinestyle{siamlatex}{%
  style=tcblatex,
  texcsstyle=*\color{texcscolor},
  texcsstyle=[2]\color{texemcolor},
  keywordstyle=[2]\color{texemcolor},
  moretexcs={cref,Cref,maketitle,mathcal,text,headers,email,url},
}
\DeclareTotalTCBox{\code}{ v O{} }
{ 
  fontupper=\ttfamily\color{black},
  nobeforeafter,
  tcbox raise base,
  colback=codebackground,colframe=white,
  top=0pt,bottom=0pt,left=0mm,right=0mm,
  leftrule=0pt,rightrule=0pt,toprule=0mm,bottomrule=0mm,
  boxsep=0.5mm,
  #2}{#1}
\patchcmd\newpage{\vfil}{}{}{}
\title{
Model  Uncertainty and Correctability for Directed Graphical Models}
\author{Panagiota Birmpa\thanks{Department of Mathematics and Statistics, University of Massachusetts, Amherst, U.S.A (\email{birmpa@math.umass.edu}).}
\and Jinchao Feng\thanks{Department of Applied Mathematics and Statistics,  Johns Hopkins University, Baltimore, U.S.A (\email{jfeng34@jhu.edu}).}
\and Markos A. Katsoulakis\thanks{Department of Mathematics and Statistics, University of Massachusetts, Amherst, U.S.A (\email{markos@math.umass.edu}).}
\and Luc Rey-Bellet\thanks{Department of Mathematics and Statistics, University of Massachusetts, Amherst, U.S.A (\email{luc@math.umass.edu}).}}
\begin{document}
\maketitle

\begin{tcbverbatimwrite}{tmp_\jobname_abstract.tex}
\begin{abstract}
Probabilistic graphical models are a fundamental tool in probabilistic modeling, machine learning and artificial intelligence. They allow us to integrate in a natural way expert knowledge, physical modeling,
heterogeneous and correlated data and quantities of interest. For exactly this reason, multiple sources of model uncertainty are inherent within the  modular structure of the graphical model. In this paper we develop information-theoretic, robust  uncertainty quantification methods and non-parametric stress tests for directed graphical models to assess the  effect and the  propagation through the graph of multi-sourced model uncertainties to  quantities of interest.
These methods allow us to rank the   different  sources  of  uncertainty  and correct the graphical  model by targeting its most impactful components  with respect to  the quantities of interest. Thus, from a machine learning perspective, we provide a mathematically rigorous approach to correctability that guarantees a systematic selection for improvement of
components of a graphical model  while controlling potential new errors created in the process in other parts of the model.
We demonstrate our methods in two physico-chemical examples, namely  quantum scale-informed chemical kinetics and materials screening to improve the efficiency of fuel cells.

\end{abstract}
\begin{keywords}
   Bayesian networks, uncertainty quantification, sensitivity analysis, stress tests, information inequalities, correctability 
\end{keywords}

\begin{AMS}
 62H22, 62P30, 68T37, 80A30, 93B35, 94A17  	
\end{AMS}
\end{tcbverbatimwrite}
\input{tmp_\jobname_abstract.tex}

\section{Introduction}

Data-informed, 
structured  probability models  are typically constructed by combining expert-based mathematical models with available data,  the latter often being heterogeneous, i.e. from multiple sources and  scales,  and  possibly  sparse or imperfect. 
Typically   such  structured  models are formulated as  probabilistic graphical models (PGM), which in turn   are generally classified into Markov Random Fields (MRFs)  over undirected graphs and Bayesian Networks (Bayesian network)  over Directed Acyclic Graph (DAG) \cite{pearl1988probabilistic, koller2009probabilistic},
as well as mixtures of those two classes, \cite{GZ}.   In this paper we focus on Bayesian Networks. DAGs are graphs with directed edges and without  cycles, where   individual vertices correspond to different model components or data inputs, while the directed edges
encode
conditional dependencies between vertices. 
Formally, a Bayesian Network  over a DAG is defined as a pair $\{G,P\}$ consisting of the following ingredients:   $G=\{V,E\}$  is a DAG with $n$ vertices denoted by $V=\{1,\dots,n\}, n\in\mathbb{N}$, along with directed, connecting edges $E \in V\times V$. In addition,  we define   a set of random variables $X_V=\{X_1,\dots,X_n\}$ over $V$ with probability distribution  $P$  with density 
\begin{equation}
\label{eq:PGM:def}
 p(x) = \prod_{i=1}^n p(x_i|x_{\pi_i})
\end{equation}
 where $x_{\pi_i} = \{ x_{i_1},\dots, x_{i_m}\} \subset \{ x_1,\dots,x_n\}$ denotes  the values of parents $\pi_i$ of  each vertex $i$,  
see Figure~\ref{fig: ModelsAmbiguitySet}, 
 and   
$ p(x_i|x_{\pi_i})$
 is the Conditional Probability Density (CPD) for the conditional distribution $P_{i|\pi_i}$ with  parents $\pi_i$. 
 In such models we are typically interested in quantities of interest (QoI) $f(X_A)$ that involve one or more vertices $A\subset V$ and the corresponding   random variables $X_A\subset X_V$.
 
The general mathematical formulation of PGMs was developed in foundational  works in  \cite{pearl1988probabilistic,pearl2014probabilistic}, and 
 are widely used in many real-world applications of Artificial Intelligence, like medical diagnostics, natural language processing, computer vision, robotics, computational biology,  cognitive science to name a few, e.g., \cite{friedman2000using, heckerman2016toward,aronsky1998diagnosing, levitt1990model, lazkano2007use, fung1995applying}.  
 Recently PGMs were built   as computationally tractable surrogates for multi-scale/multi-physics models
 (e.g. from quantum to molecular to engineering scales), such as in porous media and energy storage,
\cite{Katsoulakis_BayesianNetwork, Hall2021_GINN}.   Such models often have hidden correlations in data used in their construction \cite{natchem} or include physical constraints in parameters \cite{Katsoulakis_BayesianNetwork},  necessitating conditional relations  between model components and thus giving rise to CPDs such as the ones in \eqref{eq:PGM:def}.
Finally,  PGMs can be used as  the mathematical foundation for building digital twins used for control and optimization tasks of real systems \cite{Sen:IEEE}.
Some examples include   Bayesian networks  for fuel cells \cite{Feng2020Science} and  Hidden Markov models (a time-dependent special case of Bayesian networks)
for unmanned aerial vehicles \cite{Willcox2021}. 
The  structured probabilistic nature of such models   allows them to be continuously improved, e.g. based on available real-time data \cite{Willcox2021} or through  targeted  data acquisition \cite{Feng2020Science}.

\smallskip

\noindent
\textbf{A. Model Uncertainty in Bayesian networks.}
  Bayesian networks
 will typically have multiple  sources of uncertainty  due to modeling choices or  learning from  imperfect data in the process of building the graph $G$ and each one of the CPDs in  \eqref{eq:PGM:def}. These uncertainties will propagate (and occasionally not propagate -- see Section~\ref{sec: ORR}) through the directed graph structure to the targeted QoIs.
  Uncertainties in probabilistic models  are broadly classified in two categories: {\em aleatoric},  due to the inherent stochasticity  of probabilistic models such as \eqref{eq:PGM:def}
 and  {\it model uncertainties} (also known as   ``epistemic"), \cite{dupuis2016path, Smith}.  In this paper we primarily focus on model uncertainties which   stem from  the inability to accurately model one or more  of the components   of a Bayesian Network $\{G,P\}$:  omitting or simplifying  model components  as is often the case in multi-scale systems, learning from sparse data, or using    approximate inference methods to build CPDs in \eqref{eq:PGM:def}. 
Next, we discuss more concretely these challenges in the context of two physico-chemical examples that  we analyze further using model uncertainty methods developed here.

First, we consider a Langmuir bimolecular adsorption model  (see Section~\ref{sec:Langmuir})  that describes the chemical kinetics with competitive dissociative adsorption of hydrogen and oxygen on a catalyst surface \cite{catalysisGadi,doi:10.1063/1.5021351}. It is a multi-scale  system of random differential equations  with correlated dependencies in their parameters (kinetic coefficients),  arising from quantum-scale computational data calculated  using Density Functional Theory (DFT) (i.e quantum computations) for actual metals. The combination of chemical kinetics with parameter dependencies, correlations and DFT data gives rise naturally to a Bayesian network. 
The QoIs are the equilibrium hydrogen and oxygen coverages computed as the stationary solutions of a system of mean-field differential equations. 
 Here the Bayesian network allows us to incorporate data and correlations from a different scale  to the parameters of an  established chemical kinetics (differential equations) model.
However, the limited availability of the quantum-scale data creates significant model uncertainties  in the  distributions of kinetic coefficients, see for example Figure~\ref{fig:langmuir data}(a), and  the need to be accounted  towards obtaining reliable predictions for the QoI.

 In a second example  analyzed in Section~\ref{sec: ORR} we build suitable   Bayesian networks  for trustworthy  screening of materials  to increase the efficiency of  chemical reactions. Here we consider the  Oxygen Reduction Reaction (ORR) which is  a known performance bottleneck in fuel cells \cite{setzler2016activity}.  This electrochemistry mechanism involves   two reactions which are typically slow. Thus, we seek new materials that speed up these two slowest reactions. For this reason here we focus only on the thermodynamics of these reactions described by  the volcano curve of the Sabatier's principle  \cite{catalysisGadi}.
Based on the Sabatier's principle, the optimal oxygen binding energy is the natural descriptor for discovering new materials and hence it has to be our QoI. Starting from this QoI we build a Bayesian network that includes  expert knowledge (volcano curves), as well as  various available experimental and computational data and their  correlations or  conditional independence. 
Model uncertainties enter in the construction of the  Bayesian network due to lack of complete knowledge of physics and  sparse, expensive,  multi-sourced experimental and/or computational data, see for example Figure~\ref{fig: PGM}(c-g). 

Both these examples illustrate how PGMs (here Bayesian networks) allow to (a) organize in a natural way expert knowledge, modeling, heterogeneous and correlated data and QoIs; (b) study the propagation of all related model uncertainties to the QoI through the graph.  
Practically  these PGMs are  built around the QoI so that it contains all available sources of information  that may influence QoI predictions. 
\smallskip

\noindent
\textbf{B. Mathematical results.}
In this article we focus on quantifying  model uncertainties in Bayesian networks. 
Due to the graph structure of the models such uncertainties can be localized and their propagation to the QoI is affected by  the graph and the CPDs in \eqref{eq:PGM:def}. 
We develop model uncertainty and model sensitivity indices to quantify their impact, taking advantage of the graph structure.

    First, we refer to an already constructed Bayesian network $\{G, P\}$ as the  \textit{baseline model}. We will describe mathematically the model uncertainty of the baseline  by considering alternative models $Q$ in a suitably defined neighborhood of $P$ referred to as the \textit{ambiguity set},
 \begin{equation}
\label{eq:set:MFUQ:intro}
    \mathcal{D}^\eta := 
    \{\textrm{all Bayesian networks } Q: d(Q, P) \leq \eta \}\, .
\end{equation}
The two primary ingredients  for constructing ambiguity sets are the choice of a divergence or probabilistic metric $d$ between the baseline Bayesian network $P$ and an alternative model $Q$ and its size $\eta$ called \textit{model misspecification} which essentially describes  the level of uncertainty in the model. 
Next, given an ambiguity set $\mathcal{D}^\eta$, we define the \textit{model uncertainty indices}  for our QoI $f$ as 
\begin{equation}\label{eq:stresstest:intro}
I^{\pm}(f, P; \mathcal{D}^\eta):=\underset{Q \in\mathcal{D}^\eta}{\mathrm{sup/inf}}\  \{\MEANNN{Q}{f} -\MEANNN{P}{f} \}\, .
\end{equation}
We view these  indices  as a  \textit{non-parametric stress test} on the the baseline $P$ for the QoI $f$ within the ambiguity  set $\mathcal{D}^\eta$,
since they provide the corresponding worst case scenarios. Furthermore, the ambiguity set is non-parametric, allowing us to test the \textit{robustness} of
the baseline against a broader set of scenarios than just a fixed parametric family.

Here  we will define ambiguity sets using the  Kullback-Leibler (KL) divergence as it allows us to obtain 
easily computable  and scalable model uncertainty  indices $I^{\pm}(f, P; \mathcal{D}^\eta)$. Indeed, the KL chain rule allows us to break
down the calculation of any KL distance between different Bayesian network components, i.e. in terms of
conditional KL divergences between distinct CPDs as well as to isolate the uncertainty impact on QoIs from multiple Bayesian network components
and data sources.  A discussion on other natural choices of divergences and metrics can be found in  Section~\ref{sec:conclusions}.
On the other hand, the model misspecification $\eta$ can be selected in two ways. First, by the user adjusting the stress test on the QoI, for example when available  data are too sparse or absent.  
Otherwise, $\eta$  can be estimated  as the KL divergence between the model and the available data. 
Thus, we can consider  user-determined or data-informed  stress tests respectively.


Next, we design different stress tests by adjusting the ambiguity set \eqref{eq:set:MFUQ:intro} to account for global or local perturbations/uncertainties of the baseline model \eqref{eq:PGM:def}.

 \smallskip
 
\noindent \textbf{Model uncertainty indices (perturbing the entire model).} 
Let $f(X_A)$ be a QoI defined on any set of random variables $X_A\subset X_V$. The ambiguity set \eqref{eq:set:MFUQ:intro} in this case  contains all the possible alternative Bayesian networks $Q$ $\eta$-close to the baseline Bayesian network $P$ in the KL divergence for some model misspecification $\eta$. In Theorem~\ref{thm:MFUQ PGM}, we demonstrate that the model uncertainty indices  for  $f(X_A)$  \eqref{eq:stresstest:intro} can be re-written  only in terms of the baseline  $P$ through the one-dimensional optimization
 \begin{eqnarray}
\label{eq:MFUQ indexintro}
I^{\pm}(f(X_A), P; \mathcal{D}^\eta)
= \pm \inf_{c>0}\Big[\frac{1}{c} \log \MEANNN{P_{A}}{e^{\pm c\bar{f}(X_A)} } + \frac{\eta}{c}\Big]=\MEANNN{Q^\pm}{f} -\MEANNN{P}{f} 
\end{eqnarray}
where $P_{A}$ is the marginal distribution of $X_A$  and $\bar{f}(X_A)$ is the centered QoI with respect to $P$. Furthermore, there exist optimizers $Q^{\pm}$ (last equality in \eqref{eq:MFUQ indexintro}) that are Bayesian networks that can  be computed explicitly. 
We note that although the optimization in   \eqref{eq:stresstest:intro} is infinite-dimensional and thus essentially computationally intractable, the formula \eqref{eq:MFUQ indexintro} gives rise to a computable one-dimensional optimization involving only the baseline Bayesian network  $P$. This significant advantage will be exploited throughout the paper to provide practical quantification of  model uncertainty and model sensitivity for PGMs.

Next we quantify the robustness of the baseline against perturbations of individual components of \eqref{eq:PGM:def}. We intend to use 
these methods to explore the relative sensitivity of the baseline on different Bayesian network components, hence we will refer to the corresponding
indices 
model sensitivity indices.

\smallskip

\noindent\textbf{Model sensitivity indices (Perturbing a model component).} 
Let $f(X_k)$ be a QoI with $k\in V$. We examine two ambiguity sets depending on the manner individual model components are perturbed. 
The first ambiguity set consists of all Bayesian networks \eqref{eq:PGM:def} with the same CPDs except for the CPD at a specific vertex $l \in V$; the structure/parents of the component $l$ can be different, however the alternative CPDs are $\eta_l$-``close" to $P$ at the $l$-th component in  KL divergence for some model misspecification $\eta_l$, see Figure~\ref{fig: MFUQ optimizer}.
The second ambiguity set consists of all Bayesian networks with the parents of the vertex $l$ being fixed and only the CPD of $l$ varying. Even if the latter set is a subset of the first ambiguity set, such graph-based constraints allow us to focus on uncertainties arising from a given CPD of the network.  For these ambiguity sets, we derive explicit formulas for  the corresponding sensitivity indices
 that are  tight and practically computable  similarly to \eqref{eq:MFUQ indexintro}, see Theorem~\ref{thm:MFSI general} and \ref{thm:MFSI}.


\smallskip

\noindent
 \textbf{C. Model sensitivity for  ranking and correctability.}
Model sensitivity indices are used here to rank the impact of  different sources of uncertainty, from least to most  influential,  in the prediction of QoIs for Bayesian networks.  
From a machine learning perspective, such rankings  are a systematic form of \textit{interpretability}, i.e. the ability to identify cause and effect in a model, \cite{DoshiVelez2017TowardsAR,MILLER20191,Burges2016ABC},  and {\it explainability}, i.e. the ability to explain model outputs through the modeling and data choices made in the construction of the baseline predictive model, see  \cite{8466590} and references therein. In the ORR model discussed earlier,  
we compute model misspecification parameters $\eta_l$  from data, 
we implement the ranking procedure 
for each graph component of the ORR Bayesian network  (i.e solvation, DFT, experiment and parameter correlation) and reveal  the least and the most influential parts  of the Bayesian Network in the prediction of the optimal oxygen binding energy QoI, 
see Figure~\ref{fig:Pie} and  Section~\ref{sec: ORR}.

 Lastly we leverage  model uncertainty and model sensitivity indices to improve the baseline   with either targeted data acquisition or improved modeling of CPDs and  graph $G$ in \eqref{eq:PGM:def}. We target  for correction  under-performing  components of the baseline, i.e. those inducing the most uncertainty on the QoI in the ranking  above.  Again from a machine learning perspective such a strategy is  a step towards   the  {\em correctability}  of PGMs, 
 namely  the ability  to  correct  targeted  components of a (baseline) model  without creating new errors in other parts of the model in the process \cite{8466590}. 
  Indeed, in the ORR model, we correct the baseline Bayesian network in two distinct ways:  by including  targeted high quality data and  by increasing the model complexity, e.g. considering richer CPD classes or more complex PGMs,  as discussed in Section~\ref{sec: ORR}.
  This is an example of closing the ``data-model-predictions loop" by  iteratively  improving  the model while taking into account   trade-offs between  model complexity, data and predictive guarantees on QoIs. 

\smallskip

\noindent
\textbf{D. Related  work.} The robust perspective in \eqref{eq:stresstest:intro} for general probabilistic models  is  known in the Operations Research literature as \textit{Distributionally Robust Optimization} (DRO) and includes different choices for divergences or metrics in \eqref{eq:set:MFUQ:intro}, see for example \cite{doi:10.1287/opre.1090.0741,doi:10.1287/opre.1090.0795,doi:10.1287/opre.2014.1314,Jiang2016,2016arXiv160402199G,2016arXiv160509349L,MohajerinEsfahani2018,doi:10.1137/16M1094725,doi:10.1287/moor.2018.0936}. 
     Related work  is also encountered in macroeconomics, we refer to the book  Hansen and Sargent \cite{bookHansenSargent}.  Stress testing  via a DRO perspective has been developed in the context of insurance risk analysis in \cite{BlaLamTangYuan2019}. Finally, \cite{Owhadhi:2013} and 
\cite{gourgoulias2017biased} develop robust uncertainty quantification methods using different combinations of concentration inequalities and/or information divergences.
Regarding sensitivity analysis, we note that existing  methods, e.g.,  gradient  and ANOVA-based methods \cite{Smith} are suitable for parametric uncertainties, and thus cannot handle   model uncertainty.  Furthermore,  it is not immediately obvious how they can  take advantage of the graphical structure in Bayesian networks such as conditional independence. 
Here, our mathematical methods broadly rely on UQ information inequalities for QoIs of high-dimensional probabilistic models and stochastic processes  
\cite{dupuis2011uq,dupuis2016path,Hall:Katsoulakis:sparse,Birell-RB2020,Birrell_GOKL} (see also Appendix~\ref{subsec:MFRMFU}). 
 The mathematical novelty of our results lies on extending these earlier works on directed graphs by  developing the proper model uncertainty and model sensitivity framework for general Bayesian networks and studying 
 the propagation of multiple uncertainties through the network to the QoIs. 
Earlier work on  building a predictive chemistry-based PGM  with quantified   model uncertainty for the resulting  Gaussian Bayesian network  was carried out in \cite{Feng2020Science} 
and demonstrated in materials design for  speeding up the oxygen reduction reaction in fuel cells. 
Model uncertainty for PGMs over undirected graphs, also known as Markov Random Fields (MRF) has been recently studied in \cite{birmpa2020uncertainty}. An MRF is a unifying model for statistical mechanics (Gibbs measures) and machine learning (Boltzmann machines), while  the special case of  Gibbs measures  was studied earlier in \cite{katsoulakis2017scalable}. We note that for MRFs
the robust perspective is less flexible  as we cannot fully take advantage of the KL chain rule due to the undirected structure of the graphs.

 
\smallskip

\noindent\textbf{E. Organization.}  The main mathematical results  are presented in Section~\ref{sec:MFUQ} (model uncertainty) and Section~\ref{sec:QUSI} (model sensitivity).
In Section~\ref{sec: correctability} and Section~\ref{Sec:GBN:Correctability}, we discuss ranking and correctability for Bayesian networks. 
In Section~\ref{sec:Langmuir} we discuss  
a DFT-informed Langmuir model while 
in Section~\ref{sec: ORR} we analyze  
the ORR model arising in fuel cells. In Section~\ref{sec:conclusions} we discuss some outstanding issues and directions.
Supporting material is included in the Appendices.

\medskip

\section{Model Uncertainty Indices for Bayesian networks}
\label{sec:MFUQ}

In this section, we develop model uncertainty methods and associated  indices for Bayesian networks.
We start with the key ingredients needed to state and prove the main result (Theorem~\ref{thm:MFUQ PGM}), namely the ambiguity set,  QoIs and the definition of the model uncertainty indices.
First we define the ambiguity set
with model misspecification $\eta$ by 
 \begin{equation}
\label{eq:set:MFUQ}
    \mathcal{D}^\eta := 
    \{\textrm{all PGMs } Q: R(Q\|P) \leq \eta \}
\end{equation}
where $R(Q\|P)=\MEANNN{Q}{\log{\frac{dQ}{dP}}}$ denotes the KL divergence, 
 i.e. we perturb the baseline model $P$ to any alternative model $Q \in \mathcal{D}^\eta$, altering both the structure of the graph and the CPDs.
\begin{figure}[ht]
\centering
\includegraphics[width=1\textwidth]{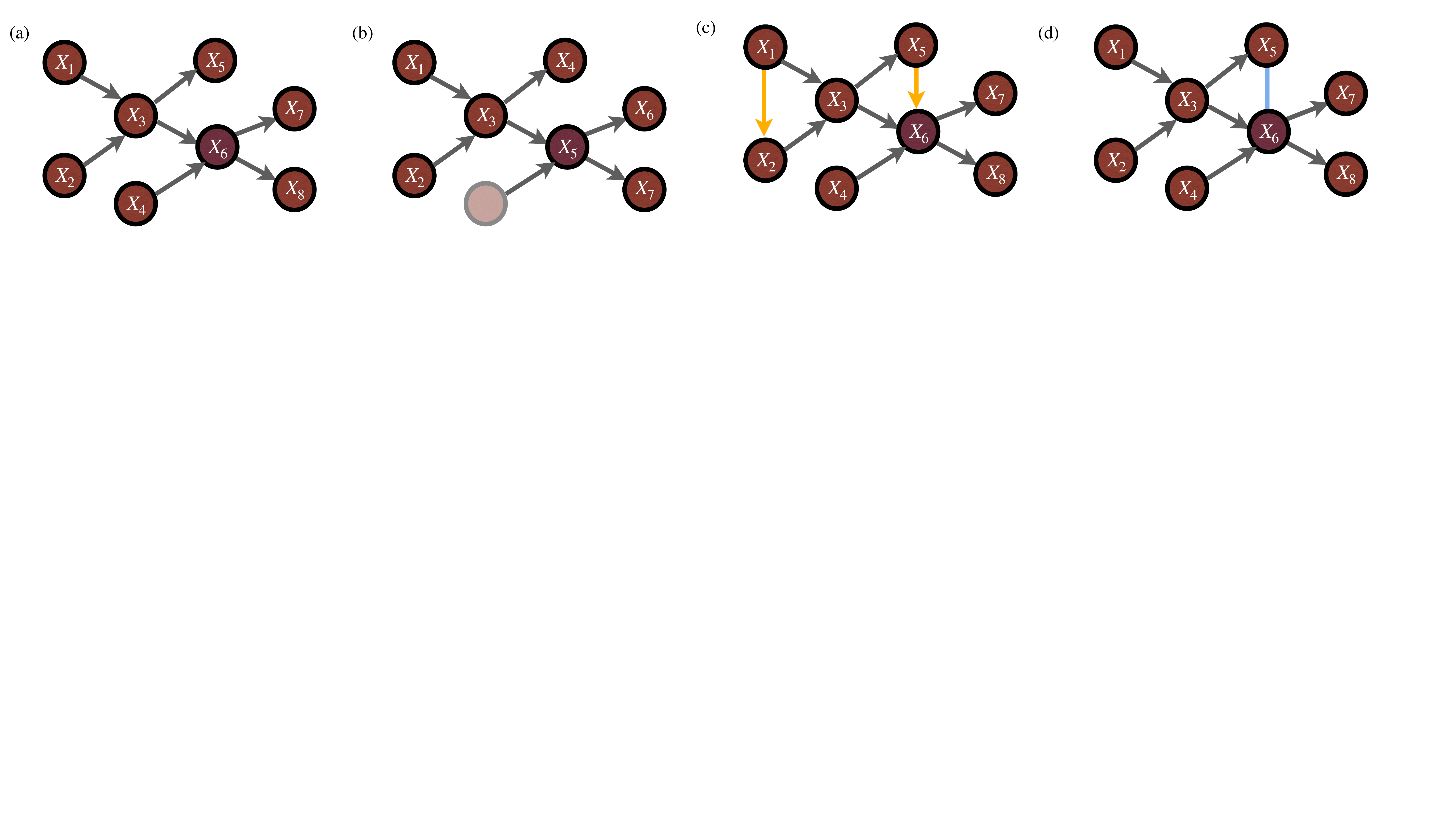}
\vspace{-5.5cm}
\caption{\small{\bf (a)} Example of the graph structure of a baseline Bayesian network $P$ and the corresponding random variables $X=\{X_1,\dots,X_8\}$. {\bf (b)} Example of the graph structure of a Bayesian network $Q\in\mathcal{D}^{\eta}$ defined on a set with one vertex less than the baseline Bayesian network $P$. {\bf (c)} An example of an alternative Bayesian network $Q\in\mathcal{D}^{\eta}$ with the same number of vertices while   $X_2$ and $X_8$ have extra new parents (in yellow).  {\bf (d)} An example of a  PGM $Q\in\mathcal{D}^{\eta}$ with a new undirected edge (in blue).  
}
\label{fig: ModelsAmbiguitySet}
\end{figure}
Examples of  models $Q$ included in $\mathcal{D}^\eta$ can be Bayesian networks defined on a smaller number of vertices than $P$, or same number of vertices with some of them having extra parents, or same number of vertices and parents but different CPDs, see Figure~\ref{fig: ModelsAmbiguitySet}(b-c). 
Furthermore, $\mathcal{D}^\eta$ can include PGMs which are not necessarily Bayesian networks, for example when  some of the  edges  between vertices are not  directed \cite{GZ}, see Figure~\ref{fig: ModelsAmbiguitySet}(d).  For a baseline Bayesian network $P$ 
we define the model uncertainty indices as
\begin{equation}
\label{eq:PU MFUQ PGM}
I^{\pm}(f(X_A), P; \mathcal{D}^\eta)=\underset{Q \in \mathcal{D}^\eta}{\mathrm{sup/inf}}\  \MEANNN{Q}{f(X_A)} -\MEANNN{P}{f(X_A)}
\end{equation}
 for a QoI $f$ which is a function of some subset of $A$ vertices in the graph, i.e., 
\begin{equation}\label{def:QoIGen}
    \textrm{for }f(X_A)=f(X_{i_1},\dots,X_{i_m}), \textrm{ with } A=\{i_1,\dots,i_m\}\subseteq V \,.
        \end{equation}
        In the next theorem, we characterize the  optimizers  $Q^{\pm}$ in \eqref{eq:PU MFUQ PGM} which turn out to be Bayesian networks of the form \eqref{eq:PGM:def} and we provide their CPDs explicitly. 

\smallskip

\noindent {\bf Notation.} Before we state our results let us fix some notation. For a  Bayesian network $\{G, P\}$  
$\pi_i^P$ denotes the set of indices of all the parents of vertex $i$ 
 and $\rho_i^P$ denotes the set of indices of all the ancestors for $i$ 
 (we may omit the superscript  ``$P$'' if only one Bayesian network is involved).
 Without loss of generality we assume that all Bayesian networks are topologically ordered as we can always relabel the DAG so that $j<i$ for all $j \in \pi_i$ by topological sorting \cite{koller2009probabilistic}.

The random vector $X=(X_1,\dots,X_n)$,  indexed by the vertices $V=\{1,...,n\}$, takes values  $X=x=(x_1,\dots,x_n)\in\mathcal{X}$.   
The joint probability distribution of $X$ is denoted by $P$ with density $p(x)$; the results are presented when the joint probability distribution $P$ is continuous but all results hold when $p$ is a discrete distribution as well. For any subset $A=\{i_1,\cdots, i_m\} \subset V$ we denote $X_A=(X_{i_1}, \cdots, X_{i_m})$ which takes values $X_A=x_A=(x_{i_1}, \cdots, x_{i_m})\in\mathcal{X}_A$
and we denote by $P_A$ its marginal distribution.

We denote  $P_{i|\pi_i^P}$ the conditional distribution of $X_i$ given parents values $X_{\pi_i^P} = x_{\pi_i}$, i.e. $P_{i|\pi_i^P}(dx_i)=P(dx_i|x_{\pi_i})$ with corresponding CPD   $p(x_i|x_{\pi_i})$.  
The marginal distribution $P_A$ of $X_A$ has the form $P_{A}(dx_A)=\int_{\mathcal{X}_{\rho_A}}\prod_{i\in A}P_{i|\pi_i^P}(dx_i)\prod_{i\in \rho_{A}^P}P_{i|\pi_i^P}(dx_i)$ where $\rho_A^P$ are the set of indices of all the ancestors of $A$, i.e. $\rho_A^P=\cup_{i\in A}\rho_{i}^P$. Furthermore the density $p_A$ of $P_A$ is   
$ p_{A}(x_A) = 
\int_{\mathcal{X}_{\rho_A^P}}\prod_{i\in A} p(x_i|x_{\pi_i})\prod_{i\in\rho_A^P}p(x_i|x_{\pi_i})d {x_{\rho_A}}$.  
Two special cases are the marginals of  $X_k$,  $P_{\{k\}}(dx_k) = \int_{x_{\rho_k}} \prod_{i \in \{k \cup \rho_k\}} P(dx_i|x_{\pi_i})$ and the marginal of $X_{\rho_A}$, 
     $P_{\rho_A}(dx_{\rho_A}) =  \prod_{i \in \rho_{A} }P(dx_i|x_{\pi_i})$.

 
Finally for  $l_1 < \cdots < l_k$ and any QoI $f$ and for $j\in\{1,\dots,k\}$ we define the notation
    \begin{equation}\label{eq:notation:conditionals:ancestors}
       \mathbb{E}_{P_{l_{j}}|\pi_{l_{j}}^P,\dots, P_{l_{k}}|\pi_{l_{k}}^P}[f]:=\MEANNN{P_{l_j|\pi_{l_j}^P}}{\MEANNN{P_{l_{j+1}|\pi_{l_{j+1}}^P}}{\cdots \MEANNN{P_{l_k|\pi_{l_k}^P}}{f}}}. 
    \end{equation}

\begin{theorem}
\label{thm:MFUQ PGM}
Let $\{G, P\}$ be a Bayesian network with density defined as \eqref{eq:PGM:def}, and $f(X_A)$ be a QoI given in \eqref{def:QoIGen},
$f(X_A)=f(X_{i_1},\dots,X_{i_m})$. Let also $\bar{f}(X_A) := f(X_A) - \MEANNN{P}{f(X_A)}$ be the centered QoI  with finite moment generating function (MGF), $\MEANNN{P}{e^{c\bar{f}(X_{A})}}$, in a neighborhood of the origin.
\smallskip

\noindent $(a)$ {\bf{Tightness.}} For the model uncertainty indices defined in \eqref{eq:PU MFUQ PGM}, there exist $0< \eta_{\pm} \le \infty$, such that for any $\eta \le \eta_{\pm}$,
\begin{eqnarray}
\label{eq:MFUQ PGM}
I^{\pm}(f(X_A), P; \mathcal{D}^\eta)&=&\pm \inf_{c>0}\Big[\frac{1}{c} \log \MEANNN{P_{A}}{\pm e^{ c\bar{f}(X_A)} } + \frac{\eta}{c}\Big]\nonumber\\
&=& \MEANNN{Q^\pm}{f(X_A)} -\MEANNN{P}{f(X_A)},
\end{eqnarray}
where $P_{A}$ is the marginal distribution of $X_A$ with respect to $P$,  and $Q^\pm(\cdot) \equiv Q^\pm(\cdot\;;\pm c_\pm) \in \mathcal{D}^\eta$  are  Bayesian networks  \eqref{eq:PGM:def} that depend on $\eta$  and are given by
\begin{equation}\label{eq:den1}
    \frac{dQ^\pm}{dP} = \frac{e^{\pm c_\pm f(x_A)}}{\MEANNN{P}{e^{\pm c_\pm f(X_A)}}}
\end{equation}
where $c_\pm\equiv c_{\pm}(\eta)$ are the unique solutions of the equation
\begin{equation}
\label{eq:optimizer:c}
R(Q^{\pm}\|P)=\eta.
\end{equation}
\smallskip

\noindent$(b)$ {\bf{Graph Structure of $Q^{\pm}$.}} Let $L$ be   all vertices that include $A$ and all its ancestors, i.e. $L=\underset{j\in A}\cup{\rho_{j}^{p}\cup A}=\{l_1,\dots,l_{k+1}\}$, where $l_1 < \cdots < l_{k+1}$ and $l_{k+1} = i_m$, then the CPDs of $Q^\pm$ are given by
\begin{equation}\label{eq:MFUQ optimizer:1}
q^\pm(x_i|x_{\pi_i^{Q^\pm}})=
\begin{cases} 
        p( x_i|x_{\pi_i^P})\,  & \textrm{$i \notin L$} \\
       \frac{\mathbb{E}_{P_{l_{j+1}}|\pi_{l_{j+1}}^P,\dots, P_{l_{k}}|\pi_{l_{k}}^P}[e^{\pm c_\pm f(X_{A})}]}{\mathbb{E}_{P_{l_{j}}|\pi_{l_{j}}^P,\dots, P_{l_{k}}|\pi_{l_{k}}^P}[e^{\pm c_\pm f(X_{A})}]}  p( x_{l_j}|x_{\pi_{l_j}^P})\,  & \textrm{$i=l_j$}, \,j\in A
   \end{cases}
   \end{equation}
\noindent where $l_j\in L$ and $\mathbb{E}_{P_{l_{j}}|\pi_{l_{j}}^P,\dots, P_{l_{k}}|\pi_{l_{k}}^P}$ is given by \eqref{eq:notation:conditionals:ancestors}. The parents/structure is given by
$\pi_i^{Q^\pm} \equiv \pi_i^P, i \notin L$ and $\pi_{l_j}^P\! \subset\!\pi_{l_j}^{Q^\pm}\!\subset\!\pi_{l_j}^P\!\cup \! (\pi_{l_{j+1}}^{Q^\pm}\setminus l_j)$.

\begin{remark}  Theorem~\ref{thm:MFUQ PGM} readily implies that we can severely restrict the ambiguity set \eqref{eq:set:MFUQ} to  a subclass of Bayesian networks, yielding the exact same index, 
\begin{eqnarray}
I^{\pm}(f(X_A), P; \mathcal{D}^\eta) = I^{\pm}(f(X_A), P; \mathcal{D}^\eta_{\rho_A})
\end{eqnarray} 
where  a set of Bayesian networks $\mathcal{D}^\eta_{\rho_A}$ is defined as
\begin{equation}
     \!\mathcal{D}^\eta_{\rho_A} :=\left\{\!\!\!\!
    \begin{array}{cc}
    \textrm{all Bayesian networks } Q: R(Q_{A \cup \rho_A}\|P_{A\cup \rho_A}) \leq \eta \\ \textrm{ and } 
    q(x_i|x_{\pi_i^{q}}) \equiv p( x_i|x_{\pi_i^p}) \  \textrm{with $\pi_i^{Q} \equiv \pi_i^P$ for all $i \notin A\cup \rho_A$}
    \end{array}
    \!\!\!\!\right\}.
\end{equation}
This follows from  the formula
\begin{eqnarray}
 \MEANNN{P}{f(X_A)}=\int_{\mathcal{X}} f(x_A) \prod_{i=1}^n p(x_i|x_{\pi_i})dx\nonumber
 =\int_{\mathcal{X}_{A\cup \rho_A}} f(x_A) \prod_{x_i \in A\cup\rho_A} p(x_i|x_{\pi_i})dx_Adx_{\rho_A}
\end{eqnarray}
which implies that only the perturbation of $P_{A\cup\rho_{A}}$  affects the prediction of the QoI. A similar  calculation  for  the MGF of $\bar{f}$ 
implies that the optimal $Q^\pm$  has the  same CPDs as $P$ for all $X_i$, $i \notin \{A\} \cup \rho_A$, as shown in Theorem \ref{thm:MFUQ PGM}. 
\end{remark}

\begin{remark}
 We illustrate Theorem~\ref{thm:MFUQ PGM} in the special
 case $A=\{k\}$, i.e QoIs defined on one vertex through the example in Figure~\ref{fig: MFUQ optimizer}, see also Appendix~\ref{sec:GBNsup} for more details.
\begin{figure}[ht]
\centering
\includegraphics[width=1.2\textwidth]{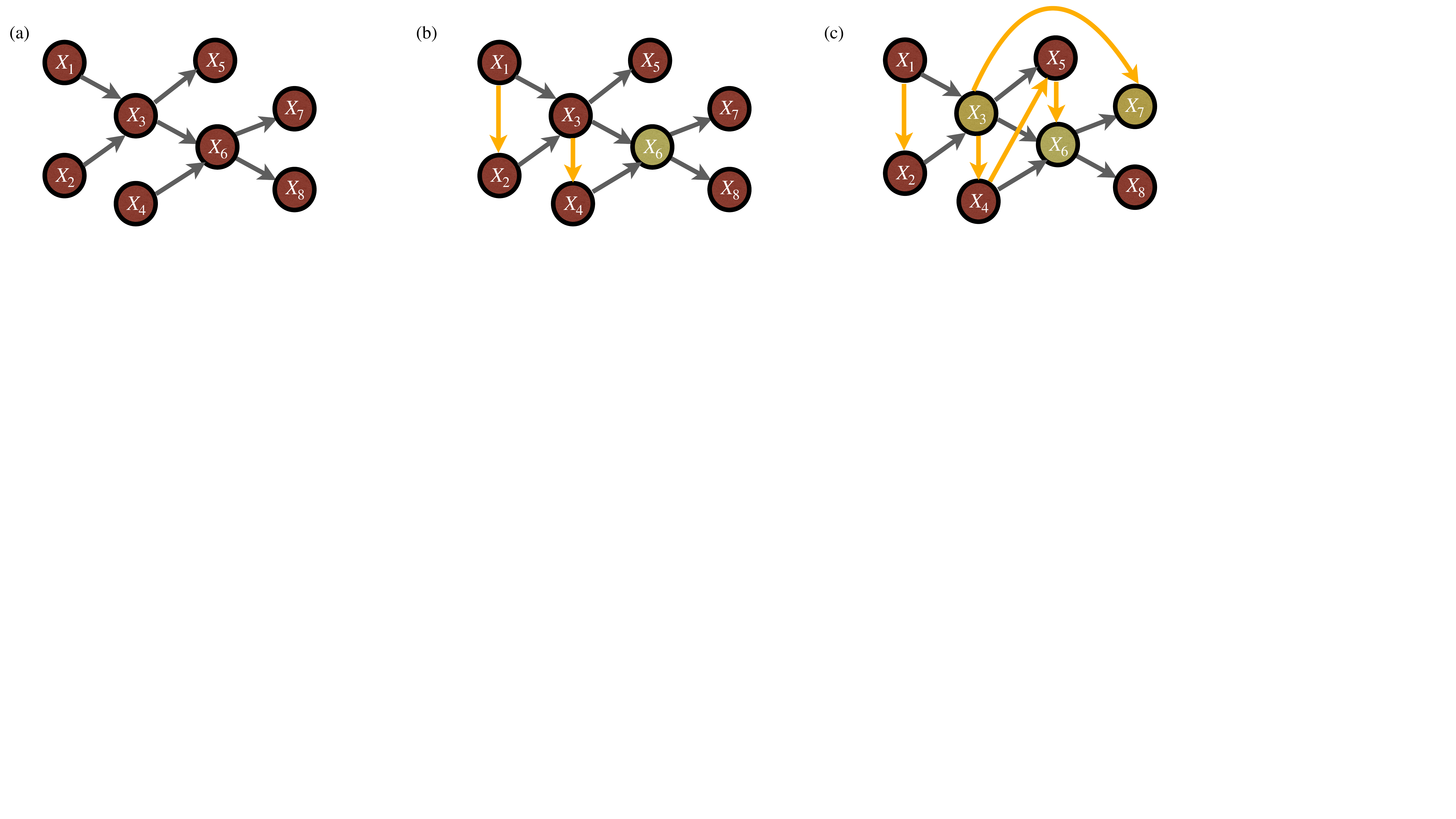}
\vspace{-6.8cm}
\caption{\small{\bf (a)} Example of  graph structure of a baseline Bayesian network $P$. {\bf (b)} The structure of the optimizers $Q^\pm$ in Theorem \ref{thm:MFUQ PGM} (b) with QoI $f(X_6)$ 
is highlighted (in green). In contrast  to the CPDs of the vertices involved in the QoI and their ancestors, the CPD of any other vertex does not change. The new parents of $X_4, X_2$ are connected (in yellow), i.e.  $X_3$ is a new parent  for $X_4$ and $X_1$ is a new parent for $X_2$. 
{\bf (c)} Structure of the optimizers $Q^\pm$ in  Theorem \ref{thm:MFUQ PGM} (b) for a QoI of the type   $f(X_3,X_6, X_7)$, see \eqref{exnode8}-\eqref{exgen*}. }
\label{fig: MFUQ optimizer}
\end{figure}
\end{remark}

\end{theorem}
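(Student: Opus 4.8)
The plan is to prove the closed form in two stages: first establish the scalar variational formula and its tightness in part $(a)$ via the Gibbs (Donsker--Varadhan) variational principle, and then identify the Bayesian--network structure of the optimizer in part $(b)$ by factorizing the exponentially tilted measure through integration in reverse topological order.

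For part $(a)$, I would first center the objective: since $\MEANNN{Q}{f(X_A)}-\MEANNN{P}{f(X_A)}=\MEANNN{Q}{\bar{f}(X_A)}$ for every $Q$, the index is $I^{+}=\sup_{Q\in\mathcal{D}^\eta}\MEANNN{Q}{\bar{f}}$ (and $I^{-}$ the infimum). Because $f$ depends only on $X_A$ we may replace $\MEANNN{P}{\cdot}$ by the marginal $\MEANNN{P_A}{\cdot}$ throughout. The key inequality is the Gibbs bound $\MEANNN{Q}{\bar{f}}\le \frac{1}{c}\log\MEANNN{P}{e^{c\bar{f}}}+\frac{1}{c}R(Q\|P)$, valid for all $c>0$ and all $Q\ll P$; combined with $R(Q\|P)\le\eta$ and optimized over $c$ this gives $I^{+}\le\inf_{c>0}[\frac{1}{c}\log\MEANNN{P}{e^{c\bar{f}}}+\frac{\eta}{c}]$. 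For tightness I would exhibit the exponential tilt $dQ^{+}_c/dP=e^{c\bar{f}}/\MEANNN{P}{e^{c\bar{f}}}=e^{cf}/\MEANNN{P}{e^{cf}}$, for which the Gibbs inequality is an equality. Writing $\Lambda(c)=\log\MEANNN{P}{e^{c\bar{f}}}$, direct computation gives $R(Q^{+}_c\|P)=c\Lambda'(c)-\Lambda(c)$ and $\MEANNN{Q^{+}_c}{\bar{f}}=\Lambda'(c)$, so the stationarity condition for $g(c):=(\Lambda(c)+\eta)/c$ is precisely $R(Q^{+}_c\|P)=\eta$, at which point $g$ attains the value $\Lambda'(c)=\MEANNN{Q^{+}_c}{\bar{f}}$; this proves the middle and right equalities of \eqref{eq:MFUQ PGM} simultaneously and shows $c_{+}$ is the solution of \eqref{eq:optimizer:c}. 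Since $\frac{d}{dc}R(Q^{+}_c\|P)=c\Lambda''(c)=c\,\VAR_{Q^{+}_c}(\bar{f})>0$, the map $c\mapsto R(Q^{+}_c\|P)$ is strictly increasing from $0$, so it attains every value in $[0,\eta_{+})$ with $\eta_{+}:=\lim R(Q^{+}_c\|P)$ as $c$ approaches the edge of the MGF domain; this defines the threshold and yields the unique root $c_{+}$. The case $I^{-}$ follows by replacing $f$ with $-f$ (tilting by $e^{-cf}$), producing the outer minus sign, the optimizer $Q^{-}$, and $\eta_{-},c_{-}$.

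For part $(b)$, the task is to show that $Q^{\pm}$ from \eqref{eq:den1}, a priori merely a tilted measure, factors as a Bayesian network, and to read off its CPDs. Starting from $q^{+}(x)=Z^{-1}e^{cf(x_A)}\prod_{i}p(x_i|x_{\pi_i^P})$ with $Z=\MEANNN{P}{e^{cf(X_A)}}$, I would first use the marginalization identity (that $\MEANNN{P}{f(X_A)}$ and its MGF depend only on the sub-network $P_{A\cup\rho_A}$) to argue that every vertex outside $L=A\cup\rho_A$ retains its baseline CPD: integrating leaves in reverse topological order removes their factors without touching the tilt, since $f$ depends only on $x_A$ and $L$ is ancestrally closed (the parents of an ancestor of $A$ are again ancestors of $A$, so $L$ has no external parents). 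The heart of the argument is then a telescoping factorization over $L$: using the partial conditional expectations $\psi_j:=\mathbb{E}_{P_{l_j|\pi_{l_j}^P},\dots,P_{l_k|\pi_{l_k}^P}}[e^{cf(X_A)}]$ of \eqref{eq:notation:conditionals:ancestors} (the empty integration giving $e^{cf}$ and the full integration over $L$ giving the constant $Z$), I would define $q^{+}(x_{l_j}|\cdot):=(\psi_{j+1}/\psi_j)\,p(x_{l_j}|x_{\pi_{l_j}^P})$ and verify two facts: each factor is a legitimate CPD, i.e. it integrates to $1$ in $x_{l_j}$, which is exactly the recursion $\int\psi_{j+1}\,p(x_{l_j}|x_{\pi_{l_j}^P})\,dx_{l_j}=\psi_j$; and the product $\prod_{l_j\in L}q^{+}(x_{l_j}|\cdot)$ telescopes to $(e^{cf}/Z)\prod_{l_j\in L}p(x_{l_j}|x_{\pi_{l_j}^P})$, which together with the unchanged factors reproduces $q^{+}(x)$. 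The parent structure is obtained by tracking which variables survive in $\psi_{j+1}/\psi_j$: since $\psi_{j+1}$ arises from integrating out $x_{l_{j+1}},\dots,x_{l_k}$, its remaining arguments are the external parents of those vertices, yielding the recursion $\pi_{l_j}^P\subset\pi_{l_j}^{Q^{+}}\subset\pi_{l_j}^P\cup(\pi_{l_{j+1}}^{Q^{+}}\setminus l_j)$; the identical computation with $-f$ handles $Q^{-}$.

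The robust UQ machinery makes part $(a)$ essentially routine once the exponential tilt is identified as the optimizer and its KL divergence is checked to be strictly monotone in $c$. The main obstacle is the bookkeeping in part $(b)$: specifically, (i) justifying via reverse topological integration that no CPD outside $L$ is perturbed; (ii) proving the normalization $\int\psi_{j+1}\,p(x_{l_j}|x_{\pi_{l_j}^P})\,dx_{l_j}=\psi_j$, which is what makes the claimed factors genuine CPDs and makes the product telescope; and (iii) identifying the induced parent sets $\pi_{l_j}^{Q^{+}}$, since integrating out higher-indexed ancestors is exactly what creates the new edges depicted in Figure~\ref{fig: MFUQ optimizer}. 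These are the steps where the directed-graph structure, rather than a generic measure-theoretic argument, does the real work.
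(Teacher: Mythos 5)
Your proposal is correct and follows essentially the same route as the paper: part $(a)$ is the Gibbs variational principle with the exponential tilt and the monotonicity of $c\mapsto R(Q_c^{\pm}\|P)$ (which the paper simply cites from its appendix lemmas, Lemma~\ref{lemma:bounds} and Lemma~\ref{lemma:tightness}) together with the reduction of the MGF to the marginal $P_A$, and part $(b)$ is the same reverse-topological, telescoping factorization of the tilted density into CPDs via the iterated conditional expectations. Your version of part $(b)$ is in fact slightly more complete than the paper's, since you make explicit the normalization recursion $\int\psi_{j+1}\,p(x_{l_j}|x_{\pi_{l_j}^P})\,dx_{l_j}=\psi_j$ and the ancestral closure of $L$ that makes the fully integrated quantity a constant, points the paper leaves implicit after its first two steps.
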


\noindent{\it Proof of Theorem~\ref{thm:MFUQ PGM}.} $(a)$ The existence of $Q^{\pm}$ and \eqref{eq:den1} are direct consequences of \eqref{eq:UQ variational form} with $f(X)=f(X_A)$. For  $p(x)=\prod_{i=1}^n p(x_i|x_{\pi_i})$, we further compute
\begin{eqnarray}
\underset{Q \in \mathcal{D}^\eta}{\mathrm{sup/inf}}\  \MEANNN{Q}{f(X_A)} -\MEANNN{P}{f(X_A)} 
&=&\pm \inf_{c>0}\Big[\frac{1}{c} \log \MEANNN{P}{e^{\pm c\bar{f}(X_A)}}+ \frac{\eta}{c}\Big] \notag\\
&=& \pm \inf_{c>0}\Big[\frac{1}{c} \log \int_{\mathcal{X}} e^{\pm c\bar{f}(X_A)} \prod_{i=1}^n P(dx_i|x_{\pi_i}^P) + \frac{\eta}{c}\Big] \notag\\
&=& \pm \inf_{c>0}\Big[\frac{1}{c} \log \MEANNN{P_{A}}{e^{\pm c\bar{f}(X_A)} } + \frac{\eta}{c}\Big]
\end{eqnarray}
where $p_A$ is given in the notation before the theorem

\medskip

\noindent $(b)$ We use \eqref{eq:den1} and we factorize $q^{\pm}$ as follows:
\begin{eqnarray}\label{eq:factorization1}
    q^{\pm}(x)  &=& \frac{e^{\pm c_\pm f(x_A)}}{\MEANNN{P}{e^{\pm c_\pm f(X_A)}}} \prod_{i=1}^n p(x_i|x_{\pi_i^P}) \notag\\
    &=& \frac{1}{\MEANNN{p_A}{e^{\pm c_\pm f(X_A)}}}\prod_{i \notin\{l_1,\dots,l_{k+1}\}} p(x_i|x_{\pi_i^p}) \cdot e^{\pm c_\pm f(x_A)} p(x_{i_m}|x_{\pi_{i_m}^p}) \nonumber\\
    &&\quad\quad \times \prod_{i \in \{l_1,\dots,l_{k}\}} p(x_i|x_{\pi_i^p})
\end{eqnarray}
where $\pm c_\pm$ are the unique solutions of $R(P^{\pm c_\pm}\|P)=\eta$. Formula \eqref{eq:factorization1} is not factorized yet into CPDs as in \eqref{eq:PGM:def} due to the normalization factor at the denominator. The following analysis provides the steps for expressing \eqref{eq:factorization1} in a product of certain CPDs: Assuming that $i_1<\dots<i_m$, we start with the CPD of $X_{i_m}$ as its index is the largest among the elements of $A$. Based on \eqref{eq:factorization1}, 
\begin{equation}\label{eq:factorization2}
q^\pm(x_{i_m}|x_{\pi_{i_m}^{q^\pm}})\propto e^{\pm c_\pm f(x_A)} p(x_{i_m}|x_{\pi_{i_m}^P})
\end{equation}
We normalize the LHS of \eqref{eq:factorization1} by dividing by 
\begin{equation}\label{exp1}
\MEANNN{P_{i_m|\pi_{i_m}^P}}{e^{\pm c_\pm f(X_A)}}
\end{equation}
and by conditioning to $x_{\pi_{i_m}^P}$ and $x_{A\setminus i_m}$. Therefore, the CPD of $X_{i_m}$ and its parents $\pi_{i_m}^{Q^\pm} $ are given by
\[
q^\pm(x_{i_m}|x_{\pi_{i_m}^{Q^\pm}})=\frac{e^{\pm c_\pm f(x_A)}}{\MEANNN{P_{i_m|\pi_{i_m}^P}}{e^{\pm c_\pm f(X_A)}}} \cdot p(x_{i_m}|x_{\pi_{i_m}^P})  \textrm{ and $ \pi_{i_m}^P\subset\pi_{i_m}^{Q^\pm} = \pi_{i_m}^P\cup (A\setminus i_m) $} 
\]
Such a consideration provides the new edges in the graph of $Q^{\pm}$. In particular, $X_{i_m}$ has the same parents as in $P$ model and possibly new parents specified by $x_{A\setminus i_m}$ e.g if  $A\setminus i_m\neq \pi_{i_m}^P$. Next, we compute the  CPD of $X_{l_k}$ since $l_k<l_{k+1}=i_{m}$: As we divided by  \eqref{exp1} to normalize the the LHS of \eqref{eq:factorization1}, we keep \eqref{eq:factorization1} same if we also multiple $q^{\pm}(x)$ by  \eqref{exp1}. Hence, we pair \eqref{eq:factorization1} and $p(x_{l_k}|x_{\pi_{l_k}^P})$ so as 
 \begin{equation}\label{eq:factorization3}
q^\pm(x_{l_k}|x_{\pi_{l_k}^{Q^\pm}})\propto \MEANNN{P_{i_m|\pi_{i_m}^P}}{e^{\pm c_\pm f(X_A)}}p(x_{l_k}|x_{\pi_{l_k}^P})
\end{equation}
As before, we normalize the LHS of \eqref{eq:factorization3} by dividing by 
\begin{equation}\label{exp2}
\MEANNN{P_{l_k|\pi_{l_k}^P}}{\MEANNN{P_{i_m|\pi_{i_m}^P}}{e^{\pm c_\pm f(X_A)}}}
\end{equation}
and by conditioning to $x_{\pi_{l_k}^P}$ and $x_{\pi_{i_m}^{Q^\pm}\setminus l_k}$, we obtain 
\[
q^\pm(x_{l_k}|x_{\pi_{l_k}^{Q^\pm}})=\frac{\MEANNN{P_{i_m|\pi_{i_m}^P}}{e^{\pm c_\pm f(X_A)}}}{\MEANNN{P_{l_k|\pi_{l_k}^p}}{\MEANNN{P_{i_m|\pi_{i_m}^p}}{e^{\pm c_\pm f(X_A)}}}} \cdot p(x_{i_m}|x_{\pi_{i_m}^P})  
\]
and $ \pi_{l_k}^P\subset \pi_{l_k}^{Q^\pm} = \pi_{l_k}^P\cup (\pi_{i_m}^{Q^\pm}\setminus l_k) $. The latter shows the new edges that the associated graph to $Q^{\pm}$ may have. In this way, we obtain the remaining CPDs given by the second part of \eqref{eq:MFUQ optimizer:1}. It is straightforward that the random variables indexed differently than $\{l_1,\dots,l_{k+1}\}$ inherent the corresponding CPDs of $P$, and thus \eqref{eq:MFUQ optimizer:1} is obtained.

\subsection{Gaussian Bayesian Networks}\label{exGBayesian network1}
In this subsection, we focus on Gaussian Bayesian Networks. It is a special class of Bayesian networks commonly used in natural and social sciences with the CPDs as in \eqref{eq:PGM:def}
being linear and Gaussian \cite{koller2009probabilistic,10.2307/2632102, gomez2013effect, gomez2011evaluating, gomez2007sensitivity}. More specifically, for a Gaussian Bayesian network consisting of variables $X$, each vertex $X_i$ is a linear Gaussian of its parents, i.e.,
\begin{eqnarray}
\label{eq:GBayesian network dist}
    p(x_i|x_{\pi_i}) = \mathcal{N}(\beta_{i0}+\beta_i^T x_{\pi_i}, \sigma_i^2), \quad\textrm{ equivalently}\\
     X_i = \beta_{i0}+\beta_i^T X_{\pi_i} + \epsilon_i,\quad\textrm{ with } \epsilon_i \sim \mathcal{N}(0, \sigma_i^2)\nonumber
\end{eqnarray}
 for  some $\beta_0$, $\sigma_i$ and  $\beta_i=[\beta_{ii_1},\dots,\beta_{ii_{|\pi_i|}}]$.  By the conjugacy properties  of Gaussians, the joint distribution $P$ becomes 
$ p(x)=\mathcal{N}(\mu,\mathcal{C})$, i.e. it 
is also a Gaussian with parameters $\mu$, $\mathcal{C}$, which can be calculated from $\beta_{i0}$, $\beta_i$, and $\sigma_i$ \cite{bishop}. 

\medskip


\begin{theorem}
\label{cor:GBayesian network MFUQ}
Let $P$ be a Gaussian Bayesian network that satisfies \eqref{eq:GBayesian network dist}, and $f(X_k) = a X_k+b$ be a QoI only depends on $X_k$ linearly. 
\medskip

\noindent $(a)$ Then for the model uncertainty indices defined in \eqref{eq:PU MFUQ PGM}, we have
\begin{equation}
\label{eq:GBayesian network MFUQ}
 I^{\pm}(f(X_k), P; \mathcal{D}^\eta) 
    = \pm \sqrt{2a^2\mathcal{C}_{kk}\eta}
\end{equation}
where $\mathcal{C}_{kk}$ is the variance for the marginal distribution of $X_k$. 
\smallskip

\noindent $(b)$ Furthermore, the optimizers $Q^\pm = Q^\pm(\eta) \in \mathcal{D}^\eta$  are given by  \eqref{eq:MFUQ optimizer:1} in Theorem \ref{thm:MFUQ PGM} and are also Gaussian Bayesian networks with same graph structure as $P$. 
\end{theorem}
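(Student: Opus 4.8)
The plan is to specialize the master formula \eqref{eq:MFUQ PGM} of Theorem~\ref{thm:MFUQ PGM} to $A=\{k\}$ and to exploit that, for a Gaussian Bayesian network, the only object that enters is the marginal $P_{\{k\}}$, which is explicitly $\mathcal{N}(\mu_k,\mathcal{C}_{kk})$, together with the fact that $f$ is affine. First I would center the QoI: since $f(X_k)=aX_k+b$ and $\MEANNN{P}{f(X_k)}=a\mu_k+b$, the centered QoI is $\bar f(X_k)=a(X_k-\mu_k)$, so $\bar f(X_k)$ is a centered Gaussian of variance $a^2\mathcal{C}_{kk}$. In particular its MGF is finite on all of $\mathbb{R}$, so the tightness hypothesis of Theorem~\ref{thm:MFUQ PGM} holds with $\eta_\pm=\infty$ and \eqref{eq:MFUQ PGM} is valid for every $\eta>0$.

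Next I would substitute the Gaussian MGF $\MEANNN{P_{\{k\}}}{e^{\pm c\bar f(X_k)}}=\exp(\tfrac{1}{2}c^2a^2\mathcal{C}_{kk})$ into the one-dimensional variational problem in \eqref{eq:MFUQ PGM}. The bracket collapses to the convex function $c\mapsto \tfrac{1}{2}c\,a^2\mathcal{C}_{kk}+\eta/c$, whose minimizer over $c>0$ is $c_\pm=\sqrt{2\eta/(a^2\mathcal{C}_{kk})}$, found by setting the derivative to zero. Evaluating the bracket at $c_\pm$ yields $I^{\pm}=\pm\sqrt{2a^2\mathcal{C}_{kk}\eta}$, which is exactly \eqref{eq:GBayesian network MFUQ}; this settles part (a).

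For part (b) I would start from the Radon--Nikodym derivative \eqref{eq:den1}. Since $f$ is affine in $x_k$, the factor $e^{\pm c_\pm f(x_k)}$ is, up to normalization, an exponential tilt $e^{\pm c_\pm a\,x_k}=e^{\pm c_\pm a\,e_k^\top x}$ of the joint Gaussian $P=\mathcal{N}(\mu,\mathcal{C})$, where $e_k$ is the $k$-th coordinate vector. Completing the square in the quadratic exponent shows $Q^\pm=\mathcal{N}(\mu\pm c_\pm a\,\mathcal{C}e_k,\mathcal{C})$, i.e.\ a pure mean shift with the covariance $\mathcal{C}$ left unchanged. Because the covariance, and hence the precision matrix $\mathcal{C}^{-1}$, is identical to that of $P$, the conditional-independence structure is preserved, so $Q^\pm$ factorizes over the same DAG with linear-Gaussian CPDs; equivalently, reading off \eqref{eq:MFUQ optimizer:1} one checks that every CPD stays of the form \eqref{eq:GBayesian network dist} with the same slopes $\beta_i$ and noise variances $\sigma_i^2$ and only the intercepts $\beta_{i0}$ shifted. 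Finally I would verify that the $c_\pm$ from the optimization coincides with the root of \eqref{eq:optimizer:c}: a direct computation of the Gaussian KL gives $R(Q^\pm\|P)=\tfrac{1}{2}c_\pm^2a^2\mathcal{C}_{kk}$, so $R(Q^\pm\|P)=\eta$ returns precisely $c_\pm=\sqrt{2\eta/(a^2\mathcal{C}_{kk})}$, consistently.

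The one step that deserves care, and where the Gaussian structure does the real work, is showing that the optimizer introduces no new edges. The general Theorem~\ref{thm:MFUQ PGM} only guarantees $\pi_{l_j}^P\subset\pi_{l_j}^{Q^\pm}\subset\pi_{l_j}^P\cup(\pi_{l_{j+1}}^{Q^\pm}\setminus l_j)$, so a priori the ancestral CPDs could acquire extra parents. The point is that the mean-shift argument keeps the precision matrix fixed, which pins the graph down to that of $P$. I expect this to be the main (if modest) obstacle, the remainder being the two explicit Gaussian computations above.
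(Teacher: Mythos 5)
Your part (a) is exactly the paper's argument: identify the marginal $P_{\{k\}}=\mathcal{N}(\mu_k,\mathcal{C}_{kk})$, compute the Gaussian MGF of the centered affine QoI, and minimize the resulting one-dimensional bracket $c\mapsto \tfrac{1}{2}c\,a^2\mathcal{C}_{kk}+\eta/c$ to get $\pm\sqrt{2a^2\mathcal{C}_{kk}\eta}$. (Your bookkeeping of the factor $\tfrac12$ is the correct one and is consistent with the stated result; the paper's displayed intermediate steps drop it.)

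For part (b) you take a genuinely different and arguably cleaner route. The paper works CPD by CPD: starting from the factorization \eqref{eq:factorization1} it computes $q^{\pm}(x_k|x_{\pi_k^{Q^\pm}})$ and then $q^{\pm}(x_{k_m}|x_{\pi_{k_m}^{Q^\pm}})$ explicitly by completing the square in each conditional, and observes that the exponential factors depending on the other parents appear in both numerator and denominator of each normalized CPD and cancel, so no new edges are created. You instead argue globally: the tilt $e^{\pm c_\pm a x_k}$ of the joint Gaussian is a pure mean shift $\mathcal{N}(\mu\pm c_\pm a\,\mathcal{C}e_k,\mathcal{C})$, the covariance (hence the precision matrix and all conditional independencies) is unchanged, so $Q^\pm$ factorizes over the same DAG with the same slopes $\beta_i$ and variances $\sigma_i^2$ and only shifted intercepts. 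This is correct for a nondegenerate joint Gaussian and subsumes the paper's cancellation computations in one stroke; it also hands you the KL identity $R(Q^\pm\|P)=\tfrac12 c_\pm^2 a^2 e_k^\top\mathcal{C}e_k=\tfrac12 c_\pm^2 a^2\mathcal{C}_{kk}$ for free, confirming $c_\pm=\sqrt{2\eta/(a^2\mathcal{C}_{kk})}$. What the paper's explicit route buys is the concrete form of each optimal CPD (the shifted means $\beta_{k0}+\beta_k^Tx_{\pi_k}\pm c_\pm a\sigma_k^2$, etc.), which is reused later (e.g.\ in the sensitivity results and in Appendix Example~\ref{ex:cont}), and it avoids any appeal to invertibility of $\mathcal{C}$, which matters in the paper's applications where some $\sigma_i^2=0$. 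Your argument is complete as a proof of the theorem as stated; if you wanted the explicit CPDs of $Q^\pm$ you would still need the coordinatewise computation.
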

\noindent {\it Proof.} (a) The distribution of $X_k$ denoted by $P_{\{k\}}$ is Gaussian with variance 
\[
\mathcal{C}_{kk}=\sigma_{k}^2+\beta_k^{T}\mathcal{C}_{\rho_k}\beta_k
\]
where $\mathcal{C}_{\rho_k}$ is the variance of the joint distribution of the random variables $\{X_i:i\in\rho_{k}\}$, \cite[Theorem 7.3]{koller2009probabilistic}. By a straightforward computation, the moment generating function of $\bar{f}(X_k)$ is given by 
\[
\MEANNN{P_{\{k\}}}{ e^{ \pm c\bar{f}(X_k)} }=\exp( a^2c^2 \beta_k^{T}\mathcal{C}_{\rho_k}\beta_k)
\] 
\begin{eqnarray}
I^{\pm}(f(X_k), P; \mathcal{D}^\eta)= \pm \inf_{c>0}\Big[\frac{1}{c} \log \MEANNN{P_{\{k\}}}{ e^{\pm c\bar{f}(X_k)} } + \frac{\eta}{c}\Big]=\pm \inf_{c>0}\Big[a^2c\,\mathcal{C}_{kk}+ \frac{\eta}{c}\Big]\nonumber\\
\end{eqnarray}
Then, the optimal $c$ is given by $c=\sqrt{\frac{\eta}{a^2\mathcal{C}_{kk}}}$ which in turn proves \eqref{eq:GBayesian network MFUQ}.
\medskip

\noindent (b) Next, we show that the graph structure of the $Q^{\pm}$ is the same as $P$. For any $j>k$, by Theorem~\ref{thm:MFUQ PGM}, $q(x_j|x_{\pi_j^{Q^{\pm}}})=p(x_j|x_{\pi_j^{P}})$. For $j=k$,  we compute
\begin{eqnarray}\label{comp:GBN1}
q^\pm(x_{k}|x_{\pi_{k}^{Q^\pm}})&=&\frac{e^{\pm c_\pm f(x_k)}}{\MEANNN{P_{k|\pi_{k}^P}}{e^{\pm c_\pm f(X_k)}}} \cdot p(x_{k}|x_{\pi_{k}^P}) \\
&=&\frac{\exp\left\{-\frac{(x_k-\beta_{k0}-\beta_k^T x_{\pi_k^P} \mp c_{\pm}a\sigma_k^2 )^2}{2\sigma_k^2}\pm c_\pm a(\beta_k^T x_{\pi_k^P})\right\}}{\int_{\mathcal{X}_k} \exp\left\{-\frac{(x_k-\beta_{k0}-\beta_k^T x_{\pi_k^P} \mp c_{\pm}a\sigma_k^2 )^2}{2\sigma_k^2}\pm c_\pm a (\beta_k^T x_{\pi_k^P})\right\}dx_k}\notag\\
&=&\mathcal{N}\left(\beta_{k0}+\beta_k^T x_{\pi_k^P} \pm c_{\pm}a\sigma_k^2, \sigma_k^2 \right)\notag
\end{eqnarray}
Thus $\pi_k^{Q^\pm}=\pi_k^{P}$ since $ c_\pm a(\beta_k^T x_{\pi_k})$ of the numerator and denominator are canceled out. Let $k_m$ be the maximum element of $\pi_k^P=\{k_1,\dots,k_m:k_1<\dots<k_{m-1}<k_{m}\}$ and $\beta_k^{m-1}:=[\beta_{kk_1},\dots, \beta_{kk_{m-1}}]$. Then 
\begin{eqnarray}\label{comp:GBN2}
&&\;\;\;\;\;\;\;\;\;\;\;q^\pm(x_{k_m}|x_{\pi_{k_m}^{Q^\pm}})=\frac{\MEANNN{P_{k|\pi_{k}^P}}{e^{\pm c_\pm f(X_k)}}}{\MEANNN{P_{k_m|\pi_{k_m}^P}}{\MEANNN{P_{k|\pi_{k}^P}}{e^{\pm c_\pm f(X_k)}}}} \cdot p(x_{k_m}|x_{\pi_{k_m}^P}) \\
&&\qquad\;\;=\frac{\exp\left\{\pm c_{\pm}a\beta_k^T x_{\pi_k^P} -\frac{(x_{k_m}-\beta_{k_m0}-\beta_{k_m}^T x_{\pi_{k_m}^P})^2}{2\sigma_{m}^2}\right\}}{\int_{\mathcal{X}_{k_m}} \exp\left\{\pm c_{\pm}a\beta_k^T x_{\pi_k^P} -\frac{(x_{k_m}-\beta_{{k_m}0}-\beta_{k_m}^T x_{\pi_{k_m}^P})^2}{2\sigma_{k_m}^2}\right\}dx_{k_m}}\notag\\
&&\qquad\;\;=\frac{\exp\left\{\pm c_{\pm}a(\beta_{k}^{m-1})^T x_{\pi_k^P\setminus k_m} -\frac{(x_{k_m}-\beta_{{k_m}0}-\beta_{k_m}^T x_{\pi_{k_m}^P}\mp c_{\pm}a\beta_{kk_m}\sigma_{k_m}^2)^2}{2\sigma_{k_m}^2}\right\}}{\int_{\mathcal{X}_{k_m}} \exp\left\{\pm c_{\pm}a(\beta_{k}^{m-1})^T x_{\pi_k^P\setminus k_m} -\frac{(x_{k_m}-\beta_{{k_m}0}-\beta_{k_m}^T x_{\pi_{k_m}^P}\mp c_{\pm}a\beta_{kk_m}\sigma_{k_m}^2)^2}{2\sigma_{k_m}^2}\right\}dx_{k_m}}\notag\\
&&\qquad\;\;=\mathcal{N}\left(\beta_{{k_m}0}+\beta_{k_m}^T x_{\pi_{k_m}^P}\pm c_{\pm}a\beta_{kk_m}\sigma_{k_m}^2, \sigma_{k_m}^2 \right)\notag
\end{eqnarray}
Again, $\pi_{k_m}^{Q^\pm}=\pi_{k_m}^{P}$ as the factor $\exp\left\{\pm c_{\pm}a(\beta_{k}^{m-1})^T x_{\pi_k^P\setminus k_m}\right\}$ in the numerator and denominator are canceled out. The CPD of  the remaining vertices in $\pi_k^P$ are computed in the same way which further implies that their parents do not change. Therefore, the factors in CPDs of $Q^{\pm}$ that  could create new directed edges  appear  in both numerator and denominator and are finally canceled out. We demonstrate \eqref{comp:GBN1} and \eqref{comp:GBN2} as it applies in  Example~\ref{ex:cont} in Appendix~\ref{sec:GBNsup}.

\section{Model Sensitivity Indices for Bayesian networks}\label{sec:QUSI}
In this section, we develop  a  non-parametric sensitivity analysis for Bayesian networks  by refining the concepts of model uncertainty indices introduced in Section~\ref{sec:MFUQ}. 
This is accomplished through designing  localized ambiguity sets suitable for  model uncertainty/perturbations in specific components of the graphical model such as a single CPD.  
\begin{figure}[ht]
\centering
\includegraphics[width=1.3\textwidth]{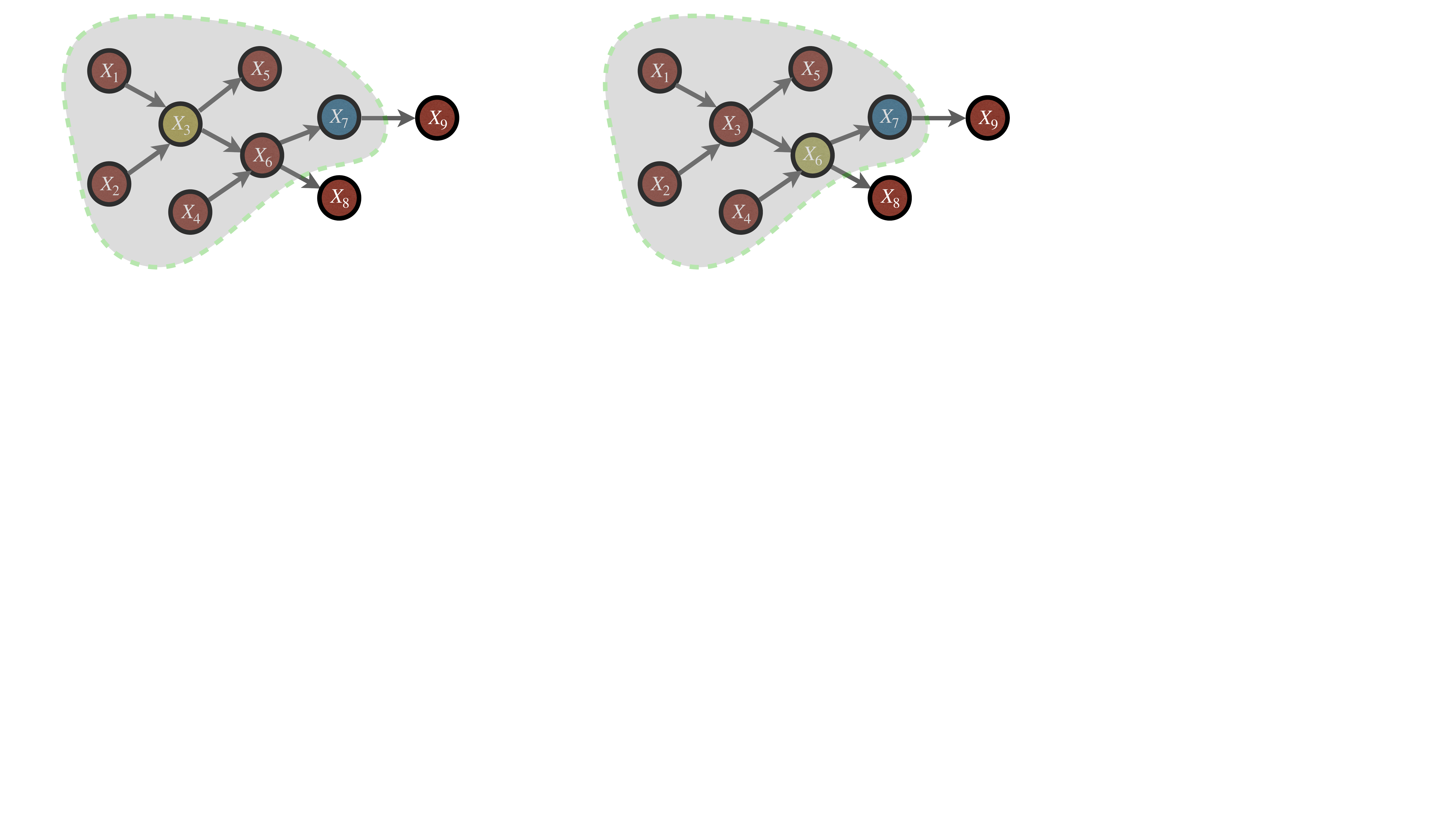}
\vspace{-6.5cm}
\caption{\small{Example of the structure of a Bayesian network baseline model. The QoI is given by $f(X_7)=X_7$ (blue color). We fix $X_7$ and we perturb one vertex at time, e.g. $X_3$ (left) and $X_6$ (right) in green color. The vertices involved in the graph can be classified into  $l\in \bar{\rho}_7^P
=\rho_7^P \cup \{7\}
=\{1,2,3,4,5,6,7\}$ (vertices in the dashed area) and $\{8,9\}$ which are not in $\bar{\rho}_7^P$ (vertices outside of the dashed area), see left and right figures. Based on these figures  and Lemma~\ref{lem:UpperF}, the model sensitivity indices \eqref{eq:PU general MFSI} over $\mathcal{D}^{\eta_l}_l$ and $\mathcal{D}^{\eta_l}_{l,P}$ is 0 for  $l=8,9$,  meaning that perturbations on vertices which are not ancestors of 7 do not affect the QoI,  while perturbations on those vertices in $\bar{\rho}_7^P$ affect the QoI. 
}}
\label{fig:pict4}
\end{figure}
\medskip

\noindent{\bf Notation.} For the notation of this section, we refer to Section~\ref{sec:MFUQ}. Moreover, we denote $\bar{\rho}_k^P:=\rho_k^P \cup \{k\}$.
\medskip

Let $f=f(X_k)$ be a QoI depending only on vertex $k \in V$  and let $l\in V$ be another vertex. The first ambiguity set 
$\mathcal{D}^{\eta_l}_l$
consists of all Bayesian networks (BN) $Q$ that differ from the baseline $P$ only in the CPD at the vertex $l$ while also  allowing for the parents $\pi_l^P$ at $l$ to  change. Namely, 
\begin{equation}\label{eq:set:MFSI general}
\mathcal{D}^{\eta_l}_l =\left \{\begin{array}{cc}\textrm{all BN } Q: R(Q_{l|\pi_l^Q}\|P_{l|\pi_l^P})  \leq \eta_l \textrm{ for all $x_{\pi_i^P} \cup x_{\pi_i^Q}$},\\
 Q_{j|\pi_j} \equiv P_{j|\pi_j} \textrm{ for all } j \neq l
 \end{array}\right\}
\end{equation}
where the parents $\pi_l^Q$ in model $Q$ may  differ from  the parents $\pi_l^P$ in model $P$.

The second ambiguity set $\mathcal{D}^{\eta_l}_{l, P}$ consists of all Bayesian networks (BN) $Q$ that differ from the baseline $P$ only in the CPD at the vertex $l$, however here we require that   $\pi_l^Q=\pi_l^P=\pi_l$, i.e. parents  are not allowed to change:
\begin{equation}\label{eq:set:MFSI}
\mathcal{D}^{\eta_l}_{l,P} =\left \{\textrm{all BN } Q: R(Q_{l|\pi_l}\|P_{l|\pi_l})  \leq \eta_l \textrm{ for all } x_{\pi_l},
Q_{j|\pi_j} \equiv P_{j|\pi_j} \textrm{ for all } j \neq l\right\}
\end{equation}
Note that 
\begin{equation}\label{eq:subset}
\mathcal{D}^{\eta_l}_{l,P}\subset\mathcal{D}^{\eta_l}_{l}.
\end{equation}
We accordingly define the  model sensitivity indices of the QoI $f(X_k)$  as 
\begin{equation}\label{eq:PU general MFSI}
    I^{\pm}(f(X_k), P; \mathcal{Q}_{\eta_l}) = \underset{Q \in\mathcal{Q}_{\eta_l}}{\mathrm{sup/inf}} \   \MEANNN{Q}{f(X_k)} -\MEANNN{P}{f(X_k)} 
\end{equation}
where $\mathcal{Q}_{\eta_l}=\mathcal{D}^{\eta_l}_l$ or $\mathcal{D}^{\eta_l}_{l,P}$ given by \eqref{eq:set:MFSI general} and \eqref{eq:set:MFSI} respectively.

The evaluation  of these  model sensitivity indices will necessarily depend on the relative graph position of vertices $k, l \in V$ and in particular if $l$ is an ancestor of $k$. In particular we have the following:
%
\begin{lemma}\label{lem:UpperF}
Let $Q\in \mathcal{Q}_{\eta_l}$ where $\mathcal{Q}_{\eta_l}=\mathcal{D}^{\eta_l}_l$ or $\mathcal{D}^{\eta_l}_{l,P}$. Then
\begin{eqnarray}
\label{eq:MFSI dif}
 \;\;\;\;\;\MEANNN{Q}{f(X_k)} - \MEANNN{P}{f(X_k)}&=& 
\begin{cases} 
    \MEANNN{P_{\rho_l^P}}{ \MEANNN{Q_{l|\pi_l^Q}}{F} - {\MEANNN{P_{l|\pi_l^P}}{F}}}&, l \in \bar{\rho}_k^P \\
       0 &, l \notin \bar{\rho}_k^P 
\end{cases}
\end{eqnarray}
where
\begin{eqnarray}\label{eq:F}
    F:=F(x_l,x_{\rho_l^P})&=&  \int_{\mathcal{X}_{\bar{\rho}_k^P \setminus \rho_l^P\cup\{l\}}} f(x_k) \prod_{i \in \bar{\rho}_k^P \setminus \rho_l^P\cup\{l\}} P(d x_i|x_{\pi_i^P})\\
    & = &\MEANNN{P_{\{k\}| \bar{\rho}_l^P}}{f(X_k)}\, ,\notag
\end{eqnarray}
and  the last expectation is  with respect to the conditional distribution of $X_k$ given $X_{\bar{\rho}_l^P}=x_{\bar{\rho}_l^P}$.
\end{lemma}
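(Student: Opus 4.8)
The plan is to compute both $\MEANNN{P}{f(X_k)}$ and $\MEANNN{Q}{f(X_k)}$ by iterated conditioning on the ancestrally closed sets $\bar{\rho}_l^P$ and $\rho_l^P$, and then subtract. The two structural facts I will use throughout are: (i) marginalizing a Bayesian network over an ancestrally closed set of vertices leaves precisely the product of the corresponding CPDs, since integrating out the remaining downstream vertices in reverse topological order contributes a factor $1$; and (ii) $P$ and $Q$ share every CPD except the one at $l$.

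First I would establish the dichotomy in \eqref{eq:MFSI dif}. Because $f$ depends only on $X_k$ and $\bar{\rho}_k^P$ is ancestrally closed, $\MEANNN{P}{f(X_k)} = \int f(x_k)\prod_{i\in\bar{\rho}_k^P}p(x_i|x_{\pi_i^P})\,dx_{\bar{\rho}_k^P}$. If $l\notin\bar{\rho}_k^P$, the CPD at $l$ never enters this integral; moreover, since the only edges we alter point into $l$, every vertex of $\bar{\rho}_k^P$ keeps its $P$-parents and $\bar{\rho}_k^P$ remains ancestrally closed under $Q$, so the identical marginalization applies to $Q$ with identical CPDs. This gives $\MEANNN{Q}{f(X_k)}=\MEANNN{P}{f(X_k)}$, the second branch.

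For the main case $l\in\bar{\rho}_k^P$, I would introduce $F=\MEANNN{P}{f(X_k)\mid X_{\bar{\rho}_l^P}}$ and verify that it equals the integral in \eqref{eq:F}: conditioning on $X_{\bar{\rho}_l^P}$ and integrating out the intermediate vertices $\bar{\rho}_k^P\setminus\bar{\rho}_l^P$ (whose CPDs are untouched) produces exactly $F(x_l,x_{\rho_l^P})$. The crucial point is that this conditional law of $X_k$ given $X_{\bar{\rho}_l^P}$ is built only from CPDs of vertices in $\bar{\rho}_k^P\setminus\bar{\rho}_l^P$, none of which is $l$, so it coincides under $P$ and $Q$; hence $\MEANNN{Q}{f(X_k)\mid X_{\bar{\rho}_l^P}}=F$ as well, and the tower property yields $\MEANNN{P}{f(X_k)}=\MEANNN{P}{F}$ and $\MEANNN{Q}{f(X_k)}=\MEANNN{Q}{F}$. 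Conditioning once more on the ancestrally closed set $\rho_l^P$, the marginal of $X_{\rho_l^P}$ is the same $P_{\rho_l^P}$ under both models (they agree off $l$), while the conditional law of $X_l$ given $X_{\rho_l^P}$ is $P_{l|\pi_l^P}$ under $P$ and $Q_{l|\pi_l^Q}$ under $Q$. This gives $\MEANNN{P}{F}=\MEANNN{P_{\rho_l^P}}{\MEANNN{P_{l|\pi_l^P}}{F}}$ and $\MEANNN{Q}{F}=\MEANNN{P_{\rho_l^P}}{\MEANNN{Q_{l|\pi_l^Q}}{F}}$, and subtracting proves the first branch.

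The main obstacle is the bookkeeping caused by allowing the parents of $l$ to change in $\mathcal{D}^{\eta_l}_l$. I must ensure that re-routing the incoming edges of $l$ neither spoils the ancestral closedness of $\bar{\rho}_k^P$ and $\rho_l^P$ used above nor enlarges the ancestor set of $k$ relevant to the computation, and that when $\pi_l^Q$ contains vertices outside $\rho_l^P$ the inner expectation $\MEANNN{Q_{l|\pi_l^Q}}{F}$ still collapses to a quantity integrated against $P_{\rho_l^P}$, i.e. those auxiliary parents are averaged out under their shared $P$-law. For the smaller set $\mathcal{D}^{\eta_l}_{l,P}$, where $\pi_l^Q=\pi_l^P$, these issues vanish and the two conditioning steps go through verbatim; the general case requires carefully tracking the extra parents, which is where I would concentrate the technical effort.
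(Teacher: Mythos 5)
Your argument is correct and follows essentially the same route as the paper's proof in Appendix E: marginalize onto the ancestral closure of $k$, use that all CPDs except the one at $l$ coincide under $P$ and $Q$, and integrate out the vertices of $\bar{\rho}_k^P\setminus(\rho_l^P\cup\{l\})$ to produce $F$, leaving the difference of the two conditionals at $l$ averaged against $P_{\rho_l^P}$. Your tower-property phrasing is just a repackaging of the paper's explicit rearrangement of the product of CPDs, and the subtlety you flag about $\pi_l^Q$ possibly leaving $\rho_l^P$ is present (and handled equally implicitly) in the paper's own computation.
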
 

\noindent  The proof of Lemma~\ref{lem:UpperF} is a direct calculation of the difference between the expectations of $f(X_k)$ and is based on a rearrangement between the CPDs of $X_{\rho_k^P\cup\{k\}}$, $X_{\rho_l^P}$ and $X_l$ with respect to $P$ and $Q$, see  Appendix~\ref{app:lem:UpperF}, while a concrete computation of $F$ is given in Appendix~\ref{ex:contF} for the Bayesian network of Example~\ref{ex:cont}.

 Next, following the structure of  Theorem~\ref{thm:MFUQ PGM} and using  Lemma~\ref{lem:UpperF}, we present our results on tightness and optimal distributions over $\mathcal{D}^{\eta_l}_{l}$ and  $\mathcal{D}^{\eta_l}_{l,P}$   stated in Theorem~\ref{thm:MFSI general} and ~\ref{thm:MFSI} respectively. Theorem~\ref{thm:MFSI} could be thought of as a subcase of Theorem~\ref{thm:MFSI general} due to \eqref{eq:subset}, however tightness on $\mathcal{D}^{\eta_l}_{l,P}$ cannot be accomplished unless the additional condition \eqref{eq:tightness cond} is assumed. All these results  are summarized in a schematic in 
  Figure~\ref{fig:readgraph}. 

\begin{theorem}[Model Sensitivity Indices--vary graph structure and CPD]
\label{thm:MFSI general}
Let $P$ be a Bayesian network with density defined as \eqref{eq:PGM:def}, and $f(X_k)$ be a QoI  that only depends on $X_k$. Let also $\bar{f}(X_k)$ be the centered QoI  with finite moment generating function (MGF), $\MEANNN{P}{e^{c\bar{f}(X_{k})}}$, in a neighborhood of the origin.
\smallskip

\noindent $(a)$ {\bf{Tightness.}} For the model sensitivity indices defined in \eqref{eq:PU general MFSI}, there exist $0< \eta_{\pm} \le \infty$, such that for any $\eta \le \eta_{\pm}$,
\begin{eqnarray}
\label{eq:MFSI general}
\;\;\;\;\;\;\;I^{\pm}(f(X_k), P; \mathcal{D}^{\eta_l}_l) &=&\underset{Q \in \mathcal{D}^{\eta_l}_l}{\mathrm{sup/inf}} \   \MEANNN{Q}{f(X_k)} -\MEANNN{P}{f(X_k)} \\
&=& 
\begin{cases} 
      \pm \MEANNN{P_{\rho_l^P}}{
\inf_{c>0}\Big[ \frac{1}{c} \log \MEANNN{P_{l|\pi_l^P}}{e^{\pm c\bar{F}}} +\frac{\eta_l}{c} \Big]} &, l \in \bar{\rho}_k^P\nonumber \\
       0 &, l \notin \bar{\rho}_k^P 
\end{cases}\notag\\
&=& \MEANNN{Q^\pm}{f(X_k)} -\MEANNN{P}{f(X_k)}\nonumber
\end{eqnarray}
where $\bar{F}$ is the centered function of $F$ defined in \eqref{eq:F}, $\eta_l\equiv \eta$
and $Q^\pm(\cdot) \equiv Q^\pm(\cdot\;;\pm c_\pm) \in \mathcal{D}_{l}^{\eta_l}$  are  Bayesian networks of the form \eqref{eq:PGM:def} that depend on $\eta_l$   with $f(X_k)$
and $c_{\pm}\equiv c_{\pm}(x_{\rho_l^P};\eta_l)$ being  functions of $x_{\rho_l^P}$, depend on $\eta_l$ and are determined by the equations
\begin{equation}
\label{eq:opt:MFSI:c}
    R(Q^{\pm}_{l|\pi_l^{Q^{\pm}}}\|P_{l|\pi_l^P})=\eta_l.
\end{equation}
\noindent $(b)$ {\bf{Graph Structure of $Q^{\pm}$.}} The optimal distributions  $Q^\pm$ are the probability measures with densities given by
\begin{equation}
\label{eq:opt:MFSI:1}q^\pm(x_i|x_{\pi_i^{Q^\pm}})=
\begin{cases}
        p( x_i|x_{\pi_i^P}) &, i \neq l \\
      \frac{e^{\pm c_\pm F(x_l,x_{\rho_l^P})}}{\MEANNN{P_{l|\pi_l^P}}{e^{\pm c_\pm F(X_l,x_{\rho_l^P})}}} p(x_l|x_{\pi_l^P}) &, i=l
   \end{cases}
   \end{equation} 
The structure of the first and second part of \eqref{eq:opt:MFSI:1} satisfy $\pi_i^{Q^\pm} \equiv \pi_i^P$ and $\pi_l^P \subset \pi_l^{Q^\pm} \subset \rho_l^P=\rho_l^{Q^\pm}$ respectively.
\end{theorem}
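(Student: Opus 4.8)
The plan is to reduce the infinite-dimensional optimization in \eqref{eq:PU general MFSI} over $\mathcal{D}_l^{\eta_l}$ to a family of one-dimensional Gibbs variational problems by first invoking Lemma~\ref{lem:UpperF}, which already isolates the dependence of the objective on the single free CPD $Q_{l|\pi_l^Q}$. For $l \notin \bar{\rho}_k^P$ the lemma gives $\MEANNN{Q}{f(X_k)} - \MEANNN{P}{f(X_k)} = 0$ for every admissible $Q$, so both the supremum and the infimum vanish and the second branch of \eqref{eq:MFSI general} is immediate; all the work is therefore in the case $l \in \bar{\rho}_k^P$.

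First I would rewrite the objective using \eqref{eq:MFSI dif} and the centered function $\bar{F} = F - \MEANNN{P_{l|\pi_l^P}}{F}$, so that the difference of expectations equals $\MEANNN{P_{\rho_l^P}}{\MEANNN{Q_{l|\pi_l^Q}}{\bar{F}}}$. The decisive structural point is that the outer average is taken against the fixed baseline marginal $P_{\rho_l^P}$, while the only free object is the conditional law of $X_l$, and the constraint $R(Q_{l|\pi_l^Q}\|P_{l|\pi_l^P}) \le \eta_l$ is imposed separately at each conditioning configuration $x_{\rho_l^P}$. Since both integrand and constraint decouple across such configurations, the sup/inf passes inside the outer expectation, and for each fixed $x_{\rho_l^P}$ the inner problem is exactly the Gibbs variational problem \eqref{eq:UQ variational form} with baseline $P_{l|\pi_l^P}$ and observable $\bar{F}(\cdot,x_{\rho_l^P})$. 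This yields
\[
\underset{Q_{l|\pi_l^Q}}{\mathrm{sup/inf}}\ \MEANNN{Q_{l|\pi_l^Q}}{\bar{F}} = \pm \inf_{c>0}\Big[\frac{1}{c}\log \MEANNN{P_{l|\pi_l^P}}{e^{\pm c\bar{F}}} + \frac{\eta_l}{c}\Big],
\]
with tilted optimizer $dQ^\pm_{l|\cdot}/dP_{l|\pi_l^P} \propto e^{\pm c_\pm F}$. Reinserting $\MEANNN{P_{\rho_l^P}}{\cdot}$ produces the first branch of \eqref{eq:MFSI general}, and multiplying the tilt by $p(x_l|x_{\pi_l^P})$ gives the optimal CPD \eqref{eq:opt:MFSI:1}.

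Next I would record the graph structure. Every vertex $i \neq l$ retains the baseline CPD by the definition of $\mathcal{D}_l^{\eta_l}$, so $\pi_i^{Q^\pm} = \pi_i^P$. For vertex $l$, both the tilt $e^{\pm c_\pm F(x_l,x_{\rho_l^P})}$ and the pointwise multiplier $c_\pm = c_\pm(x_{\rho_l^P};\eta_l)$ depend on $x_{\rho_l^P}$ through $F$ as in \eqref{eq:F}; hence the optimal CPD may depend on every ancestor of $l$ but on no other vertex, giving $\pi_l^P \subset \pi_l^{Q^\pm} \subset \rho_l^P = \rho_l^{Q^\pm}$, the lower inclusion coming from the retained factor $p(x_l|x_{\pi_l^P})$. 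For each ancestor configuration, $c_\pm(x_{\rho_l^P};\eta_l)$ is then fixed as the unique positive root of the active-constraint equation \eqref{eq:opt:MFSI:c}; the finite-MGF hypothesis makes $c \mapsto R(Q^\pm_{l|\cdot}\|P_{l|\pi_l^P})$ continuous and strictly increasing up to a threshold, furnishing the tightness range $0 < \eta_\pm \le \infty$ exactly as in the proof of Theorem~\ref{thm:MFUQ PGM}.

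The step I expect to be most delicate is justifying the pointwise optimization: I must show that exchanging the sup/inf with $\MEANNN{P_{\rho_l^P}}{\cdot}$ is legitimate, i.e. that a family of conditional laws chosen to be optimal at $P_{\rho_l^P}$-almost every $x_{\rho_l^P}$ assembles into a genuine admissible Bayesian network $Q^\pm \in \mathcal{D}_l^{\eta_l}$ with measurable multiplier $c_\pm(\cdot)$, and that no choice coupled across configurations can do strictly better. This rests on the separability of objective and constraint noted above, together with a measurable selection for the root $c_\pm(x_{\rho_l^P})$ of \eqref{eq:opt:MFSI:c}; the remaining verifications---differentiating the one-dimensional objective in $c$ and checking that the tilt meets the constraint with equality---are routine and mirror part $(a)$ of Theorem~\ref{thm:MFUQ PGM}.
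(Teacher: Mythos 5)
Your proposal is correct and follows essentially the same route as the paper: Lemma~\ref{lem:UpperF} to localize the objective at vertex $l$, then the upper bound $\sup \mathbb{E}_{P_{\rho_l^P}}[\cdot] \le \mathbb{E}_{P_{\rho_l^P}}[\sup(\cdot)]$ combined with the pointwise Gibbs variational principle (Lemma~\ref{lemma:bounds}), and tightness via the exponentially tilted conditional with an $x_{\rho_l^P}$-dependent multiplier $c_\pm$ solving \eqref{eq:opt:MFSI:c}. Your explicit flagging of the measurable-selection issue for $c_\pm(x_{\rho_l^P})$ is a point the paper passes over silently, but it does not change the argument.
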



\begin{proof} The proof of $(a)$ and $(b)$ are worked together and is split into two main steps.
\medskip

\noindent \textbf{Step 1: Model sensitivity indices:} For $l \in \bar{\rho}_k^P$, we denote  $\pi_l := \pi_l^Q \cup \pi_l^P$, and $\rho_i := \rho_i^Q \cup \rho_i^P$ for all $i$.  We define 
\begin{equation}
    Q(dx_l|x_{\pi_l}):= Q(dx_l|x_{\pi_l^Q}) \textrm{ for all } x_{\pi_l},\qquad    P(dx_l|x_{\pi_l}):= P(dx_l|x_{\pi_l^P}) \textrm{ for all } x_{\pi_l}
\end{equation}
We now use Lemma~\ref{lem:UpperF} and we further bound the right hand side of the first part of \eqref{eq:MFSI general} as follows:
  \begin{eqnarray}   
  \label{eq:thm2:1}
&&\;\;\;\;\;\underset{Q \in \mathcal{D}^{\eta_l}_l}{\mathrm{sup}} \   \MEANNN{P_{\rho_l}}{\MEANNN{Q_{l|\pi_l}}{F}- {\MEANNN{P_{l|\pi_l}}{F}}}
   \leq \MEANNN{P_{\rho_l}}{\underset{Q \in \mathcal{D}^{\eta_l}_l}{\mathrm{sup}}\ \MEANNN{Q_{l|\pi_l}}{F} - {\MEANNN{P_{l|\pi_l}}{F}}}\\
   &&\qquad= \MEANNN{P_{\rho_l}}{\underset{Q_l \in \mathcal{E}^{\eta_l}_l}{\mathrm{sup}}\ \MEANNN{Q_{l|\pi_l}}{F} - {\MEANNN{P_{l|\pi_l}}{F}}}\notag
   \end{eqnarray}
where $\mathcal{E}^{\eta_l}_l$ is the ambiguity set for CPDs at $l$ defined as
\begin{equation}
    \mathcal{E}_l^{\eta_l}:= \{\textrm{all CPD } Q_{l|\pi_l}: R(Q_{l|\pi_l}\|P_{l|\pi_l}) \leq \eta_l \textrm{ for all $x_{\pi_l}=x_{\pi_l}^P \cup x_{\pi_l}^Q$}\}
\end{equation}
By using Lemma \ref{lemma:bounds}, for any given $X_{\rho_l}=x_{\rho_l}$, we have
\begin{equation}
    \underset{Q_l \in \mathcal{E}^{\eta_l}_l}{\mathrm{sup}}\ \MEANNN{Q_{l|\pi_l}}{F} - {\MEANNN{P_{l|\pi_l}}{F}} \leq \inf_{c>0}\Big[ \frac{1}{c} \log \MEANNN{P_{l|\pi_l}}{e^{c\bar{F}(X_l,X_{\rho_l})}}+\frac{\eta_l}{c} \Big].
\end{equation}
Hence \eqref{eq:thm2:1} implies that
\begin{equation}
    \underset{Q \in \mathcal{D}^{\eta_l}_l}{\mathrm{sup}} \   \MEANNN{Q}{f(X_k)} -\MEANNN{P}{f(X_k)}  \leq \MEANNN{P_{\rho_l}}{
\inf_{c>0}\Big[ \frac{1}{c} \log \MEANNN{P_{l|\pi_l}}{e^{c\bar{F}(X_l,X_{\rho_l})}}+\frac{\eta_l}{c} \Big]} 
\end{equation}

\noindent
\textbf{Step 2: Tightness of the bounds:}
As in Theorem \ref{thm:MFUQ PGM}, for any given $x_{\rho_l^P}$, we can consider the conditional measure $P^{c_+}_{l|\rho_l^P}$ defined by
\begin{equation}
    \frac{dP^{c_+}_{l|\rho_l^P}}{dP_{l|\pi_l^P}} = \frac{e^{c_+(x_{\rho_l^P}) F(x_l,x_{\rho_l^P})}}{\MEANNN{P_{l|\pi_l^P}}{e^{c_+(x_{\rho_l^P}) F(X_l,x_{\rho_l^P})}}}
\end{equation}
where $c_{+}(x_{\rho_l^P})$ is a function of $x_{\rho_l^P}$ determined by $
R(P^{ c_{+}}_{l|\pi_l^P}\|P_{l|\pi_l^P})
=\eta_l$.
By using Lemma \ref{lemma:tightness}, we define
\begin{equation}
    q^+_l(x_l|x_{\pi_l^{Q^+}}):=P^{c_+}_{l|\rho_l^P} \propto e^{c_+(x_{\rho_l^P}) F(x_l,x_{\rho_l^P})} p(x_l|x_{\pi_l^P}) \quad \textrm{for all $x_{\pi_l^{Q^+}}$}\, .
\end{equation}
Note that $\pi_l^{Q^+}$ depends on $\pi_l^P$ and $F(x_l,x_{\rho_l^P})$, hence $\pi_l^P \subset \pi_l^{Q^+} \subset \rho_l^P$, and $\rho_l^{Q^+} = \rho_l^P$. Therefore, using the same notation as in Step 1, for $\pi_l =\pi_l^{Q^+}$, $\rho_l =\rho_l^{Q^+}$, we have
\begin{equation}
     \MEANNN{Q^+_{l|\pi_l}}{F} - {\MEANNN{P_{l|\pi_l}}{F}} = \inf_{c>0}\Big[ \frac{1}{c} \log \MEANNN{P_{l|\pi_l}}{e^{c\bar{F}}}+\frac{\eta_l}{c} \Big]\ .
\end{equation}
Furthermore,    $R(Q^+_{l|\pi_l}\|P_{l|\pi_l}) \leq \eta_l$ for all $x_{\pi_l}$ and hence  $ Q^+_l \in \mathcal{E}^{\eta_l}_l$. Let $q^+(x) = q^+_l(x_l|x_{\pi_l})\prod_{i \neq l}p(x_i|x_{\pi_i})$, then $Q^+ \in \mathcal{D}^{\eta_l}_l$, and $$\MEANNN{Q^+}{f(X_k)} - \MEANNN{P}{f(X_k)} =\MEANNN{P_{\rho_l}}{
\inf_{c>0}\Big[ \frac{1}{c} \log \MEANNN{P_{l|\pi_l}}{e^{c\bar{F}}}+\frac{\eta_l}{c} \Big]}, 
$$
and thus  \eqref{eq:MFSI general} is proved. The calculations  for $\underset{Q \in \mathcal{D}^{\eta_l}_l}{\mathrm{inf}} \   \MEANNN{Q}{f(X_k)} -\MEANNN{P}{f(X_k)}$ are similar.
 \end{proof}
 
 We turn next to the ambiguity set $\mathcal{D}^{\eta_l}_{l,P}$ defined in \eqref{eq:set:MFSI} and its corresponding index.
Due to Theorem~\ref{thm:MFSI general} and \eqref{eq:subset}, the following uncertainty bound holds for  $\mathcal{D}^{\eta_l}_{l,P}$:
\begin{eqnarray}
\label{ineq:MFSI variational}
 I^{+}(f(X_k), P; \mathcal{D}^{\eta_l}_{l,P}) &= &\underset{Q \in \mathcal{D}^{\eta_l}_{l,P}}{\mathrm{sup}}\  \MEANNN{Q}{f(X_k)} -\MEANNN{P}{f(X_k)}\nonumber\\
 & \leq &  \MEANNN{P_{\rho_l}}{
\inf_{c>0}\Big[ \frac{1}{c} \log \MEANNN{P_{l|\pi_l}}{e^{ c\bar{F}(X_l,X_{\rho_l})} }+\frac{\eta_l}{c} \Big]}
\end{eqnarray}
for any $l \in \bar{\rho}_{k}^P$, see also Figure~\ref{fig:readgraph}. A similar bound holds for $ I^{-}(f(X_k), P; \mathcal{D}^{\eta_l}_{l,P})$. However, the next theorem provides a  condition  on the Bayesian network $P$ that   implies    equality in  \eqref{ineq:MFSI variational}, see \eqref{eq:tightness cond} and  Fig~\ref{fig:lastmainthm}.
\begin{theorem}[Model Sensitivity Indices--only vary CPD]
\label{thm:MFSI}
Let $P$ be a Bayesian network with density defined as \eqref{eq:PGM:def}, and $f(X_k)$ be a QoI that only depends on $X_k$ with  its centered QoI $\bar{f}(X_k)$ having finite moment generating function (MGF), $\MEANNN{P}{e^{c\bar{f}(X_k)}}$, in a neighborhood of the origin. 
\smallskip

\noindent$(a)$ For $l \notin \bar{\rho}_k^P$, $ \MEANNN{Q}{f(X_k)} - \MEANNN{P}{f(X_k)}=0$, for any $Q\in \mathcal{D}^{\eta_l}_{l,P}$.
\medskip

\noindent$(b)$ For $l \in \bar{\rho}_k^P$  satisfying the condition
\begin{equation}
\label{eq:tightness cond}
    X_k \perp X_{\rho_l \setminus \pi_l} | X_{\pi_l},
\end{equation}
i.e.,  $X_k$ is independent of all the ancestors of $X_l$ given the parents of $X_l$,  there exist probability measures $Q^\pm = Q^\pm(\eta) \in \mathcal{D}^{\eta_l}_{l,P}$ given by \eqref{eq:opt:MFSI:c} - \eqref{eq:opt:MFSI:1} such that
\begin{eqnarray}
    \MEANNN{Q^\pm}{f(X_k)} -\MEANNN{P}{f(X_k)} &=& \underset{Q \in \mathcal{D}^{\eta_l}_{l,P}}{\mathrm{sup/inf}}\  \MEANNN{Q}{f(X_k)} -\MEANNN{P}{f(X_k)}
\end{eqnarray}
\item[$(c)$] For $l \in \bar{\rho}_k^P$  such that  \eqref{eq:tightness cond} is not satisfied, \eqref{ineq:MFSI variational} holds.
\begin{figure}[ht]
\centering
\includegraphics[width=1.2\textwidth]{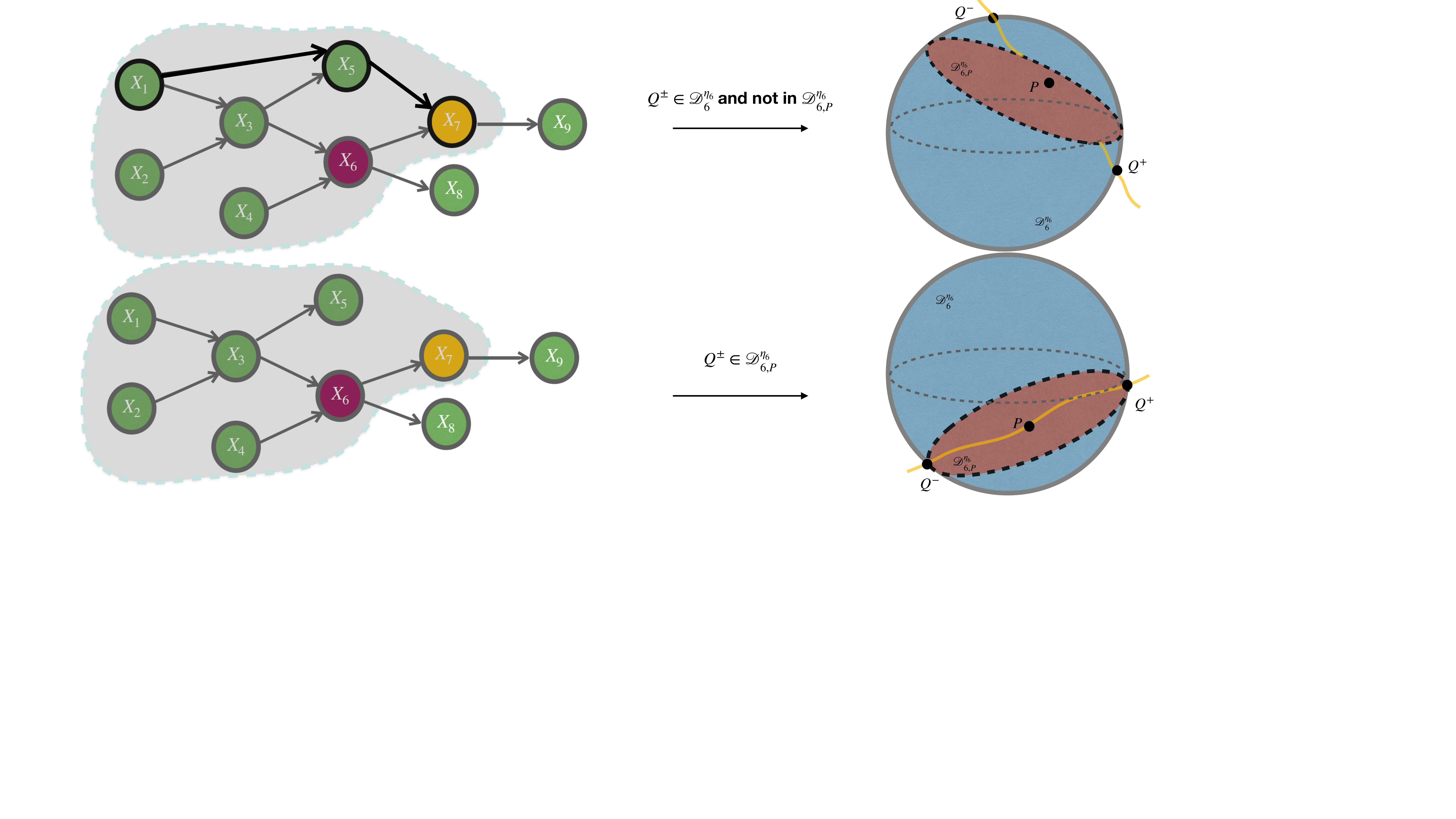}
\vspace{-4cm}
\caption{\small{{\bf(Left)} Two examples of the structure of a baseline Bayesian network. The QoI is $f(X_7)$ in yellow (thus k=7) and $l=6$ in purple. {\bf(Right)} Schematic of  relationships between the  ambiguity sets $\mathcal{D}^{\eta_6}_{6},\mathcal{D}^{\eta_6}_{6,P}$. They share the same boundary and thus we represent $\mathcal{D}^{\eta_6}_{6}$ as a sphere in blue, while $\mathcal{D}^{\eta_6}_{6,P}$ as an embedded disc in brown. The yellow curve  in the both figures demonstrates the parametric family of Bayesian Networks $P^c$ with $d P^c_{l|\pi_l}=dP_{l|\pi_l}$ for $l\neq6$ and $dP^c_{6|\pi_6}\propto \exp\{c F(x_6,x_{\rho_6^P})\}dP_{6|\pi_6}$. The {\bf top graph}  does not satisfy condition \eqref{eq:tightness cond} since  $X_1$ is not conditionally independent of $X_7$ given $X_{\pi_6}$. This is illustrated through the path $X_1 \to X_5 \to X_7$ in black. The function $F$ given by \eqref{eq:F} depends on $x_1$ and $x_6$ which makes the parents of $X_6$ in the optimizers $Q^{\pm}$ be different than its parents in $P$ and thus $Q^{\pm}\notin\mathcal{D}^{\eta_6}_{6,P}$ (in general) as illustrated in the top left picture. The {\bf bottom graph}  could achieve the equality in \eqref{eq:MFSI general}  since it satisfies  condition \eqref{eq:tightness cond} ($X_{\rho_6\setminus\pi_6} = \{X_1,X_2\}$ are connected with $X_7$ only through $X_3 \in X_{\pi_6}$). The function $F$ depends on $x_3$ and $x_6$ and  $\pi_6^Q = \{3, 4 \}=\pi_6$ which makes  $Q^{\pm}\in\mathcal{D}^{\eta_6}_{6,P}$, see bottom right picture. }}\label{fig:lastmainthm}
\end{figure}


\end{theorem}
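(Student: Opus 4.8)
The plan is to dispatch parts $(a)$ and $(c)$ by direct appeal to results already in hand, and to concentrate the real argument on part $(b)$. For $(a)$, since $\mathcal{D}^{\eta_l}_{l,P}\subset\mathcal{D}^{\eta_l}_{l}$ by \eqref{eq:subset}, every $Q\in\mathcal{D}^{\eta_l}_{l,P}$ is covered by Lemma~\ref{lem:UpperF}; when $l\notin\bar{\rho}_k^P$ the lemma immediately yields $\MEANNN{Q}{f(X_k)}-\MEANNN{P}{f(X_k)}=0$. For $(c)$, the same inclusion gives $\sup_{Q\in\mathcal{D}^{\eta_l}_{l,P}}\le\sup_{Q\in\mathcal{D}^{\eta_l}_{l}}$, and Theorem~\ref{thm:MFSI general} identifies the right-hand side with the bound appearing in \eqref{ineq:MFSI variational}; the infimum is handled symmetrically. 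Thus $(a)$ and $(c)$ require no new computation.

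For part $(b)$ I would argue that the optimizers produced in Theorem~\ref{thm:MFSI general} in fact already belong to the smaller set $\mathcal{D}^{\eta_l}_{l,P}$, which upgrades the inequality \eqref{ineq:MFSI variational} to an equality. Recall that the optimal CPD at $l$ from \eqref{eq:opt:MFSI:1} is $q^\pm(x_l\mid\cdot)\propto e^{\pm c_\pm F(x_l,x_{\rho_l^P})}\,p(x_l\mid x_{\pi_l^P})$ with $c_\pm$ solving \eqref{eq:opt:MFSI:c}, and in general its parent set can grow to $\pi_l^P\subset\pi_l^{Q^\pm}\subset\rho_l^P$ precisely because $F$ may depend on the non-parent ancestors. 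Hence the whole of $(b)$ reduces to the single claim that, under the conditional-independence hypothesis \eqref{eq:tightness cond}, the function $F(x_l,x_{\rho_l^P})$ depends on $x_{\rho_l^P}$ only through the parent coordinates $x_{\pi_l}$. Granting this, $c_\pm$ becomes a function of $x_{\pi_l}$ alone through \eqref{eq:opt:MFSI:c}, so the CPD \eqref{eq:opt:MFSI:1} depends only on $x_{\pi_l}$, giving $\pi_l^{Q^\pm}=\pi_l^P$ and therefore $Q^\pm\in\mathcal{D}^{\eta_l}_{l,P}$; since this $Q^\pm$ attains the upper bound of \eqref{ineq:MFSI variational}, equality follows, and the identification of the sign and of $Q^\pm$ is then copied verbatim from the proof of Theorem~\ref{thm:MFSI general}.

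The hard part is establishing this structural claim, and I would do it graph-theoretically. From the definition \eqref{eq:F}, $F$ is an integral of $\prod_i p(x_i\mid x_{\pi_i^P})$ over the intermediate vertices $\mathrm{Int}:=\bar{\rho}_k^P\setminus\bar{\rho}_l^P$, so $F$ can depend on a boundary coordinate $x_j$ with $j\in\bar{\rho}_l^P$ only if $j$ is a parent of some $i\in\mathrm{Int}$. It therefore suffices to show that no $j\in S:=\rho_l^P\setminus\pi_l$ is a parent of any $i\in\mathrm{Int}$. Suppose not: take such $j\in S$ and $i\in\mathrm{Int}$. Then concatenating the edge $j\to i$ with a directed path $i\to\cdots\to k$ (which exists because $i$ is an ancestor of $k$) produces a directed path from $j$ to $k$. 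Because $i\notin\rho_l^P$, no vertex on the segment from $i$ to $k$ can be an ancestor of $l$ (else $i$ would be one as well), while $j\in S$ and $k\notin\rho_l^P$; consequently the path meets $\pi_l$ nowhere. A directed path disjoint from the conditioning set is always active, contradicting $X_k\perp X_S\mid X_{\pi_l}$ in \eqref{eq:tightness cond}. Hence $F$ is independent of $x_S$, i.e. $F=F(x_l,x_{\pi_l})$, which closes the gap and completes part $(b)$.
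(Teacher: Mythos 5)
Your decomposition coincides with the paper's: parts $(a)$ and $(c)$ are dispatched through Lemma~\ref{lem:UpperF} and the inclusion \eqref{eq:subset} exactly as in the paper, and for part $(b)$ you, like the paper, reduce everything to the single claim that under \eqref{eq:tightness cond} the function $F$ of \eqref{eq:F} depends on $x_{\rho_l^P}$ only through the parent coordinates $x_{\pi_l}$, so that the optimizer \eqref{eq:opt:MFSI:c}--\eqref{eq:opt:MFSI:1} already produced in Theorem~\ref{thm:MFSI general} lies in $\mathcal{D}^{\eta_l}_{l,P}$ and upgrades \eqref{ineq:MFSI variational} to an equality. Where you go beyond the paper is that you actually argue this claim, whereas the paper simply asserts ``we have $F(x_l,x_{\rho_l^P})=F(x_l,x_{\pi_l})$.'' Your structural observation --- that $F$ can see a coordinate $x_j$ with $j\in\rho_l^P$ only if $j$ is a parent of some vertex in $\bar{\rho}_k^P\setminus\bar{\rho}_l^P$, and that such an edge would concatenate with a directed path to $k$ that avoids $\pi_l$ --- is correct and is the right way to make the step precise.

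The one soft spot is your concluding ``contradiction'': an active (d-connecting) path does not contradict a conditional independence statement. D-separation implies conditional independence, but d-connection does not imply dependence, so a non-faithful $P$ can satisfy \eqref{eq:tightness cond} as a distributional statement while still possessing an edge from $\rho_l^P\setminus\pi_l$ into $\bar{\rho}_k^P\setminus\bar{\rho}_l^P$ through which $F$ genuinely depends on a non-parent ancestor (e.g.\ a CPD at a descendant whose dependence on $x_j$ washes out only after marginalizing over an intermediate vertex). What your argument really establishes is the implication from the graphical version of the hypothesis (no path from $X_{\rho_l\setminus\pi_l}$ to $X_k$ avoiding $X_{\pi_l}$, which is what the paper's remark following the theorem describes) to $F=F(x_l,x_{\pi_l})$. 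This is not a defect relative to the paper, whose proof makes the identical leap silently; but to be airtight you should either read \eqref{eq:tightness cond} as the d-separation condition or invoke faithfulness when passing from the conditional independence to the absence of the offending edge.
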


\begin{proof}
Part $(a)$ and $(c)$ are straightforward consequences of Lemma~\ref{lem:UpperF} and \eqref{ineq:MFSI variational} respectively. The proof of Part $(b)$ is as follows: For $l\in  \bar{\rho}_k^P$ with $X_k \perp X_{\rho_l \setminus \pi_l} | X_{\pi_l}$, we have $F(x_l,x_{\rho_l^P})=F(x_l,x_{\pi_l})$, 
then the proof is the same as the proof in Theorem \ref{thm:MFSI general}. Indeed, let

\begin{equation}\label{inequality*}    q^{+}_l(x_l|x_{\pi_l^{Q^+}})=\frac{e^{\pm c_{+} F(x_l,x_{\rho_l^P})}}{\MEANNN{P_{l|\pi_l^P}}{e^{ c_{\pm} F(X_l,x_{\pi_l^P})}}} p(x_l|x_{\pi_l^P}) \quad \textrm{for all $x_{\pi_l^{Q^{+}}}$ and $\pi_l^{Q^+} =\pi_l^P$},
\end{equation} 
where $c_{+}\equiv c_{+}(x_{\pi_l^P};\eta_l)$ are functions of  $x_{\pi_l}$ (since $F$ only depends on $x_l$ and $x_{\pi_l}$), depend on $\eta_l$ and are determined by the equations $R(Q^{+}_{l|\pi_l^{Q^{+}}}\|P_{l|\pi_l^P})=\eta_l$. Therefore, the density of $Q^{+}$ are given by $q^{+}(x) = q^{+}_l(x_l|x_{\pi_l^{Q^+}})\prod_{i \neq l}p(x_i|x_{\pi_i})$.  Thus, $Q^{+}_l \in \mathcal{D}^{\eta_l}_{l,P}$ make \eqref{inequality*} equality. Therefore we can conclude that
\begin{equation}
    \underset{Q \in \mathcal{D}^{\eta_l}_{l,P}}{\mathrm{sup}} \   \MEANNN{Q}{f(X_k)} -\MEANNN{P}{f(X_k)}  = \MEANNN{P_{\rho_l}}{
\inf_{c>0}\Big[ \frac{1}{c} \log \MEANNN{P_{l|\pi_l}}{e^{c\bar{F}(X_l,X_{\rho_l})}}+\frac{\eta_l}{c} \Big]}.
\end{equation}
The case of $\underset{Q \in \mathcal{D}^{\eta_l}_l}{\mathrm{inf}} \   \MEANNN{Q}{f(X_k)} -\MEANNN{P}{f(X_k)}$ is treated similarly.  By Lemma~\ref{lem:UpperF}, for $l \notin \bar{\rho}_k^P$ and $Q\in \mathcal{D}^{\eta_l}_{l,P}$, $ \MEANNN{Q}{f(X_k)} - \MEANNN{P}{f(X_k)}=0$.
\end{proof}
\begin{remark}The condition $X_k \perp X_{\rho_l \setminus \pi_l} | X_{\pi_l}$ can be satisfied when $\rho_l \cap \rho_i \subset \pi_l$ for all $i \in \bar{\rho}_{k} \setminus\bar{\rho}_{l}$, i.e. any path from $X_{\rho_l \setminus \pi_l}$ to $X_k$ must go through $X_{\pi_l}$, for instance, all Markov chains, tree/polytree structure model, etc. Two simple examples where the assumption is satisfied or violated are shown in Figure~\ref{fig:lastmainthm}. This condition is also satisfied by the baseline Bayesian network discussed in Section~\ref{sec: ORR}.
\end{remark}

\begin{remark} Note that for the model sensitivity indices shown in \eqref{eq:MFSI general} in Theorem \ref{thm:MFSI general} or the uncertainty bounds shown in \eqref{ineq:MFSI variational} in Theorem \ref{thm:MFSI}, sometimes it might be practically difficult to find the infimum for every conditioning $\rho_l$. However, we can use an alternative looser bound by Jensen's inequality, i.e.
\begin{eqnarray}
 I^{+}(f(X_k), P; \mathcal{D}^{\eta_l}_{l,P}) &\leq&   \MEANNN{P_{\rho_l}}{
\inf_{c>0}\Big[ \frac{1}{c} \log \MEANNN{P_{l|\pi_l}}{e^{ c\bar{F}(X_l,X_{\rho_l})} }+\frac{\eta_l}{c} \Big]} \notag\\
&\leq& 
\inf_{c>0}\Big[ \MEANNN{P_{\rho_l}}{\frac{1}{c} \log \MEANNN{P_{l|\pi_l}}{e^{ c\bar{F}(X_l,X_{\rho_l})}} }+\frac{\eta_l}{c} \Big]
\end{eqnarray}
the model sensitivity index $ I^{-}(f(X_k), P; \mathcal{D}^{\eta_l}_{l,P})$ can be treated analogously. Moreover, the corresponding bounds for $I^{\pm}(f(X_k), P; \mathcal{D}^{\eta_l}_{l})$ are similar. Moreover, if $\rho_l^P=\emptyset$, then expectation $\MEANNN{P_{\rho_l^P}}{\cdot}$ does not enter in  the overall calculations, and hence 
\[
I^{\pm}(f(X_k), P; \mathcal{D}^{\eta_l}_l)=
\inf_{c>0}\Big[ \frac{1}{c} \log \MEANNN{P_{l|\pi_l^P}}{e^{\pm c\bar{F}}} +\frac{\eta_l}{c} \Big], \qquad l \in \bar{\rho}_k^P
\]
e.g for $l\in\{1,2,4\}$ and $k=7$ as illustrated in Figure~\ref{fig:pict4}. This is a special case, however it is used in the computation of the model sensitivity indices for the materials design problem in Section~\ref{sec: ORR}.
\end{remark}
\subsection{Gaussian Bayesian networks}
 Next, we develop model sensitivity indices $I^{\pm}(f(X_k), P; \mathcal{D}_l^{\eta_l})$, when $P$ is a Gaussian Bayesian Network satisfying \eqref{eq:GBayesian network dist}, and $f(X_k)$ depends on $X_k$ linearly. We first use Lemma~\ref{eq:F}, along with the fact that each model component is linear Gaussian of its parents, and compute $F$ and $\bar F$ explicitly. We show that $\bar{F}$ depends only on the $l$-th component and its parents $\pi_l^P$. Then, to implement Theorem~\ref{thm:MFSI general}, we  calculate the MGF of $\bar F$ with respect to $P_{l|\pi_l^P}$. We prove that  it no longer depends on $\pi_l^P$, due to cancellations between the terms involving  $\pi_l^P$. Thus, the expectation $\mathbb{E}_{P_{\rho_l^P}}$ does not enter  the overall computation of \eqref{eq:MFSI general}. Finally, we  prove that $Q^\pm \in \mathcal{D}_{l,P}^{\eta_l}$, i.e. $Q^\pm$ are Gaussian Bayesian Networks with the same structure as $P$, without requiring condition \eqref{eq:tightness cond} be satisfied, as explained in the proof of the theorem.

\begin{theorem}[Model Sensitivity indices for Gaussian Bayesian Networks]
\label{cor:Gaussian Bayesian network MFSI}
Let $P$ be a Gaussian Bayesian network satisfies \eqref{eq:GBayesian network dist}, and $f(X_k) = a X_k+b$ be a QoI only depends on $X_k$ linearly. Then,
\begin{itemize} 
\item[$(a)$]For the model sensitivity indices defined in \eqref{eq:PU general MFSI}, we have
\begin{equation}\label{eqforIplusminus}
    I^{\pm}(f(X_k), P; \mathcal{D}_l^{\eta_l}) \equiv  I^{\pm}(f(X_k), P; \mathcal{D}_{l,P}^{\eta_l}) 
\end{equation}
and the optimizer $Q^\pm = Q^\pm(\eta) \in \mathcal{D}_{l,P}^{\eta_l} \subset \mathcal{D}_l^{\eta_l}$ given by \eqref{eq:opt:MFSI:1} - \eqref{eq:opt:MFSI:c} are also Gaussian Bayesian networks with same graph structure as $P$. Furthermore, for $l \in \pi_k^P$ and $l \notin \rho_{\pi_j}^P$ for all $j \in \pi_k$, $j\neq l$, we have
\begin{equation}\label{GBayesian networkopt1}
I^{\pm}(f(X_k), P; \mathcal{D}_l^{\eta_l}) =
      \pm |\beta_{kl}|\sqrt{2a^2\sigma_l^2\eta_l} 
\end{equation}
\item[$(b)$] Moreover, for any $l \in \rho_k^P$, we also have
\begin{equation}\label{GBayesian networkopt2}
I^{\pm}(f(X_k), P; \mathcal{D}_l^{\eta_l}) =
      \pm |\tilde{\beta}_{kl}|\sqrt{2a^2\sigma_l^2\eta_l} 
\end{equation}
for a computable constant $\tilde{\beta}_{kl}$. 
\end{itemize}
\end{theorem}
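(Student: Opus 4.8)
The plan is to specialize the general formula \eqref{eq:MFSI general} of Theorem~\ref{thm:MFSI general} to the linear-Gaussian model \eqref{eq:GBayesian network dist} and to show that all dependence on $x_{\rho_l^P}$ collapses, reducing the sensitivity index to a single explicit Gaussian computation. First I would evaluate the function $F$ of \eqref{eq:F} via Lemma~\ref{lem:UpperF}. Since $f(X_k)=aX_k+b$ is linear and every vertex is a linear Gaussian of its parents, $F(x_l,x_{\rho_l^P})=a\,\MEANNN{P_{\{k\}|\bar{\rho}_l^P}}{X_k}+b$ is affine in $x_{\bar{\rho}_l^P}$. Expanding $X_k$ recursively in the mutually independent innovations $\epsilon_i=X_i-\beta_{i0}-\beta_i^T X_{\pi_i}$, conditioning on $X_{\bar{\rho}_l^P}$ fixes exactly the innovations $\{\epsilon_i:i\in\bar{\rho}_l^P\}$ while all downstream innovations average to zero. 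Hence $\MEANNN{P_{\{k\}|\bar{\rho}_l^P}}{X_k}$ equals a constant plus $\sum_{i\in\bar{\rho}_l^P}\tilde{\beta}_{ki}\,\epsilon_i$, where $\tilde{\beta}_{ki}$ is the total directed-path coefficient from $i$ to $k$ (equivalently the $(k,i)$ entry of $(I-B)^{-1}$ for the structural coefficient matrix $B$ with $B_{ij}=\beta_{ij}$).

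The crucial step is the centering. Because $\MEANNN{P_{l|\pi_l^P}}{\epsilon_l}=0$ and none of the innovations $\epsilon_i$ with $i\in\rho_l^P$ depends on $x_l$, the centered function $\bar{F}=F-\MEANNN{P_{l|\pi_l^P}}{F}$ retains only the single innovation carrying the $x_l$-dependence,
\[
\bar{F}(x_l,x_{\pi_l^P})=a\,\tilde{\beta}_{kl}\,\epsilon_l=a\,\tilde{\beta}_{kl}\,(x_l-\beta_{l0}-\beta_l^T x_{\pi_l^P}),
\]
so $\bar{F}$ depends on $x_l$ and the parents $x_{\pi_l^P}$ only, and all the ancestor dependence $x_{\rho_l^P\setminus\pi_l^P}$ cancels. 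I would then compute the conditional MGF against $P_{l|\pi_l^P}$ using $\epsilon_l\sim\mathcal{N}(0,\sigma_l^2)$, obtaining $\MEANNN{P_{l|\pi_l^P}}{e^{\pm c\bar{F}}}=\exp(\tfrac12 c^2 a^2\tilde{\beta}_{kl}^2\sigma_l^2)$, which is independent of $x_{\pi_l^P}$, hence of $x_{\rho_l^P}$. Consequently the inner infimum in \eqref{eq:MFSI general} is a constant in $x_{\rho_l^P}$ and the outer expectation $\MEANNN{P_{\rho_l^P}}{\cdot}$ becomes vacuous. The remaining one-dimensional problem $\inf_{c>0}[\tfrac12 c\,a^2\tilde{\beta}_{kl}^2\sigma_l^2+\eta_l/c]$ is solved at $c_\pm=\sqrt{2\eta_l/(a^2\tilde{\beta}_{kl}^2\sigma_l^2)}$ and yields $|\tilde{\beta}_{kl}|\sqrt{2a^2\sigma_l^2\eta_l}$, which proves \eqref{GBayesian networkopt2} for every $l\in\rho_k^P$. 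The formula \eqref{GBayesian networkopt1} of $(a)$ then follows by specialization: when $l\in\pi_k^P$ and $l$ is not an ancestor of any other parent of $k$, the only directed path from $l$ to $k$ is the edge $l\to k$, so $\tilde{\beta}_{kl}=\beta_{kl}$.

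Finally I would verify the structural claim and the identity \eqref{eqforIplusminus}. Inserting $F$ into the optimizer \eqref{eq:opt:MFSI:1} of Theorem~\ref{thm:MFSI general}, every factor depending on $x_{\rho_l^P}$ but not on $x_l$ appears in both numerator and denominator and cancels; moreover the KL constraint \eqref{eq:opt:MFSI:c} reduces to $\tfrac12 c^2 a^2\tilde{\beta}_{kl}^2\sigma_l^2=\eta_l$, so $c_\pm$ is the constant found above rather than a genuine function of $x_{\rho_l^P}$. Completing the square in the exponent then gives
\[
q^\pm(x_l|x_{\pi_l^{Q^\pm}})=\mathcal{N}\big(x_l;\ \beta_{l0}+\beta_l^T x_{\pi_l^P}\pm c_\pm a\,\tilde{\beta}_{kl}\,\sigma_l^2,\ \sigma_l^2\big),
\]
which is again a linear-Gaussian CPD with unchanged parents $\pi_l^{Q^\pm}=\pi_l^P$. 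Thus $Q^\pm$ is a Gaussian Bayesian network with the same graph as $P$ and in fact $Q^\pm\in\mathcal{D}_{l,P}^{\eta_l}\subset\mathcal{D}_l^{\eta_l}$; since the optimizer over the larger set already lies in the smaller one, the two indices coincide and \eqref{eqforIplusminus} holds. I expect the main obstacle to be exactly this cancellation: one must argue carefully that centering against $P_{l|\pi_l^P}$ annihilates \emph{all} of the $x_{\rho_l^P\setminus\pi_l^P}$ dependence of $F$, which is what keeps the optimal parent set equal to $\pi_l^P$ and yields \eqref{eqforIplusminus} \emph{without} invoking the conditional-independence condition \eqref{eq:tightness cond} needed in the general Theorem~\ref{thm:MFSI}.
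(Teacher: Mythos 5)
Your proposal is correct and follows essentially the same route as the paper's proof: compute $F$ explicitly as an affine function of $x_{\bar{\rho}_l^P}$, observe that centering against $P_{l|\pi_l^P}$ leaves only $\bar{F}=a\tilde{\beta}_{kl}(X_l-\beta_{l0}-\beta_l^T X_{\pi_l})$, note that the resulting Gaussian MGF $e^{c^2a^2\tilde{\beta}_{kl}^2\sigma_l^2/2}$ is constant so the outer expectation is vacuous, and conclude the structural claim from the cancellation of the $x_{\pi_l^P}$-dependent factors in the optimizer's numerator and denominator. You supply somewhat more justification than the paper (the innovations decomposition explaining why all ancestor dependence cancels, and the path-coefficient identification $\tilde{\beta}_{kl}=\beta_{kl}$ under the hypothesis of \eqref{GBayesian networkopt1}), but the argument is the same one the authors give, with the structural step deferred there to the proof of Theorem~\ref{cor:GBayesian network MFUQ}.
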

\begin{proof}
Let $f(X_k)=aX_k+b$ and $l\in\bar{\rho}_k$, then by a straightforward calculation of $F$ given by \eqref{eq:F} can be expressed as  
\begin{eqnarray}\label{FforGBNs}
F(X_l,X_{\rho_l})&=&a\tilde{\beta}_{k_0}+a\sum_{j\in\rho_l}\tilde{\beta}_{kj}X_j+a\tilde{\beta}_{kl}X_l +b
\end{eqnarray}
for some computable $\tilde{\beta}_{k0}, \tilde{\beta}_{kj}$ with $j\in\rho_l$ (see  Example~\ref{ex:contBeta}  where we compute $\beta_{kl}$ and $\tilde{\beta}_{kl}$). Furthermore, by using \eqref{eq:GBayesian network dist}, the centered $F$ denoted by $\bar F$
\begin{equation}\label{FforGBNs2}
\bar F(X_l,X_{\pi_l})=\tilde{\beta}_{kl}a(X_l-\beta_{l0}-\beta_l^T X_{\pi_l})
\end{equation}
and thus the MGF of $\bar F$ with respect to $P_{l|\pi_l}$ in the second equality of \eqref{eq:MFSI general} is 
\begin{eqnarray}\label{FforGBNs1}
    \MEANNN{P_{l|\pi_l}}{e^{\pm c\bar F(X_l,X_{\pi_l})}}&=&\int_{\mathcal{X}_l}e^{\pm c_{\pm}\tilde{\beta}_{kl}ax_l}e^{\mp c_{\pm}\tilde{\beta}_{kl}a(\beta_{l0}+\beta_l^T x_{\pi_l})}dx_l\\
    &=&e^{\pm c\tilde{\beta}_{kl}a(\beta_{l0}+\beta_l^T x_{\pi_l})+c^2\tilde{\beta}_{kl}^2a^2\frac{\sigma_{l}^2}{2}}e^{\mp c\tilde{\beta}_{kl}a(\beta_{l0}+\beta_l^T x_{\pi_l})}\\
    &=&e^{c^2\tilde{\beta}_{kl}^2a^2\frac{\sigma_{l}^2}{2}}\nonumber
\end{eqnarray}
We compute the minimization problem of \eqref{eq:MFSI general} by following the steps given in the proof of Theorem~\ref{cor:GBayesian network MFUQ}.

Regarding the structure of $Q^\pm$,  $Q^\pm \in \mathcal{D}^{\eta_l}_{l,P}$, i.e. the graph of $Q^\pm$ is same as $P$, as proved in Theorem~\ref{cor:GBayesian network MFUQ} (see also Example~\ref{ex:cont}) where we showed that due to cancellations  that may occur in the derivation of CPDs $q^{\pm}$ the graph remains the same.
\end{proof}

\section{Stress tests, Ranking and Correctability}
\label{sec: correctability}

Based on the model   sensitivity  indices discussed in  Section~\ref{sec:QUSI} we build an iterative approach that ranks the Bayesian network components of the baseline  $P$ according to their model sensitivity indices and subsequently improve its predictive ability for specific QoIs. 
The model misspecification $\eta_l$  of the ambiguity sets  can be either set up by the user e.g. when the data for  component $l$ are very sparse or absent, or can be estimated from  data,  building a data-informed ambiguity set. Once $\eta_l$'s are specified, we rank the sensitivity indices  $I^\pm(f(X_k),P;\mathcal{D}^{\eta_l}_l)$  for all   vertices $l$ based on their  relative size. 
Here the largest indices correspond to the  most ``sensitive" CPDs in the sense that they have the largest effect on the uncertainty of the QoI.    From a Machine Learning perspective, such a ranking procedure is a form of {\it interprability}, i.e. the ability to identify cause and effect in a model \cite{DoshiVelez2017TowardsAR,MILLER20191,Burges2016ABC} and {\it explainability}, i.e. the ability to explain model outputs based on modeling and data choices made during the learning of the baseline \cite{8466590}.  

Once the ranking is completed,  we turn to    correcting the most influential components  of a baseline Bayesian network, a task also referred to as {\it correctability} in Machine Learning; namely  the ability to correct predictive
errors without introducing or (tightly) controlling any newly created errors (see Theorem~\ref{THM:GBNsCor} for Gaussian Bayesian Networks) \cite{8466590, Gunning_Aha_2019,Burges2016ABC}. To this end we need to assess the impact of limited data, seek additional data targeting specific model components,
 or update some of the CPDs or the graph  of the baseline Bayesian $\{G, P\}$. All these elements can be organized in  a 4-step  strategy discussed next, while they are implemented in an example in materials design for fuel cells in Section~\ref{sec: ORR}. 
 \smallskip
 
\noindent  {\bf Notation.} We remind that $P_{l|\pi_l}$ is  the conditional distribution of $X_l$ with the given parents values $X_{\pi_l}=x_{\pi_l}$. However, we write $P_{l|X_{\pi_l^P}}$   when $X_{\pi_l}$ is still random variable and  $P_{l|X_{\pi_l}=x_{\pi_l}}$ when we simply emphasize the dependence on  given parents, see  Step 1 below and the KL chain rule in Appendix~\ref{subsec:eta in PGM}. Finally,
for each vertex $l\in V$ we use the notation $\pi_l:=\pi_l^Q \cup \pi_l^P$ when we consider simultaneously  the parents for both models.

\smallskip
\noindent
{\bf Step 1: Stress tests and model sensitivity.} In this step, we determine the level of model misspecification  $\eta_l$ for each component $l \in V$ of the baseline  using  data-informed or  user-determined stress tests.  In particular:

 \noindent{\it A. Data-informed stress tests.}  
 For Bayesian networks  (or parts thereof) for which there is a reasonable amount of data here
 we construct data-informed ambiguity sets  \eqref{eq:set:MFUQ}, \eqref{eq:set:MFSI general} and \eqref{eq:set:MFSI} respectively. %
 The corresponding levels of model misspecification $\eta, \eta_l$ are computed  as  distances between the baseline $P$ and the  data distribution $Q$; the latter can be selected   as a histogram or a Kernel Density Estimation (KDE).  
 In that sense,  we provide  surrogate values for the model misspecifications $\eta$ or  $\eta_l$ taking into account   the  ``real" model which is  accessible  only through the available data.
 In these calculations we are 
 taking full advantage of the graph structure of the models. 
 First, we discuss  the model uncertainty ambiguity set $\mathcal{D}^{\eta}$ in 
 \eqref{eq:set:MFUQ}. Using  
 the chain rule of KL divergence for Bayesian networks (Appendix~\ref{subsec:eta in PGM}) we define  a data-informed misspecification $\eta$ as 
\begin{equation}\label{eq:eta-all}
\eta:=R(Q\|P)=\sum_{l=1}^n \MEANNN{Q}{\eta_l^{\pi_l}},  \quad Q\in\mathcal{D}^{\eta}
\end{equation}
where $\eta_l^{\pi_l}$ is a function  of $X_{\pi_l}$ given by
\begin{equation}\label{eq:eta-all1}
\eta_l^{\pi_l} 
=  \MEANNN{Q}{R(Q_{l|X_{\pi_l^Q}}\|P_{l|X_{\pi_l^P}})}=
\int_{\mathcal{X}_l} \log \frac{Q_{l|X_{\pi_l^Q}}}{P_{l|X_{\pi_l^P}}}Q_{l|X_{\pi_l^Q}}dx_l \, .
\end{equation}
Second,  for the case of model sensitivity,  definition \eqref{eq:eta-all} reduces to 
\begin{equation}\label{eq:eta_vertex}
  \eta_l=R(Q\|P)=\MEANNN{Q}{\eta_l^{\pi_l}},\quad Q\in\mathcal{Q}_{\eta_l} 
\end{equation}
where $\mathcal{Q}_{\eta_l}$ is given by  \eqref{eq:set:MFSI general} or \eqref{eq:set:MFSI}; to obtain this simplification of \eqref{eq:eta-all} we  used  the structure of the ambiguity sets $\mathcal{Q}_{\eta_l}$ where all CPDs are identical except for the one on the $l$-th vertex. 

We now turn to the estimation of  \eqref{eq:eta-all},  \eqref{eq:eta_vertex}. 
We note that due to the graphical structure of Bayesian networks their  estimation reduces to focusing on individual model components. Related recent ideas using   subadditivity
for divergences or probability metrics of PGMs, instead of a full chain rule, were explored for statistical learning in \cite{Daskalakis:PGM:2021GANsWC}; such an approach could be also used here in an uncertainty quantification context.
Lastly, we can  simplify the estimation of  \eqref{eq:eta-all} or \eqref{eq:eta_vertex} by using an upper bound,
$
\eta_l \le \sup_{x_{\pi_l}}R(Q_{l|X_{\pi_l}=x_{\pi_l}}\|P_{l|X_{\pi_l}=x_{\pi_l}})\, .
$
Under certain conditions we can also show that using KDE gives rise to  consistent statistical estimator, see \eqref{GBayesian networkstress1}-\eqref{eq:eta_i_Pa} for a Gaussian Bayesian network baseline.
%
Finally, we note that significant literature  on statistical estimators
 for divergences includes  non-parametric 
 estimators  \cite{8007086},  statistical estimators  based on variational representations of divergences  \cite{Nguyen_Full_2010,pmlr-v80-belghazi18a},   density-estimator based methods for estimating divergences in low-dimensions  \cite{10.5555/2969239.2969284},   estimators of divergence based on nearest-neighbor distances \cite{4839047,4035959, NIPS2008_ccb09896} and statistical estimators for R\'enyi Divergences  \cite{birrell2021variational}.

 \noindent{\it B. User-determined stress tests.} 
%
Here we  use $\eta_l \ge 0 $ as a parameter to be tuned by hand to  explore how different  levels of uncertainty will affect the QoI; 
for instance when   we have  very sparse or missing data and  $\eta_l$'s are  set  by a user.  
This is a form of non-parametric sensitivity analysis and is reminiscent in spirit of the stress tests used in finance and actuarial science, e.g.  \cite{BlaLamTangYuan2019} to protect against sudden changes and extreme uncertainty under
various scenarios. 
%
In our Bayesian network context,  individual model misspecification $\eta_l$, $l\in V$ for the model sensitivity indices $I^\pm(f(X_k),P;\mathcal{Q}_{\eta_l})$  can take arbitrary fixed values that correspond to model perturbations associated with local sensitivity analysis (small  $\eta_l$) or global sensitivity analysis (larger $\eta_l$). Both local and global sensitivity analyses are conducted in the same mathematical
framework, therefore we have the flexibility to explore combinations of small/large model perturbations at different vertices of the Bayesian network. 
From a practical point of view, these sensitivity computations can be done using only one fixed constructed Bayesian network (the baseline), yielding guarantees for entire neighborhoods of models. 

 \smallskip
\noindent
{\bf Step 2: Ranking of model sensitivities.}  Once  $\eta_l$'s are  specified in Step 1 for  each vertex $l$,
we calculate the model   sensitivity indices $I^\pm(f(X_k),P;\mathcal{Q}_{\eta_l})$ using Theorem~\ref{thm:MFSI general} and \ref{thm:MFSI}, where $\mathcal{Q}_{\eta_l}=\mathcal{D}^{\eta_l}_l$ or $\mathcal{D}^{\eta_l}_{l,P}$ are defined in \eqref{eq:set:MFSI general} and \eqref{eq:set:MFSI}. Subsequently  we  rank them according to their relative contributions  
\begin{equation}
\label{eq:ranking}
 \frac{I^+(f(X_k), P; \mathcal{Q}_{\eta_l})}{\sum_j I^+(f(X_k), P; \mathcal{Q}_{\eta_j})}.
\end{equation}
See also the example in   Figure~\ref{fig:Pie}.

\smallskip
\noindent
{\bf Step 3: Assessing the baseline.} After we have ranked the model sensitivities  in Step 2, we focus on the most impactful model components and assess their impact on the QoI $\MEANNN{P}{f(X_k)}$. Specifically, if the relative model uncertainty is less that  an application-dependent  tolerance $TOL$, 
\begin{equation}\label{eq:TOL:1}
    I^+(f(X_k),P;\mathcal{Q}_{\eta_{l}}) \leq TOL \, ,  
\end{equation}
then we decide to ``trust" the model component $l$. If there are model components that do not satisfy \eqref{eq:TOL:1}, we proceed to the next step in order to correct the baseline model $P$. This is a form of interpretability, since we can systematically identify under-performing parts of the model. A related quantity that can  also be used  in  \eqref{eq:TOL:1} is the relative model sensitivity 
\begin{equation}\label{eq:TOL:2}
  \frac{I^+(f(X_k),P;\mathcal{Q}_{\eta_{l}})}{\MEANNN{P}{f(X_k)}}   \, ,  
\end{equation}
see for example
Figure~\ref{fig:bimetallics}.


\smallskip
\noindent
{\bf Step 4: Model correctability.} Once  Steps 2 and 3 are completed,  we turn to  correcting the most influential components  of the  baseline Bayesian network $P$, a task also referred to as correctability in Machine Learning.  We formulate mathematically this procedure in Section~\ref{Sec:GBN:Correctability}, however practically
we aim at reducing the index $I^+(f(X_k),P;\mathcal{D}^{\eta_{l}}_{l})$ for each vertex $l \in V$ that violates \eqref{eq:TOL:1}. This can be  accomplished, for instance, by either acquiring additional data or updating the CPD of these specific vertices.
However, as we correct these targeted model components of the baseline, we also need to guarantee that we do not introduce new, bigger  errors in the remaining components of the Bayesian network that would violate \eqref{eq:TOL:1}. Section~\ref{Sec:GBN:Correctability} provides both theory and related practical implementation strategies to this end.

\section{Mathematical analysis of correctability in Bayesian networks}\label{Sec:GBN:Correctability}
In this section, we focus on the mathematical formulation of correctability in  Bayesian Networks  outlined in Step 4 of Section~\ref{sec: correctability}. Our methods are motivated by  ``correcting" a baseline model by either  acquiring targeted high quality data, or updating the CPDs  of the most under-performing components (see Step 3 of Section~\ref{sec: correctability}), or  correcting the graph $G$ itself. We demonstrate these scenarios,  their combinations and our mathematical methods on a materials screening problem for fuel cells in Section~\ref{sec: ORR}.

The intuition behind our correctability  analysis lies in the model sensitivity results for the Gaussian case. By Theorem~\ref{cor:Gaussian Bayesian network MFSI}, the model sensitivity indices of a baseline $P$ for a targeted  $l^*$-th CPD component are given by
\begin{equation}\label{eq:correct:GBN1}
    I^{\pm}(f(X_k), P; \mathcal{D}_{l^*,P}^{\eta_{l^*}}) =
      \pm |\tilde{\beta}_{kl^*}|\sqrt{2\sigma_{l^*}^2\eta_{l^*}} \, .
\end{equation}
Therefore,  additional/better data  or an improved CPD for the $l^*$-th vertex  could   allow us to build a new $l^*$-th CPD with  a corresponding new Gaussian Bayesian model $\tilde{P}$ that is otherwise identical to  $P$. Indeed, if we could guarantee a combination of 
\[
 \tilde{\sigma}_{l^*}^2< \sigma_{l^*}^2\;\;\textrm{and/or}\;\;\tilde{\eta}_{l^*}<\eta_{l^*} 
 \]
 for the new model $\tilde P$ then we can quantify the improvement of the baseline   $P$  using \eqref{eq:correct:GBN1} and show that the indices of $\tilde P$ at $\l^*$  would decrease.
 
In general, we seek to correct the targeted $l^*$-th vertex of the baseline $P$ to obtain a new Bayesian network $\tilde P$ 
such that 
\begin{equation}\label{eq:correct:noworse}
    I^{\pm}(f(X_k), \tilde{P}; \mathcal{D}^{\eta_l}_l)\le
    I^{\pm}(f(X_k), P; \mathcal{D}^{\eta_l}_l) ,\qquad \textrm{all }\; l\neq l^*
\end{equation}
and 
\begin{equation}\label{eq:correct:better}
    I^{\pm}(f(X_k), \tilde{P}; \mathcal{D}_{l^*,P}^{\eta_{l^*}}) <
      I^{\pm}(f(X_k), P; \mathcal{D}_{l^*,P}^{\eta_{l^*}})\, .
\end{equation}
In particular, \eqref{eq:correct:noworse} and \eqref{eq:correct:better} would imply that we can improve  the  CPD of the  $l^*$ vertex and  at the same time we do not decrease the performance of the rest of the Bayesian network.
 The next theorem demonstrates that we can achieve \eqref{eq:correct:noworse} when $P$ is a Gaussian Bayesian network satisfying \eqref{eq:GBayesian network dist}. Moreover, when $P$ is a general  Bayesian network, we prove that new errors that may violate \eqref{eq:correct:noworse} can only be created in the descendant components of $l^*$, see also Remark~\ref{rem:correct:pract}.

  \begin{theorem} 
\label{THM:GBNsCor}
(a) {\bf (Gaussian Bayesian Network)} Consider  $f(X_k) = a X_k+b$ to be a QoI that only depends on $X_k$ linearly. Let also $P$ be a Gaussian Bayesian network satisfies \eqref{eq:GBayesian network dist}. Suppose now that we construct a new Bayesian Network $\tilde{P}$ by only updating the CPD $p(x_{l^*}|x_{\pi_{l^*}})$ for some $l^*\in\rho_k$ as follows: we change the distribution of $\epsilon_{l^*}$ in \eqref{eq:GBayesian network dist} from Gaussian to another mean zero distribution denoted by $\tilde{p}(x_{l^*}|x_{\pi_{l^*}})$. Note that the graph structure of $\tilde{P}$ is the same as $P$.  Then, 
\begin{equation}
    I^{\pm}(f(X_k), \tilde{P}; \mathcal{D}^{\eta_l}_l)=
    I^{\pm}(f(X_k), P; \mathcal{D}^{\eta_l}_l) ,\qquad \textrm{for all }\; l\neq l^*
\end{equation}
where  $I^{\pm}(f(X_k), P; \mathcal{D}^{\eta_l}_l) $ is given by \eqref{GBayesian networkopt1}-\eqref{GBayesian networkopt2}. Moreover, for the relative model sensitivity \eqref{eq:TOL:2} the following holds:
\begin{equation}\label{eq:relative}
    \frac{I^{\pm}(f(X_k), \tilde{P}; \mathcal{D}^{\eta_l}_l)}{\MEANNN{\tilde{P}}{f}}=
    \frac{I^{\pm}(f(X_k), P; \mathcal{D}^{\eta_l}_l)}{{\MEANNN{P}{f}}} ,\qquad \textrm{for all }\; l\neq l^*
\end{equation}

\smallskip

\noindent
(b) {\bf (Non Gaussian Bayesian Network)} Let  $f(X_k)$ be a QoI that only depends on $X_k$. Let also $P$ be a non Gaussian Bayesian network. Let us suppose that we construct a new Bayesian Network $\tilde{P}$ with the same structure as $P$ by only updating the CPD $p(x_{l^*}|x_{\pi_{l^*}})$ for some $l^*\in\rho_k^P$.  Then,
\begin{equation}
    I^{\pm}(f(X_k), \tilde{P}; \mathcal{D}^{\eta_l}_{l,P})=
    I^{\pm}(f(X_k), P; \mathcal{D}^{\eta_l}_{l,P}),\qquad \textrm{all }\; l\in\rho_k^P \textrm{ with } l<l^*
\end{equation}
while the model sensitivity indices for any $l\in\rho_{k}^P$ with $l\geq l^*$ (descendant components) change and are given by Theorem~\ref{thm:MFSI}.
  \end{theorem}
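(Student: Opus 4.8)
The plan is to handle the two parts separately, in each case reducing the indices to the explicit ingredients produced earlier and then tracking which of those ingredients are touched when the single CPD at $l^*$ is replaced.

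For part $(a)$ I would start from the closed form of the Gaussian sensitivity index, \eqref{GBayesian networkopt1}--\eqref{GBayesian networkopt2} in Theorem~\ref{cor:Gaussian Bayesian network MFSI}, namely $I^{\pm}(f(X_k),P;\mathcal{D}^{\eta_l}_l)=\pm|\tilde{\beta}_{kl}|\sqrt{2a^2\sigma_l^2\eta_l}$. The key point is that this value depends on the baseline only through the structural regression coefficient $\tilde{\beta}_{kl}$ and the conditional noise variance $\sigma_l^2$ at the vertex $l$ being stressed; the distribution of the ancestors $X_{\rho_l}$ never enters, because in the Gaussian case the relevant moment generating function $\MEANNN{P_{l|\pi_l^P}}{e^{\pm c\bar{F}}}=e^{c^2\tilde{\beta}_{kl}^2a^2\sigma_l^2/2}$ is free of $x_{\pi_l}$. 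Replacing the Gaussian $\epsilon_{l^*}$ in \eqref{eq:GBayesian network dist} by an arbitrary mean-zero $\tilde{\epsilon}_{l^*}$ leaves every $\beta$-coefficient and every $\sigma_l^2$ with $l\neq l^*$ untouched, and---crucially---because $\tilde{\epsilon}_{l^*}$ is still mean zero it preserves the linear conditional-mean propagation, so the reduced QoI stays $\bar{F}=a\tilde{\beta}_{kl}(X_l-\beta_{l0}-\beta_l^TX_{\pi_l})=a\tilde{\beta}_{kl}\epsilon_l$ with $\epsilon_l$ the unchanged Gaussian noise at $l\neq l^*$. Hence the computation of Theorem~\ref{cor:Gaussian Bayesian network MFSI} goes through verbatim for $\tilde{P}$ and returns the same number, giving the first equality. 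For the relative sensitivity \eqref{eq:relative} it then remains to note that $\MEANNN{P}{f}=b+a\MEANNN{P}{X_k}$ is an affine function of the noise means, all of which are zero for both $P$ and $\tilde{P}$, so $\MEANNN{\tilde{P}}{f}=\MEANNN{P}{f}$ and the denominators agree.

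For part $(b)$ I would work from Lemma~\ref{lem:UpperF}, which expresses the index (and the bound in \eqref{ineq:MFSI variational}) through exactly three objects: the inner CPD $P_{l|\pi_l}$, the reduced quantity $F(x_l,x_{\rho_l^P})=\MEANNN{P_{\{k\}|\bar{\rho}_l^P}}{f(X_k)}$ from \eqref{eq:F}, and the outer averaging measure $P_{\rho_l^P}$. The strategy is to decide, for the corrected network $\tilde{P}$, which of the three inherit the change at $l^*$. Since $l\neq l^*$ the inner CPD $P_{l|\pi_l}$ is untouched. The marginal $P_{\rho_l^P}$ is assembled only from the CPDs of the ancestors of $l$, hence it is unchanged precisely when $l^*\notin\rho_l^P$, i.e. when $l$ is not a descendant of $l^*$; by the topological ordering this is exactly the regime $l<l^*$, while it is changed on the complementary descendant regime $l\geq l^*$. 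For the descendant regime one then simply re-evaluates the defining formula of Theorem~\ref{thm:MFSI} with the new averaging measure $\tilde{P}_{\rho_l^P}$, which produces the asserted (generally different) value.

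The decisive step---and the one I expect to be the main obstacle---is the equality for $l<l^*$. Here $P_{\rho_l^P}$ and $P_{l|\pi_l}$ are already seen to be unchanged, so the whole matter collapses to showing that the reduced QoI $F$ is unchanged as well; once this is in hand the inner Gibbs-type optimization $\inf_{c>0}\big[\frac{1}{c}\log\MEANNN{P_{l|\pi_l}}{e^{\pm c\bar{F}}}+\frac{\eta_l}{c}\big]$ and the outer expectation are literally identical for $P$ and $\tilde{P}$. The difficulty is that $F$ integrates out the CPDs of $\bar{\rho}_k^P\setminus\bar{\rho}_l^P$, a set that does contain $l^*$ when $l<l^*$, so $F$ is not manifestly invariant; the proof must exploit that the replacement at $l^*$ leaves the downstream conditional expectation $\MEANNN{P_{\{k\}|\bar{\rho}_l^P}}{f(X_k)}$ intact---the natural non-Gaussian analogue of the mean-preserving property used in part $(a)$. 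This is where I would concentrate the work, isolating the precise condition on the CPD update under which the invariance of $F$, and hence of the index, holds.
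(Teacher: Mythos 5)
Your part (a) is correct and is essentially the paper's own argument: the paper likewise observes that the mean-zero replacement of $\epsilon_{l^*}$ leaves the linear conditional-mean propagation, hence $F$ in \eqref{FforGBNs} and $\bar F$ in \eqref{FforGBNs2}, unchanged; that the MGF \eqref{FforGBNs1} is taken against the untouched Gaussian CPD $\tilde P_{l|\pi_l}=P_{l|\pi_l}$ for $l\neq l^*$; and that $\MEANNN{\tilde{P}}{f}=\MEANNN{P}{f}$ because all noise terms remain mean zero, which gives \eqref{eq:relative}. Your remark that the ancestor marginal never enters because the Gaussian MGF is free of $x_{\pi_l}$ is exactly the mechanism the paper relies on, and it is what lets the argument survive the fact that $\tilde P$ is no longer Gaussian at $l^*$.

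For part (b) you have reproduced the paper's decomposition — the index is built from $P_{l|\pi_l}$, $F$ and $P_{\rho_l^P}$, the first is untouched for $l\neq l^*$, and by the topological ordering $P_{\rho_l^P}$ is untouched precisely when $l<l^*$ — but you stop at the decisive step, deferring the invariance of $F=\MEANNN{P_{\{k\}|\bar{\rho}_l^P}}{f(X_k)}$ to future work. That is a genuine gap in your write-up: without it the equality for $l<l^*$ is not established. For what it is worth, the paper's proof does not close this step either; it asserts that the MGF in \eqref{eq:MFSI general} is unchanged because $\tilde P_{l|\pi_l}=P_{l|\pi_l}$ and $l^*\notin\rho_l^P$, silently treating $F$ as invariant even though the defining integral \eqref{eq:F} runs over the CPD at $l^*$ whenever $l<l^*\in\rho_k^P$. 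Your concern is substantive: a two-vertex Markov chain in which the update at $l^*$ shifts the conditional mean already changes $F$ and hence the index, so the statement for $l<l^*$ genuinely needs the update to preserve the downstream conditional expectations — the non-Gaussian analogue of the mean-zero hypothesis that does the work in part (a). So the route is the same as the paper's throughout; part (a) is complete; part (b) is missing its final step, though the step you flag is one the paper's own proof leaves implicit rather than one you have misidentified.
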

 \begin{proof}
(a) First, updating  $p(x_{l^*}|x_{\pi_{l^*}})$ with $l^*\in\rho_{k}$ does not affect the computation of $F$ defined in \eqref{eq:F}. This is straightforward by \eqref{eq:F}. In the case of a Gaussian Bayesian network, $F$ is given by \eqref{FforGBNs}. Second, if $l\neq l^*$, by \eqref{FforGBNs}, the MGF  of $\bar{F}$ with respect to $\tilde{P}_{l|\pi_{l}}$ is always the same with the MGF computed with respect to  $P_{l|\pi_{l}}$ (since $\tilde{P}_{l|\pi_{l}}=P_{l|\pi_{l}}$), see \eqref{FforGBNs1}. However, it only changes when  $l=l^*$. Moreover, the relative model sensitivity with respect to model $\tilde{P}$ satisfies \eqref{eq:relative}, since $\tilde{P}_{l^*|\pi_{l^*}}: X_{l^*}=\beta_{l^*0}+\beta_{l^*}^TX_{\pi_{l^*}}+\tilde{\epsilon}_{l^*}$ with $\tilde{\epsilon}_{l^*}$ another mean zero distribution. Thus the  expected values of $f(X_k)$ with respect to $P$ and $\tilde{P}$ are equal.
\smallskip

\noindent (b) It is enough to observe that for any $l\in\rho_{k}^P$ with $l< l^*$, the MGF in  Theorem~\ref{thm:MFSI general} computed with the respect to $\tilde{P}_{l|\pi_l}^P=P_{l|\pi_l}^P$ and $P_{l|\pi_l}^P$ are equal   and both  depend on the ancestors of $\rho_{l}^P$, where $l^*\notin\rho_{l}^P$. Hence, \eqref{eq:MFSI general} for both models is the same. Similarly, we prove the case $l\in\rho_{k}^P$ with $l\geq l^*$ (descendant components). Note that this time  $l^*\in\rho_{l}^P$ and thus \eqref{eq:MFSI general} is different for the two Bayesian Networks.
 \end{proof}
 \smallskip
 
 Both developed approaches  are implemented in Section~\ref{subsec:ORR:correctability}. For example, Theorem~\ref{THM:GBNsCor} (a) is applied when we update a CPD of the baseline Gaussian Bayesian Network by using a kernel-based (KDE) method,  
 see Figure~\ref{fig:bimetallics}. We refer to Section~\ref{subsec:ORR:correctability} for full details.

  \begin{remark}\label{rem:correct:pract}
   Even if the conditions of Theorem~\ref{THM:GBNsCor} are not applicable, the ranking procedure of Step 2 \&  Step 3 in Section~\ref{sec: correctability} can always identify the best candidates   among the components of the graphical model for improvement relative to a QoI. Once we correct the component $l^*$ selected through ranking we need to  recompute the relative model uncertainties in \eqref{eq:TOL:1} for all vertices $l \in V$ and then determine the suitability of the corrected model.  In fact, due to Theorem~\ref{THM:GBNsCor} (b) we only need to compute \eqref{eq:TOL:1} for just  the vertices $l$ in the descendants of $l^*$ since  all the remaining ones are not affected by the model correction.
  \end{remark}

\section{DFT-Informed Langmuir Model}\label{sec:Langmuir}






In this section, we consider  the Langmuir bimolecular adsorption model
that describes the chemical kinetics with competitive dissociative adsorption of hydrogen and oxygen on a catalyst surface \cite{catalysisGadi,doi:10.1063/1.5021351}. It is a multi-scale  system of random differential equations  with correlated dependencies in their parameters (kinetic coefficients),  arising from quantum-scale computational data calculated  using Density Functional Theory (DFT) (i.e quantum computations) for actual metals. The combination of chemical kinetics with parameter dependencies, correlations and DFT data gives rise naturally to a Bayesian network. However, the limited availability of the quantum-scale data creates significant model uncertainties both in the  distributions of kinetic coefficients  and their correlations, see for example Figure~\ref{fig:langmuir data} (a).
%
%
Thus, we will quantify the ensuing model uncertainties  by implementing our analysis in Section~\ref{sec:MFUQ} and ~\ref{sec:QUSI}. Here, the equilibrium hydrogen and oxygen coverages  are our QoIs and can be calculated by the dynamics of the chemical reaction network described by  the following system of random ODEs with random (correlated) coefficients, 
\begin{align}
\label{eq:Langmuir ODEs}
\frac{dC_{H^*}}{dt}&=k^{ads}_{H_2}P_{H_2}(1-C_{H^*}-C_{O^*})^2-k^{des}_{H_2} C^2_{H^*},\qquad C^0_{H^*}=C_{H^*}(0),\\
\frac{dC_{O^*}}{dt}&=k^{ads}_{O_2}P_{O_2}(1- C_{H^*}- C_{O^*})^2-k^{des}_{O_2} C^2_{O^*}, \qquad C^0_{O^*}= C_{O^*}(0)\label{eq:Langmuir ODEs2},
\end{align}
where $C_{H^*}$ and $C_{O^*}$ represent the  hydrogen and oxygen coverages. $P_{H_2}$ and $P_{O_2}$ are the partial pressures of the gas phase species and are fixed.
\begin{figure}[ht]
\centering
\includegraphics[width=6in]{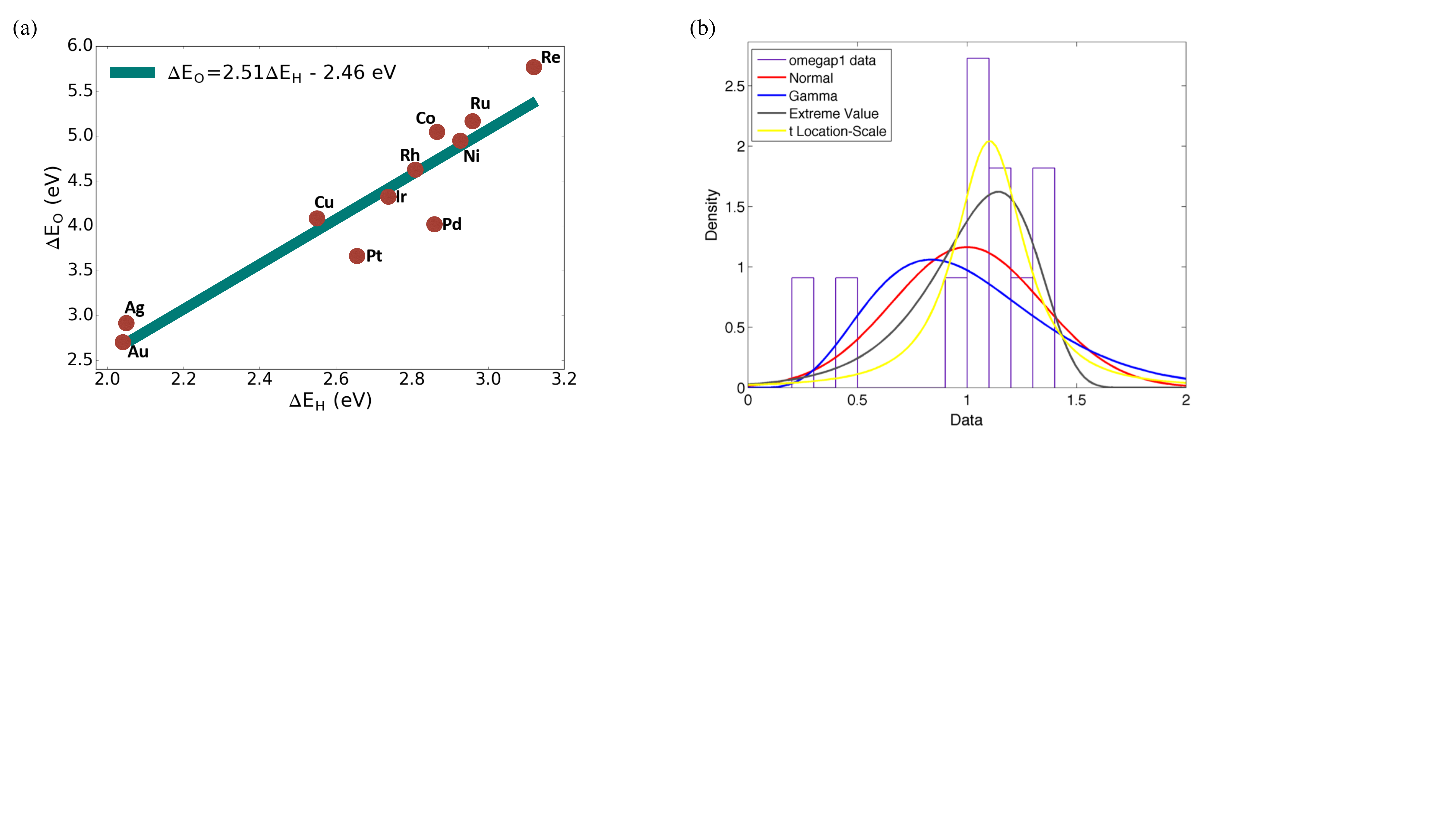}
\vspace{-4.5cm}
\caption{\small (a) Correlation between oxygen and hydrogen adsorption energies on  metal surfaces as defined in  (\ref{linearEq}), (b) Fit of $\omega$ in \eqref{linearEq}
with various parametric distributions.}
\label{fig:langmuir data}
\end{figure}

\noindent Then, the steady state solution of \eqref{eq:Langmuir ODEs}-\eqref{eq:Langmuir ODEs2}, which constitute our  QoIs,  is given by 
\begin{eqnarray}
\label{eq:Langmuir equilibrium}
\;\;\;\;\;\;\;\;\;\displaystyle{\hat C_{H^*}=\frac{(K_{H_2}P_{H_2})^{\frac{1}{2}}}{1+(K_{H_2}P_{H_2})^{\frac{1}{2}}+(K_{O_2}P_{O_2})^{\frac{1}{2}}},\quad \hat C_{O^*}=\frac{(K_{O_2}P_{O_2})^{\frac{1}{2}}}{1+(K_{H_2}P_{H_2})^{\frac{1}{2}}+(K_{O_2}P_{O_2})^{\frac{1}{2}}}}.
\end{eqnarray}
Here    $K_i=\frac{k_i^{ads}}{k_i^{des}}$ for $i=H_2, O_2$ and  for each species they are related to electronic structure (DFT) calculations through   an  Arrhenius law \cite{feng2018non}:
\begin{align}
K_{H_2}&=e^{-\frac{G_{H_2}}{k_BT}}(P_{H_2}+P_{O_2})^{-1},\;\;G_{H_2}\propto -2\Delta E_{H}
\label{eq:Arrhenius1}
\\
K_{O_2}&=e^{-\frac{G_{O_2}}{k_BT}}(P_{H_2}+P_{O_2})^{-1},\;\;G_{O_2}\propto -2\Delta E_{O}
\label{eq:Arrhenius2}
\end{align}
The constants  $k_B$ and $T$ are  the Boltzmann constant and the temperature respectively. In the above formulas, $G_{H_2}$ and $G_{O_2}$ are  the hydrogen and oxygen Gibbs free energies of adsorption.  Therefore, the coverages $\hat C_{H^*}$ and $\hat C_{O^*}$ are  non-linear functions of $\Delta E_{H}$ and $\Delta E_{O}$. We refer to \cite{doi:10.1063/1.5021351} for  the  chemistry background and analysis of the model. In \cite{doi:10.1063/1.5021351}, the authors have estimated  the two binding energies for various  metal catalyst surfaces via DFT calculations as illustrated in Figure~\ref{fig:langmuir data}(a). Furthermore, correlations between 
$\Delta E_{O}$ and $\Delta E_{H}$ are captured by a statistical linear model 
\begin{equation}\label{linearEq}
\Delta E_{O}=a\Delta E_{H}+b+\omega
\end{equation}
where $\omega$ is a random variable. The distribution of $\omega$ can be determined by fitting the residual data from linear regression using Maximum Likelihood Estimation (MLE),  Figure~\ref{fig:langmuir data}(b). 
\begin{figure}[ht]
\centering
\includegraphics[width=6in]{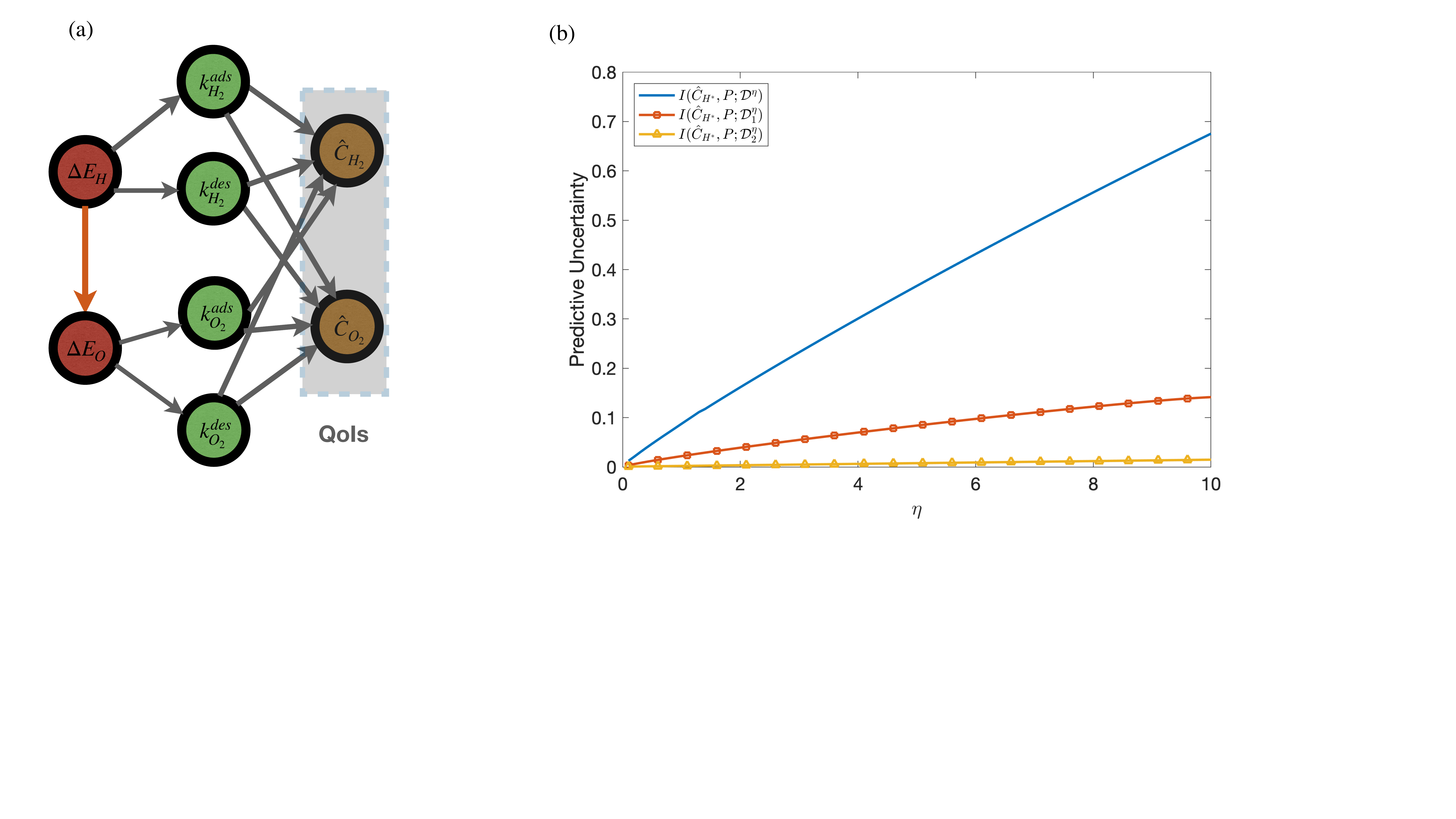}
\vspace{-3.5cm}
\caption{(a) Graph structure of the baseline Bayesian network $P$  in \eqref{eq:Langmuir prob model} built by  data \eqref{eq:langmuir PGM_p2} and  \eqref{eq:langmuir PGM_p1}, 
physics knowledge \eqref{eq:Arrhenius1}, \eqref{eq:Arrhenius2},  \eqref{eq:langmuir PGM_p1} and the steady state of the ODEs given by \eqref{eq:Langmuir equilibrium}. 
(b) We consider the QoIs \eqref{eq:Langmuir equilibrium} of  \eqref{eq:Langmuir prob model}: the blue line represents the model uncertainty index $I^{\pm}(f, P; \mathcal{D}^\eta)$ as a function of $\eta$ (Theorem~\ref{thm:MFUQ PGM});
the red and yellow lines are respectively the model sensitivity indices $I^{\pm}(f, P; \mathcal{D}^\eta_1)$ and $I^{\pm}(f, P; \mathcal{D}^\eta_2)$ (Theorem~\ref{thm:MFSI general}) for $p(\Delta E_H)$ and $p(\Delta E_O|\Delta E_H)$ where $\mathcal{D}^\eta_1$ indicates the perturbation on $p(\Delta E_H)$ and $\mathcal{D}^\eta_2$ for $p(\Delta E_O|\Delta E_H)$. 
}
\label{fig:langmuir PGM}
\end{figure}
\noindent
In \eqref{linearEq} we select a Gaussian distribution for $\omega$ (red line in Figure \ref{fig:langmuir data} (b)) as the baseline  CPD for the correlation in Figure~\ref{fig:langmuir data}:
\begin{equation}
\label{eq:langmuir PGM_p2}
    p(\Delta E_O| \Delta E_H) = \mathcal{N}(a\,\Delta E_H+b, \sigma_{\omega}^2).
\end{equation}
Next we model the distribution of the prior $p(\Delta E_H)$.   Based on physical constraints (e.g. positivity of the random variable without physical upper bound),  in \cite{doi:10.1063/1.5021351} the distribution  of $\Delta E_H$ was selected to be a  gamma distribution with mean $x_H$ with  standard deviation given by the difference between experiment and DFT, $(x_H-y_H)$, 
\begin{equation}
\label{eq:langmuir PGM_p1}
 p(\Delta E_H)=\frac{1}{b_H^{a_H}\Gamma(a_H)} \Delta E_H ^{\ a_H-1} \exp \left(-\frac{\Delta E_H}{b_H}\right) \quad \mbox{for}\quad \Delta E_H > 0,
\end{equation}
where $a_H=x_{H}^2/(x_{H}-y_{H})^2$ and $b_H=(x_{H}-y_{H})^2/x_{H}$. This is a case with  very little data $(x_{H}, y_{H})$ and only some reasonable physical constraints without any further knowledge on the model, therefore model uncertainty in \eqref{eq:langmuir PGM_p1} is evident.


We now  build the baseline Bayesian network $P$  by combining the following ingredients: 
data through \eqref{eq:langmuir PGM_p2} and  \eqref{eq:langmuir PGM_p1}, 
physics and expert knowledge in \eqref{eq:Arrhenius1}, \eqref{eq:Arrhenius2},  \eqref{eq:langmuir PGM_p1} and the steady state of the ODEs (QoI) given by \eqref{eq:Langmuir equilibrium}, see also Figure~\ref{fig:langmuir PGM}. We obtain the following Bayesian network and the corresponding CPDs:
\begin{equation}
\label{eq:Langmuir prob model}
    p(x) = 
    \underbrace{p(\hat{C}_{H^\ast},\hat{C}_{O^\ast}|K_{H_2}, K_{O_2})}_{\eqref{eq:Langmuir equilibrium}}
    \prod_{i=H_2,O_2}\!\!
    \underbrace{p(K_{i}|\Delta E_i)}_{\eqref{eq:Arrhenius1}, \eqref{eq:Arrhenius2}}
    \underbrace{p(\Delta E_O| \Delta E_H)}_{
    \eqref{eq:langmuir PGM_p2}
    }\underbrace{p(\Delta E_H)}_{
    \eqref{eq:langmuir PGM_p1}
    }
\end{equation}
 In the above formula, $p(\hat{C}_{H^\ast},\hat{C}_{O^\ast}|K_{H_2}, K_{O_2})$ and $p(K_{i}|\Delta E_i)$ are deterministic, while the only random parts in $P$ are   $p(\Delta E_O|\Delta E_H)$ and  $p(\Delta E_H)$.

%
In the process  of building the baseline model $P$  above, the sparse   data  in Figure~\ref{fig:langmuir data} for \eqref{eq:langmuir PGM_p2} and the lack  of both  knowledge and  (almost any) data in \eqref{eq:langmuir PGM_p1}
create model uncertainties for the prediction of the QoIs in \eqref{eq:Langmuir equilibrium}. We quantify these uncertainties by implementing the model uncertainty index of Theorem~\ref{thm:MFUQ PGM} and the model sensitivity indices of Theorem~\ref{thm:MFSI general}; see Figure~\ref{fig:langmuir PGM} (b) where we readily see how the indices change  for different values $\eta$; the implementation of the indices was carried out through Monte Carlo simulation of the moment generating functions. 
Moreover,  we observe that for the QoIs \eqref{eq:Langmuir equilibrium} the impact of  uncertainties in the prior  $p(\Delta E_H)$ are significantly higher than  in the correlation $p(\Delta E_O|\Delta E_H)$  when we  perturb with same model misspecification $\eta$.
Finally, we note that due to the lack of data in \eqref{eq:langmuir PGM_p1}, we elected to perform the user-determined stress tests of Step 1.B of Section~\ref{sec: correctability} where the user selects various levels of model misspecification $\eta$.  

\section{Model Uncertainty for Sabatier's Principle}
\label{sec: ORR}

We study     Bayesian networks  built for trustworthy  prediction of materials screening  to increase the efficiency of  chemical reactions in catalysis. Our starting point  is Sabatier's principle which describes the efficiency of a catalyst \cite{catalysisGadi} through the so-called ``volcano curve", e.g. the  black curve  in Figure~\ref{fig: PGM}(c). The volcano curve  suggests that high catalytic activity is exhibited when the binding interaction between reactants and catalysts is neither  too strong nor too weak, i.e. at the peak of the volcano marked by a  star in Figure~\ref{fig: PGM}(c). For this reason Sabatier's principle is widely viewed as an important criterion for screening materials for increased efficiency in catalysis. Our ultimate goal here is to understand how various uncertainties can affect the shape and position of the volcano curve and its peak.


Here we consider the  Oxygen Reduction Reaction (ORR) which is  a known performance bottleneck in fuel cells \cite{setzler2016activity}.
The ORR reaction depends on the formation of surface hydroperoxyl ($OOH^\ast$) from molecular oxygen ($O_2$), and water ($H_2O$) from surface hydroxide ($OH^\ast$) \cite{suen2017}. The complete mechanism \cite{callevallejo2015,antoine2001,holewinski2012} involves four electron exchange steps with reactions (R1) and (R4) being slow, see Figure~\ref{fig: PGM}(a).  Therefore, the discovery of new materials will have to rely  on speeding up the two slowest reactions in order to accelerate the entire ORR mechanism. 
Furthermore,  such a physicochemical system has hidden correlations between variables which have emerged after statistical analysis of data \cite{Feng2020Science}. In particular, the corresponding Gibbs energies of reactions (1) and (4) denoted by $-\Delta G_4\equiv y_1$ and $-\Delta G_1\equiv y_2$ are computed as linear combinations of free energies of species and are regressed versus the oxygen binding energy $\Delta G_{O}\equiv x$ calculated by DFT calculations. The oxygen binding energy $x$ is chosen as a descriptor in  \cite{Feng2020Science} since it is the natural coordinate arising from Sabatier's principle.  
The principle is graphically represented by the volcano curve , i.e. the  solids black lines in Figure~\ref{fig: PGM}(c) which is a function of the descriptor. 
Therefore, the QoI considered here is the optimal oxygen binding energy $\Delta G_{O}$ denoted by $x_{O^*}^P$ and identified as the maximum of the volcano curve:
\begin{equation}
\label{QoI1}
x_{O^*}^P:=\mathrm{argmax}_{x_0} \left[\min\{\MEANNN{P}{y_1|x_0},\MEANNN{P}{y_2|x_0}\}\right]\, .
\end{equation}
Starting from this QoI we build a Bayesian network in Figure~\ref{fig: PGM}(b)that includes  expert knowledge (volcano curves), as well as  various available experimental and computational data and their  correlations or  conditional independence.
\begin{figure}[ht]
\centering
\includegraphics[width=1\linewidth]{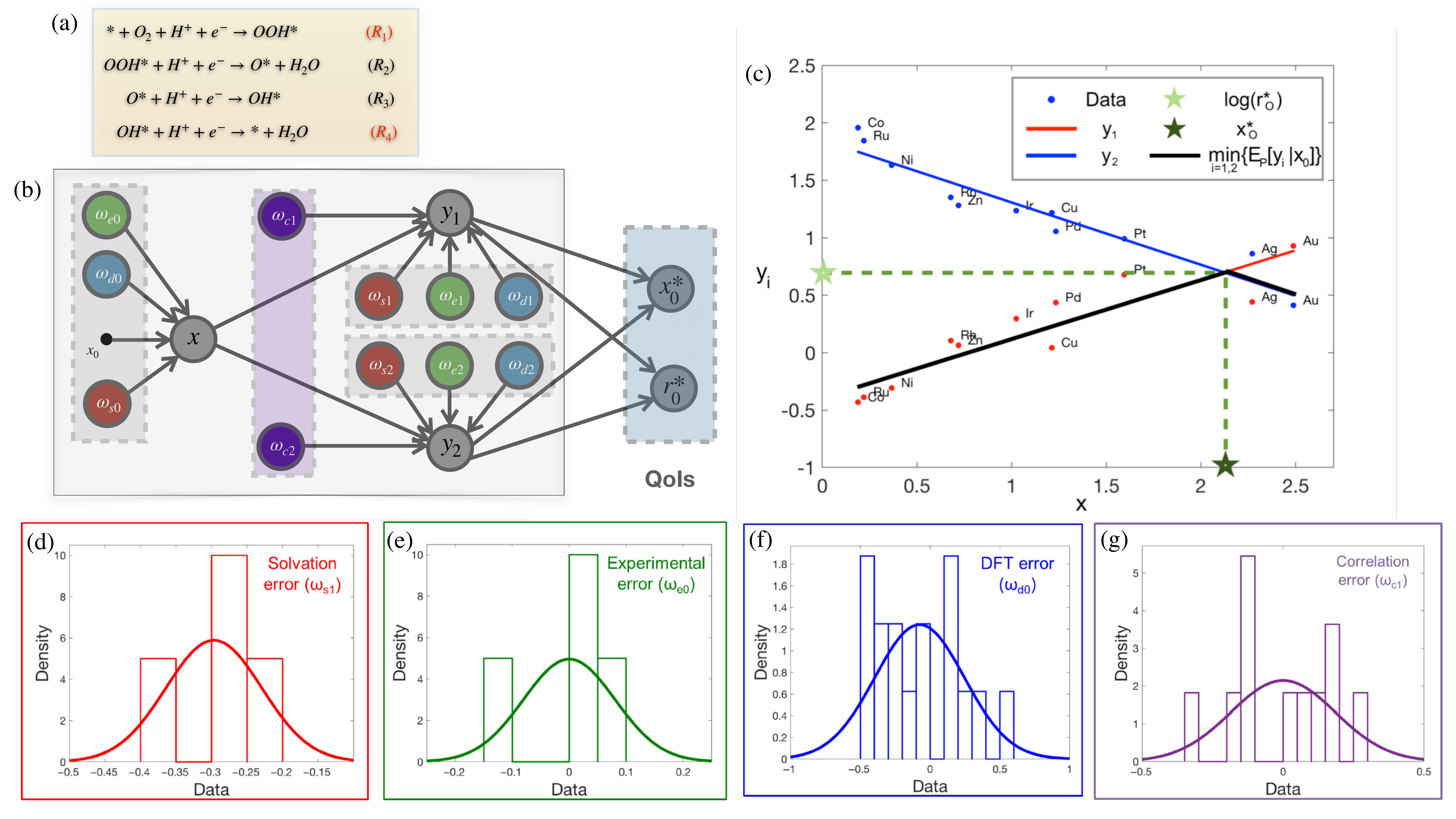}
\vspace{-0.6cm}
\caption{\small (a) ORR reaction steps (R1 to R4) in hydrogen
fuel cells, (b) Bayesian network for ORR. The construction of the Bayesian network (Section~\ref{subsec:construction:ORR}) is based on  expert knowledge, physicochemical  modeling and statistical analysis of data. We include these random variables into the Bayesian network and build the directional relationships (connection/arrows)
between corresponding random variable $x$ or $y_i$. We build a Gaussian Bayesian network, i.e., all CPDs are Gaussians which are  fitted to available data using MLE (see histogram approximations in (d-g)). Note the conditional independence between the $y$-variables, assumed based on expert knowledge.  (c) The QoI of the ORR model is the optimal oxygen binding energy $x_{O^\ast}^P$ and is identified when the two reaction energies are equal by physical modeling (marked with a star).  (d-f) Here we model different kinds of errors in $x$ and $y_i$, given expert knowledge.} 
\label{fig: PGM}
\end{figure}


\smallskip
 
\subsection{Construction of the ORR Bayesian network for the QoI \eqref{QoI1}} \label{subsec:construction:ORR}
First, we  relate the QoI  with the $y_i$'s  and then we include errors from different sources in $x$ and $y_i$'s.
More precisely,
\smallskip

\noindent(1) [Graph] We first build the directed graph for the Bayesian network. The first selected vertices in the graph are the QoIs  $x_{O^\ast}^P$,  $r_{O^\ast}^P$, as well as $y_i$'s and  $x$, see gray vertices in Fig. \ref{fig: PGM} (b). Subsequently,
\begin{itemize}
\item[(1a)] Through the statistical independence test \cite{wasserman2013all}, we  learn that  $y_1$ and $y_2$ depend on $x$ and  are conditionally independent given $x$ as illustrated in Figure~\ref{fig: PGM} (b).
\item[(1b)] The construction of $x$ comes from the DFT data (using quantum calculations) for the  oxygen binding energy given the real unknown value $x_0$. As mentioned in the beginning of the section, $x$ is also selected to be the descriptor by expert knowledge (see also the supplementary material of  \cite{Feng2020Science}) and justifies the conditional  relationships between $x$ and $y_i$'s.
 \item[(1c)] The evaluations of the QoIs depend on the values of $y_i$'s for each $x_0$ due to the volcano curve of the Sabatier's principle.
     
\end{itemize}
Overall, in (1) we built part of the network structure for $x$, $y_1$, $y_2$ and the QoI  using a constraint-based method \cite{spirtes2000causation}, which selects a desired structure based on  constraints of dependency among variables.
\smallskip

\noindent (2)[CPD] Next, we build the individual CPDs on the graph constructed above. 
\begin{itemize}
    \item[(2a)] We include statistical correlations between  DFT (quantum calculation) data for $x$ and $y_i$, see data in \cite{Feng2020Science}. We model the residual using a  linear model with  a random correlation error  denoted by  $\omega_{ci}$, see \eqref{eqyi}.
    \item[(2b)] We model as random variables and incorporate in the Bayesian network different kinds of errors in $x$ and $y_i$'s 
from the following  sources:   $\omega_{ei}$ is the error in experimental data, $\omega_{di}$ is error between quantum  and experimental values and $\omega_{si}$ is error due to solvation effects; all  are calculated by DFT, see the corresponding data in \cite{Feng2020Science}. See \eqref{eqyi}.
\end{itemize}
\smallskip
\noindent
More specifically, after conducting independence tests on the corresponding data, and also based on expert knowledge or intuition \cite{Feng2020Science} we assume that the random variables $\omega$ are independent. Based on the graph construction above we obtain the Bayesian network
\begin{eqnarray}
\label{eq:ORR model}
 p(\mathbf{x}|x_0)= \prod_{i=1,2}p(y_i|x,\omega_{ei},\omega_{di},\omega_{si},\omega_{ci}) \cdot p(x|\omega_{e0},\omega_{d0},\omega_{s0},x_0) \cdot  \!\!\!\!\!\!\!\!\!\prod_{\substack{j=e_k,d_k,s_k,c_1,c_2\\k=0,1,2}} \!\!\!\!\!\!\!\!\!\!p(\omega_j)
\end{eqnarray}
where $\mathbf{x}=(x,y_1,y_2,\omega_{e0},\omega_{d0},\omega_{s0},\omega_{e1},\omega_{d1},\omega_{s1},\omega_{c1},\omega_{e2},\omega_{d2},\omega_{s2},\omega_{c2})$.
The baseline CPDs in \eqref{eq:ORR model} are constructed as  linear Gaussian models, namely for $i=1,2$:
\begin{equation}
\label{eqyi}
    y_i = \beta_{y_i,0} + \beta_{y_i,x}x + \omega_{ei} + \omega_{di} + \omega_{si} + \omega_{ci} \quad\textrm{and}\quad  x = x_0 + \omega_{e0} + \omega_{d0} + \omega_{s0}\, .
\end{equation}
The CPDs  for each vertex  are selected as 
\begin{align}\label{eq: omega dist}
    p(y_i|x,\omega_{ei},\omega_{di},\omega_{si},\omega_{ci}) &= \mathcal{N}(\beta_{y_i,0} + \beta_{y_i,x}x + \omega_{ei} + \omega_{di} + \omega_{si} + \omega_{ci}, 0)\\
        p(x|\omega_{e0},\omega_{d0},\omega_{s0},x_0) &= \mathcal{N}(x_0 + \omega_{e0} + \omega_{d0} + \omega_{s0}, 0)\\ \label{eqyi'}
     p(\omega_j) &= \mathcal{N}(\beta_{j,0},\sigma_{j}^2)
\end{align}
where $i=1,2$, and $j=e0,d0,s0,e1,d1,s1,c1,e2,d2,s2,c2$. Then the resulting baseline model
\eqref{eq:ORR model} is a Gaussian Bayesian network.
Subsequently we use the global likelihood decomposition method \cite{koller2009probabilistic} 
to learn the parameters $\beta_{y_i,0},\beta_{y_i,x}$ and $\sigma_{j}$. The outcomes are given in Table~\ref{tab:MLE beta}. This approach is essentially a Maximum Likelihood Estimation (MLE) on PGMs (see \cite[Chapter 17.2]{koller2009probabilistic}), that  exploits a fundamental scalability property that allows us to ``divide and conquer" the parameter inference problem on the graph.  We can also employ a Bayesian approach instead of MLE, see for instance \cite{koller2009probabilistic}  for the case of PGMs. 
\medskip

\subsection{Model sensitivity, stress tests and ranking} 
Here,  we implement the four-step strategy of Section~\ref{sec: correctability} to the ORR model by using data-informed stress tests or user-determined stress tests (Step 1.A and Step 1.B of Section~\ref{sec: correctability}). The primary goal  is to quantify and rank the impact of model uncertainties
from each component of the Bayesian network through the model sensitivity
indices in Section~\ref{sec:QUSI}. Next, we compute these model sensitivity indices for the QoI $x_{O^*}^P$ in \eqref{QoI1}, namely
\begin{equation}
\underset{Q \in  \mathcal{D}^{\eta_l}_{l,P}}{\mathrm{sup/inf}}\left \{ x_{O^{*}}^Q - x_{O^{*}}^P  \right\}
\end{equation}
for $l\in\{ei,di,si,ci,e0,d0,s0\}$ with $i=1,2$. To this end, we  first 
use Theorem~\ref{cor:Gaussian Bayesian network MFSI} for $i=1,2$ to obtain
 \begin{equation}
\label{eq:MFSI:ORR}
      I^\pm(y_i,P; \mathcal{D}^{\eta_l}_{l,P}) = \pm |\tilde{\beta}_{y_i,\omega_l}|\sqrt{2\sigma_l^2\eta_l}\ ,
\end{equation}  
with  $\sigma_l$ and  $\tilde{\beta}_{y_i,\omega_l}$  given in \eqref{eq: omega dist} and  Table \ref{tab:tildebeta} respectively. Subsequently we 
solve the optimization problem for $x_O=x_{O^\ast}^P$ and obtain  the bounds for $x_{O^{*}}^Q - x_{O^{*}}^P$  as shown in  Figure~\ref{fig:bounds of QoI} and given by
\begin{equation}
\label{eq:MFSI opt1}
\frac{-\sqrt{2\sigma_l^2\eta_l}}{\beta_{y_1,x}-\beta_{y_2,x}}\leq x_{O^{*}}^Q - x_{O^{*}}^P\leq \frac{\sqrt{2\sigma_l^2\eta_l}}{\beta_{y_1,x}-\beta_{y_2,x}}
\end{equation}
for $l=ei,di,si,ci$ and  $i=1,2$; note that the model uncertainty of $\omega_l$ only  affects $y_i$ according to the ORR Bayesian network. Furthermore,
\begin{equation}
\label{eq:MFSI opt2}
\frac{-( |\beta_{y_1,x}|+|\beta_{y_2,x}|)\sqrt{2\sigma_l^2\eta_l}}{\beta_{y_1,x}-\beta_{y_2,x}}\leq x_{O^{*}}^Q - x_{O^{*}}^P\leq\frac{( |\beta_{y_1,x}|+|\beta_{y_2,x}|)\sqrt{2\sigma_l^2\eta_l}}{\beta_{y_1,x}-\beta_{y_2,x}}
\end{equation}
for $l=e0,d0,s0$ as the model uncertainty of $\omega_{l}$ affects both $y_1$ and $y_2$.  Here $\beta_{y_i,x}$ are the coefficients given by the first CPD in \eqref{eq: omega dist}. The complete algebraic calculation of \eqref{eq:MFSI opt1} and \eqref{eq:MFSI opt2} is given in the Appendix~\ref{sebsec:important calculation for qoi}. 
\begin{figure}[ht]
\centering
\includegraphics[width=1\textwidth]{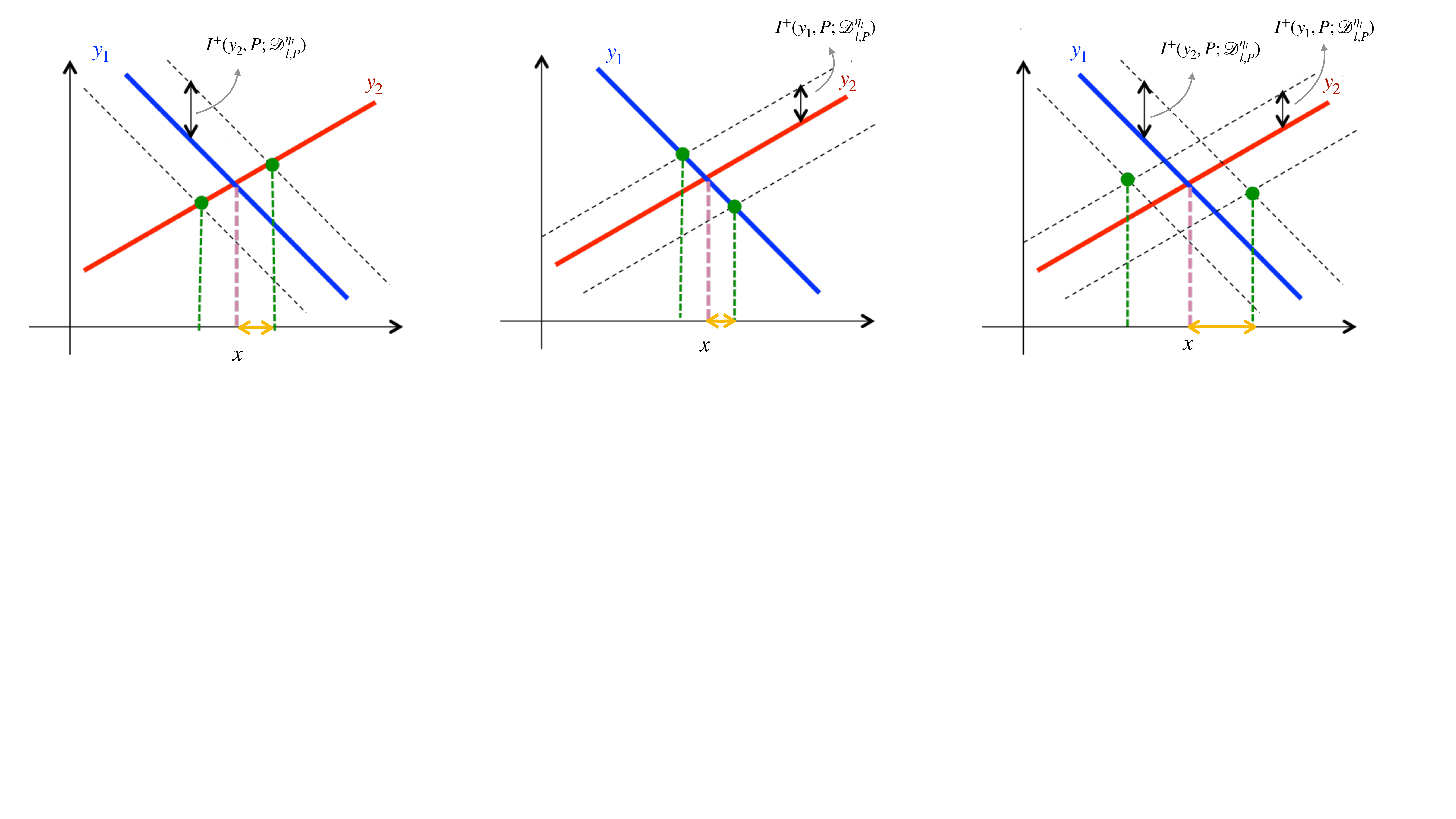}
\vspace{-4.5cm}
\caption{\small Typical model  uncertainty bounds $I^\pm(y_i,P; \mathcal{D}^{\eta_l}_{l,P})\, , i=1, 2$  computed by \eqref{eq:MFSI:ORR}. The model uncertainty  for the QoI $x_{O^\ast}^P$ (see Figure~\ref{fig: PGM}(c)) is  computed by  \eqref{eq:MFSI opt1}-\eqref{eq:MFSI opt2} and demonstrated in yellow for model misspecification $\eta_l$ in $P(\omega_l)$: (a)  for $l=e1,d1,s1,c1$,  $I^\pm(y_1,P; \mathcal{D}^{\eta_l}_{l,P}) =  \pm\sqrt{2\sigma_l^2\eta_l}$; (b) for  $l=e2,d2,s2,c2$,  $I^\pm(y_2,P; \mathcal{D}^{\eta_l}_{l,P}) = \pm\sqrt{2\sigma_l^2\eta_l}$; (c)  for $l=e0,d0,s0$,  $I^\pm(y_i,P; \mathcal{D}^{\eta_l}_{l,P}) =\pm |\beta_{y_i,x}|\sqrt{2\sigma_l^2\eta_l}$, $i=1,2$.
}
\label{fig:bounds of QoI}
\end{figure}
\noindent Then by implementing Step 2 of Section~\ref{sec: correctability}, we rank the model components as demonstrated in  Figure~\ref{fig:Pie}. There we plot \eqref{eq:ranking} as a pie chart, where  the most impactful components are depicted.
\medskip

\begin{figure}[ht]
\centering
\includegraphics[width=0.49\textwidth]{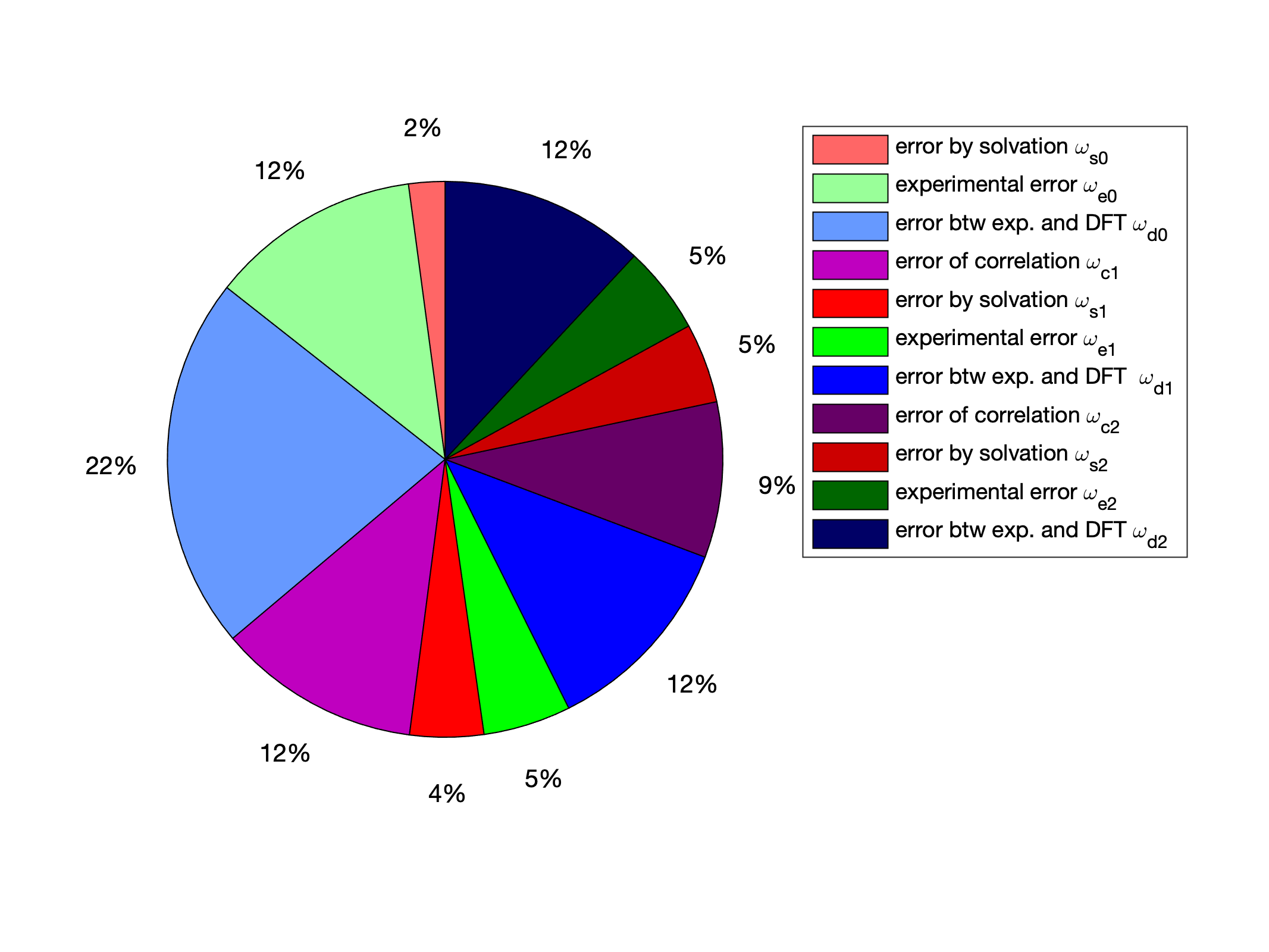}
\includegraphics[width=0.49\textwidth]{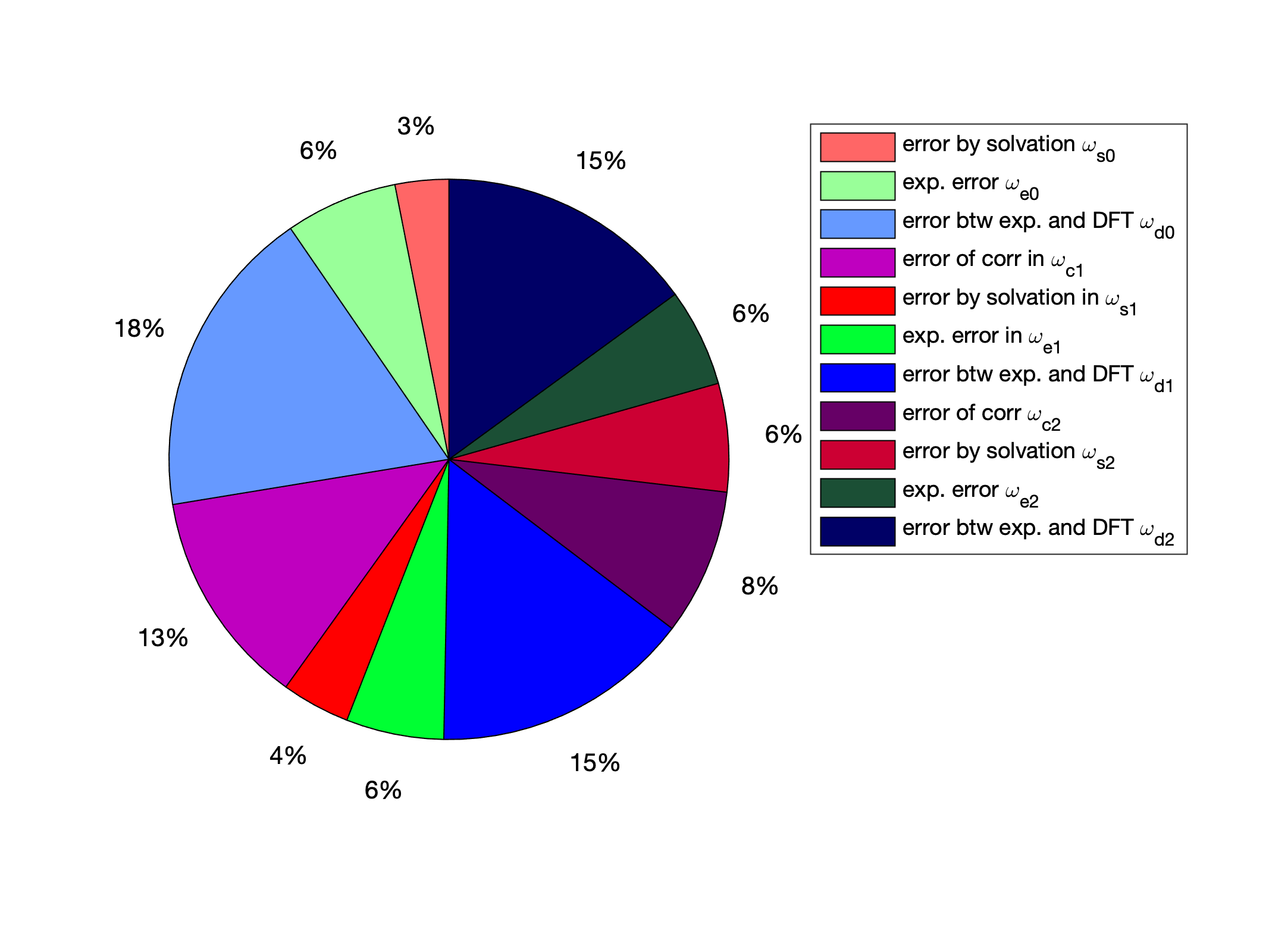}
\vspace{-1cm}
\caption{\small Relative model sensitivities \eqref{eq:ranking} for the QoI $x_{O^\ast}^P$ 
 in each ORR Bayesian network mechanism in Figure~\ref{fig: PGM} (b). {\bf  (Left)}  User-determined stress test (Step 1.B. in Section~\ref{sec: correctability});  $\eta_l$ has a fixed value for all $l$; the particular value does not matter since it is canceled out by the ratio in \eqref{eq:ranking}. {\bf (Right)} 
 Data-informed  stress  test (Step 1.A. in Section~\ref{sec: correctability});
   $\eta_l=R(data\|P_l)$ selected as a distance of each CPD from the available data.
   } 
\label{fig:Pie}
\end{figure}

\begin{remark}[Propagation/Non-Propagation of 
Uncertainties to the QoIs] The discrepancies  in the  propagation of model misspecification to the QoI  between  different Bayesian network components is depicted  in  Figure~\ref{fig:Pie}. In particular, in Figure~\ref{fig:Pie} (Left)  the same user-selected  model misspecification $\eta_l$ is applied on all ORR Bayesian network vertices. However not all propagate and affect the same the QoI. See also the example in 
Figure~\ref{fig:propagation}.
%
\end{remark}

\begin{remark}The construction of the ORR Bayesian network and  its model uncertainty  was carried out in \cite{Feng2020Science} for the optimal oxygen binding energy defined differently than \eqref{QoI1}, that is as  $\mathrm{argmax}_{x_0} \MEANNN{P}{y|x_0}$ with $y|x_0=\min\{y_1|x_0,y_2|x_0\}$. This is an alternative mathematical description of the same concept, however  \eqref{QoI1} allows to explicitly calculate the model sensitivity indices given by \eqref{eq:MFSI:ORR}-\eqref{eq:MFSI opt2} and provide clear insights in what model elements and uncertainties affect them the most.  On the other hand, in  \cite{Feng2020Science} the model sensitivity indices provided by Theorem~\ref{thm:MFSI general} can only be calculated computationally.
 \end{remark}
 \smallskip

\subsection{Correctability of the ORR Bayesian Network}\label{subsec:ORR:correctability}
Here we use the earlier model uncertainty/sensitivity analysis to first identify and  then correct the most impactful components in several  ways as discussed in Step 4 of Section~\ref{sec: correctability} and in the theoretical results on correctability in Section~\ref{Sec:GBN:Correctability}.

\smallskip
\noindent
{\it 1. Including targeted high quality data.} We seek  data that lead to the  reduction of  the variance $\sigma_{l^\ast}^2$ for some $l^\ast\in L$ (see Step 3  of Section~\ref{sec: correctability}), while the model misspecification $\eta_{l^\ast}$ does not increase or the increment is much smaller than the reduction of $\sigma_{l^\ast}^2$. Notice that in this case the model remains a Gaussian Bayesian network. For the ORR Bayesian network, it turns out that  we can add more data using DFT calculations for bimetallics to reduce the relative error for the correlation errors $\omega_{ci}$, $\sigma_{ci}^2$; see the bimetallics data set in \cite{Feng2020Science}.  Then the model sensitivity indices of $y_i$ on $\omega_{ci}$, $I^\pm(y_i,P;\mathcal{D}^{\eta_l}_{l,P}),\,l=\omega_{ci}$ given by \eqref{eq:MFSI:ORR} and  the model misspecification $\eta_{\omega_{ci}}$ are reduced. Consequently,  the model sensitivity indices of $x_{O^\ast}^P$ does so as well, see \eqref{eq:MFSI opt1}. The relative predictive uncertainty \eqref{eq:TOL:2} of such an updated model is demonstrated in Figure~\ref{fig:bimetallics} (Center), 
updated model 2.


\begin{figure}[ht]
\centering
\includegraphics[width=1\linewidth]{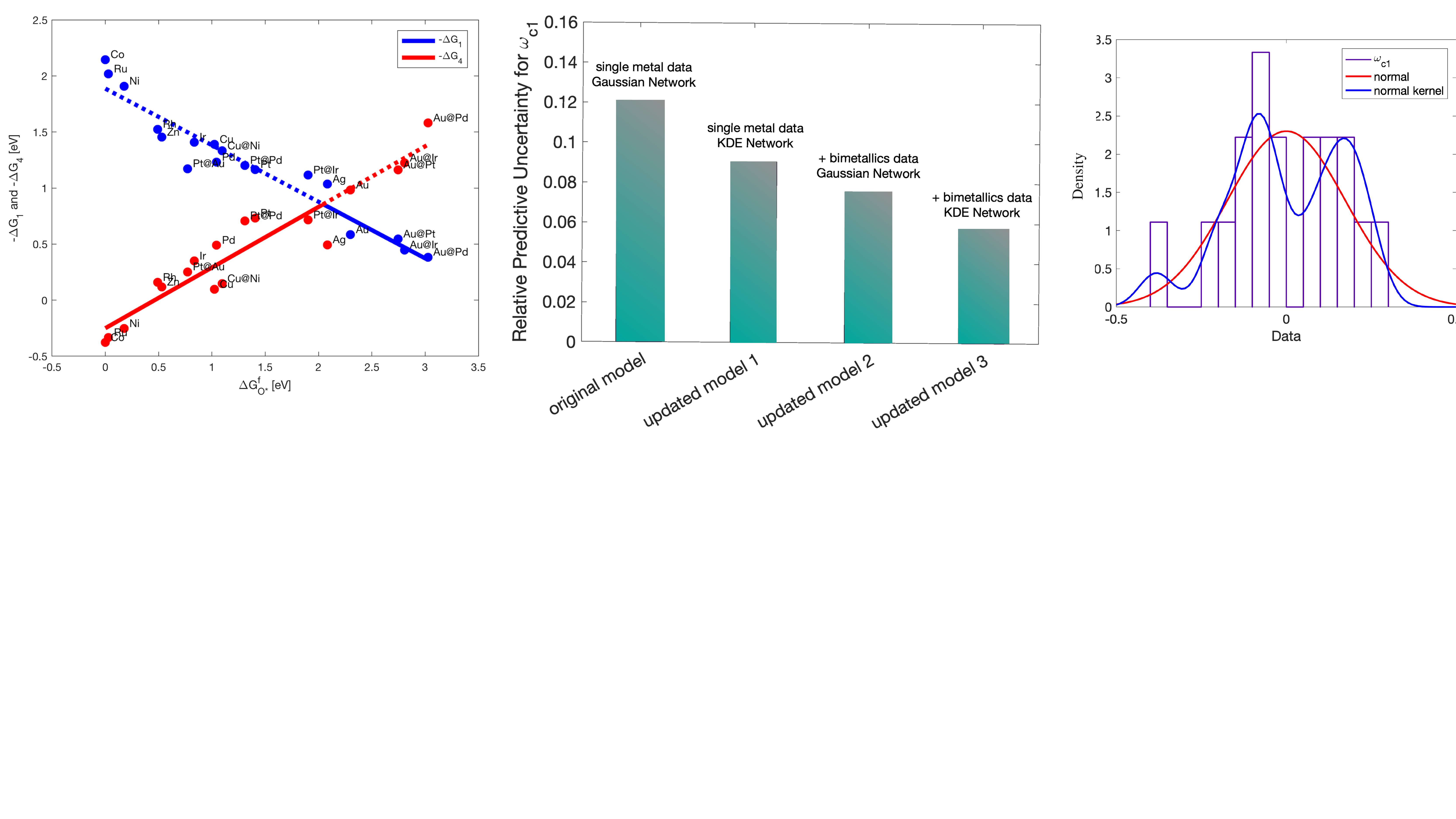}
\vspace{-4cm}
\caption{\small {\bf (Left)} DFT-computed data for reaction energies with respect to different metals/oxygen binding energies. Here   bimetallics data are also included in addition to the single metals in  Figure~\ref{fig: PGM} (c). {\bf (Center)}  Different relative model sensitivities \eqref{eq:TOL:2}
when we:  only perturb the model of $\omega_{c1}$ by $\eta_{c1}=R(data\|P_{c1})$ when $P_{c1}$ is Gaussian with the original single-metal data;  or using a KDE given by \eqref{eq:KDE} with the original data (updated model 1);  or using a Gaussian with the additional  bimetallics data (updated model 2); or using both KDE and  Bimetallics data (updated model 3). {\bf (Right)} Baseline model (Gaussian) of $\omega_{c1}$ (red curve) and the updated model (normal-kernel density estimation, blue curve) and additional  bimetallics data in this figure (Left).}
\label{fig:bimetallics}
\end{figure}

\noindent
 {\it 2. Increasing the complexity of CPDs.} We reduce the model misspecification $\eta_{l^\ast}$ by picking a better model $\tilde{P}_{l^\ast}$ than the baseline model $P_{l^\ast}$ for the $l^*$ component. The new model should represent the (fixed) available data more accurately by using a kernel-based method. In this case the new model is a mixture of  Gaussian and kernel-based networks \cite{koller2009probabilistic}.
For example, we replace the linear, Gaussian model for $\omega_{c1}$ demonstrated in Fig. \ref{fig: PGM} (g) with a linear, kernel-based model as shown in Figure~\ref{fig:bimetallics} (Right). Then  we can reduce the model sensitivity indices by decreasing the model misspecification $\eta_{l^*}$  without introducing new errors in the remaining components of the Bayesian Network as proved in Theorem~\ref{THM:GBNsCor}  (a). 

Moreover, 
we can  combine the approaches  above  to reduce the model sensitivity indices. For example, after adding more bimetallics data,  we first reduce the model sensitivity indices for the correlation errors $\omega_{ci}$. Then we  further reduce the indices of $\omega_{c1}$ by  replacing the corresponding component of  the baseline model for $\omega_{c1}$ (Gaussian model) by normal kernel density estimator without increasing the indices of the remaining nodes (see Theorem~\ref{THM:GBNsCor} (a)). The new model is the updated model 3 in  Figure~\ref{fig:bimetallics} (Center).
%
We can  compute the model sensitivity indices for the updated mixed model, where $P_l$ could be KDE or another distribution, using
Theorem~\ref{thm:MFSI} and in particular \eqref{ineq:MFSI variational}.


\smallskip
\noindent
{\it 3. Increasing the complexity of the graph.} Here, we discuss how model sensitivity indices can investigate  the change in graph structure. The available data for solvation energies  in  Figure~\ref{fig:orr_s} (Left), indicate  that there might be a linear dependence between $\omega_{s1}$ and $\omega_{s2}$. We represent such a connection as a directed edge $\omega_{s1}\to\omega_{s2}$, and thus the new graph has an extra edge illustrated in orange in Figure~\ref{fig:orr_s} (Right). The CPDs of the new Bayesian Network $Q$ are given by 
\begin{equation}
    q(\omega_{s1}) := p(\omega_{s1}) = \mathcal{N}(\beta_{s1,0},\sigma_{s1}^2)
\end{equation}
\begin{equation}
    q(\omega_{s2}|\omega_{s1}) := \mathcal{N}(\omega_{s1} + \beta_{s2,0},\sigma_{s2}^2)
\end{equation}
and all the remaining ones (i.e. $x, y_1, y_2$ and all $\omega_j$ with $j\neq s_2$) are the same and given by \eqref{eq: omega dist}-\eqref{eqyi'}. The correlation parameters $\beta_{s1,0},\beta_{s2,0}$ as well as $\sigma_{s1}^2, \sigma_{s2}^2$ are learned by using the global likelihood decomposition method mentioned earlier.
\begin{figure}[ht]
\centering
\includegraphics[width=0.9\textwidth]{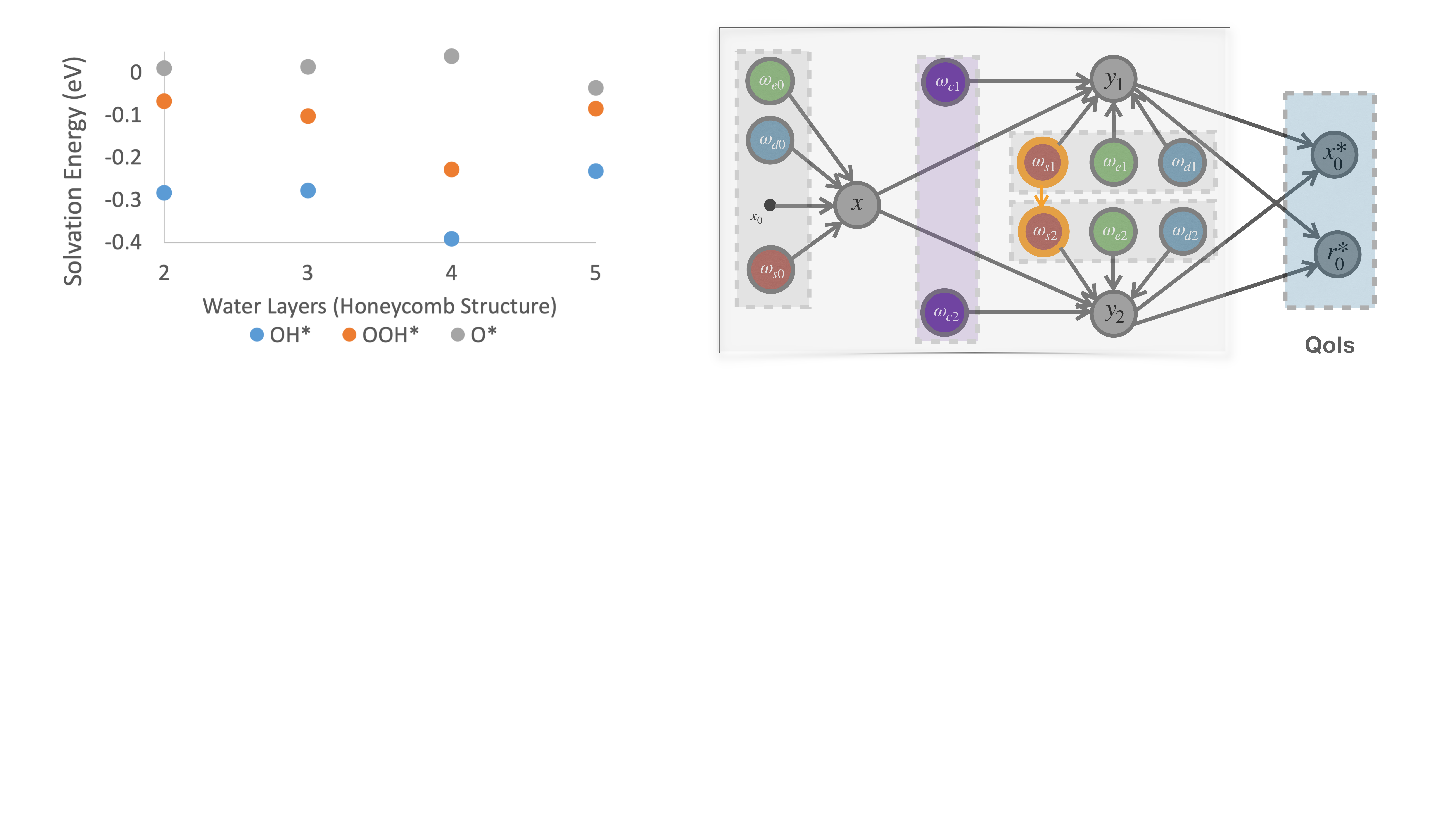}
\vspace{-3.7cm}
\caption{\small {\bf (Left)} DFT data for solvation energies $\omega_{s0}$ and $\omega_{si}$  of $x$ and $y_i$ respectively, with different water layers. {\bf (Right)} Based on the left figure, a potential correlation between $\omega_{si}$ is found. We incorporate such a correlation into the graph by adding a new edge between $\omega_{si}$ (orange edge) into the existing graph in Figure~\ref{fig: PGM} (b). The two energies  $y_1$ and $y_2$ are now not conditionally independent given $x$. However, by using  \eqref{eq:TOL:2}, the model sensitivity indices $I^{\pm}(x_{O^\ast}^P, P; \mathcal{D}^{\eta_{s2}}_{s2})$ are very small compared to the QoI $\mathbb{E}_P[x_{O^\ast}^P]$ 
(here the index in \eqref{eq:TOL:1} is normalized by the QoI) 
 implying that we can  ignore the proposed graph connection}
\label{fig:orr_s}
\end{figure}
The KL divergence between the Gaussian Bayesian networks $P$ and $Q$ of Figure~\ref{fig: PGM} (b) and Figure~\ref{fig:orr_s} respectively is given by
\begin{eqnarray}
R(Q\|P) 
&=& \int \log \frac{q(\omega_{s2}|\omega_{s1}) }{p(\omega_{s2})} q(\omega_{s2}|\omega_{s1}) q(\omega_{s1}) ds_2ds_1
\end{eqnarray}
and serves as a surrogate for the model misspecification $\eta_{s2}$. Using  gaussianity   $\eta_{s2} = 0.9173$, and by  Theorem \ref{thm:MFSI general},
$I^{\pm}(x_{O^\ast}^P, P; \mathcal{D}^{\eta_{s2}}_{s2}) = \pm 0.0928$. The latter value   is very small compared to the QoI  $\mathbb{E}_P[x_{O^\ast}^P] = 2.0434$. Thus, we may safely ignore the correlation between $\omega_{s1}$ and $\omega_{s2}$. Therefore, no further model improvement is necessary and we can retain the (simpler) baseline Bayesian network of Figure~\ref{fig: PGM} (b).

\section{Conclusions}\label{sec:conclusions}
In this paper, we developed information-theoretic, robust uncertainty quantification methods and non-parametric stress tests for Bayesian networks, which allowed us to assess the effect and the propagation through the graph of multi-sourced model uncertainties to the quantities of interest. These quantification methods also allowed us to rank these   sources  of  uncertainty  and correct the graphical  model by targeting the most influential components  with respect to  the quantities of interest.
%
However, one of the challenges  we did not discuss in depth here is the selection of the probabilistic metric or divergence $d$ in the formulation of robust uncertainty quantification,  e.g. in the definition of model uncertainty indices \eqref{eq:stresstest:intro}. In this paper we selected
the KL divergence to define  the ambiguity sets \eqref{eq:set:MFUQ:intro} 
since it allowed us to obtain easily computable and scalable model uncertainty indices.
However,  for Bayesian networks with  vastly different graphical structures e.g. an alternative model with more vertices than the baseline, the choice of KL is not suitable
due to the lack of absolute continuity between the baseline and the alternative model. In such cases, new divergences could be considered e.g. Wasserstein metrics already studied in the DRO literature \cite{MohajerinEsfahani2018,doi:10.1287/moor.2018.0936} or their  Integral Probability Metrics (IPM) generalization \cite{Muller1997};
alternatively we can consider various interpolations of divergences and IPMs studied recently in the machine learning literature such as 
\cite{Cuturi:short:2017, dupuis2019formulation, BDKPRB, KALE:Gretton:2021} and references therein.
For instance, the recently introduced $(f, \Gamma)$-divergences \cite{BDKPRB} are interpolations of $f$-divergences and IPMs  that combine  advantageous features  of both,
such as the capability to handle heavy-tailed data (property inherited from $f$-divergences) and to compare non-absolutely continuous distributions (inherited from IPMs).   
An additional issue that we touched upon here when we discussed model sensitivity indices is the need for  divergences to be able to isolate 
sources of uncertainty on localized parts of the graphical model in the spirit of ``divide and conquer". In that respect concepts of sub-additivity of divergences for PGMs can be essential as discussed in related recent literature
\cite{Daskalakis:PGM:2021GANsWC, dupuis2019formulation}.
\smallskip

\noindent{\bf Acknowledgments.}
The research of P.B. was supported by the Air Force Office of Scientific Research (AFOSR) under the grant FA-9550-18-1-0214. The research of J.F. was partially supported by the Defense Advanced Research Projects Agency (DARPA) EQUiPS program under the grant  W911NF1520122. 
The research of M. K. and L. R.-B. was partially supported by 
by the Air Force Office of Scientific Research (AFOSR) under the grant FA-9550-18-1-0214
and by 
the National Science Foundation (NSF) under
NSF TRIPODS  CISE-1934846 and  the grant DMS-2008970.

\newpage

\appendix

\section{Background on Model Uncertainty}\label{subsec:MFRMFU}

\subsection{Mathematical formulation of model uncertainty}

 We can formulate  mathematically   model uncertainty  by constructing (non-parametric) families  ${\mathcal Q}$ of  {\it alternative models} $Q$ to compare to a \textit{baseline model $P$} which is computationally tractable and inferred from data, and believed to be a good approximation for the physical model of $X$, while the ``true", intractable, partially unknown  model $Q^*$ should belong to $\mathcal{Q}$; for this reason we refer to ${\mathcal Q}$ as the \textit{ambiguity set}, typically defined as a neighborhood of models around the baseline $P$:
\begin{equation}\label{eq:ambiguity:0}
    \mathcal{Q}= \mathcal{D}^\eta=\big\{ Q: d(Q, P) \leq \eta\big\}\, ,
 \end{equation}
where  $\eta >0$ corresponds to  the size of the ambiguity set 
and  $d=d(Q, P)$ denotes a probability metric or divergence (see Figure~\ref{fig: MFUQ} (Left) for the schematic depiction where $d$ is the Kullback Leibler (KL) divergence (aka relative entropy) $R(Q\|P)$, \cite{cover2012elements}). The next  natural  mathematical goal is to assess the baseline model and understand the resulting biases for QoIs $f$ when we use $P$ for predictions instead of the true  model $Q^*\in \mathcal{Q}$.  As we see later,  the free energies $f = -\Delta G_i$ are considered as QoIs for the ORR PGM (see Section \ref{sec: ORR}).

\begin{figure}[ht]
\centering
\includegraphics[width=0.4\textwidth]{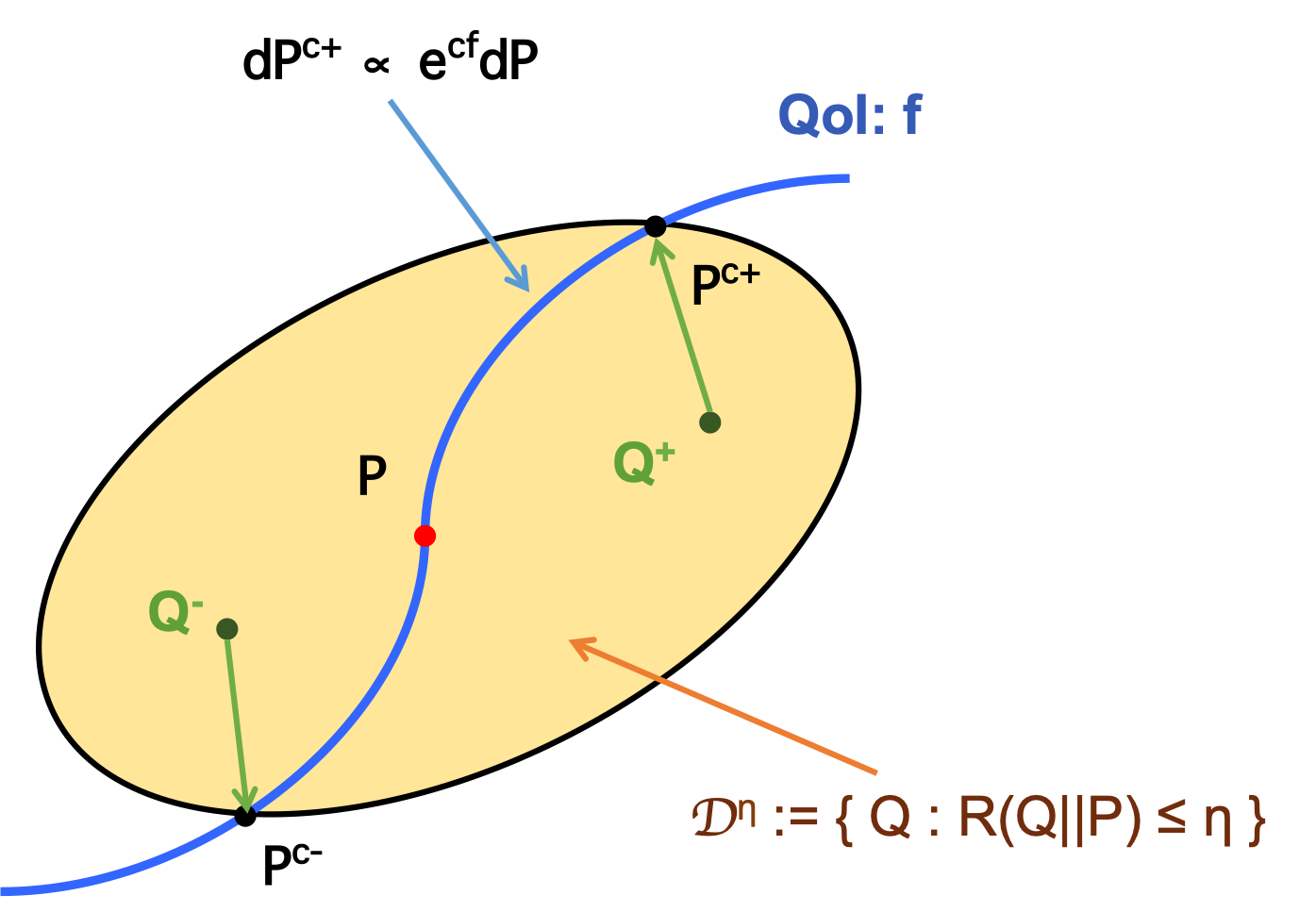}
\hspace{0.15in}
\includegraphics[width=0.4 \linewidth]{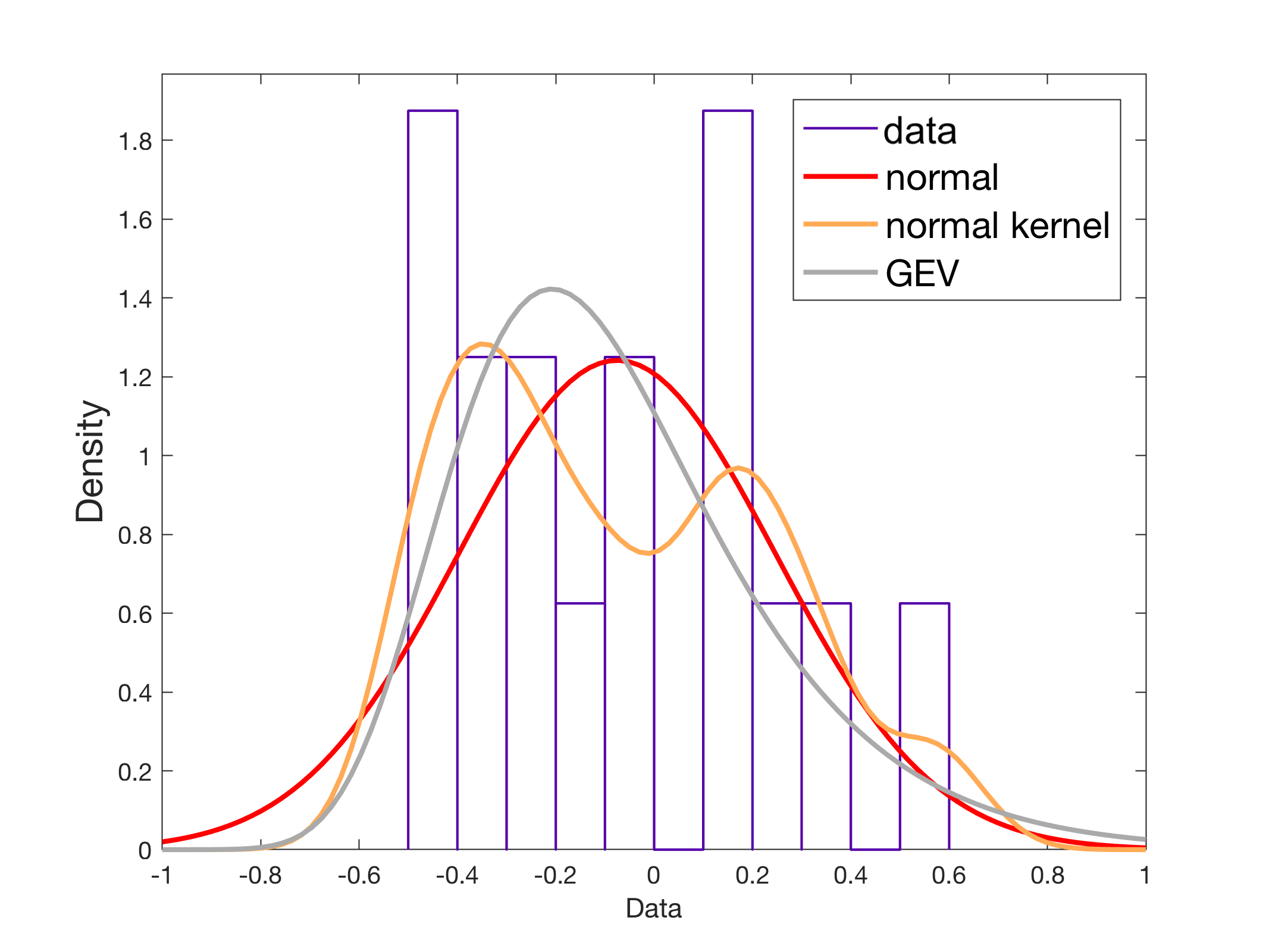}
\caption{\small {\bf (Left)} The schematic illustration of the ambiguity set (non-parametric family of models) given by \eqref{eq:ambiguity:0} with $d$ being the KL Divergence $R(Q\|P)$; the blue line represents a parametric family; $Q^\pm$ are the probability measures  that  the UQ indices/bounds $I^\pm$  with respect to QoI $f$ are attained and are provided by  \eqref{eq:optimizer} i.e. tightness of the bounds.  {\bf (Right)}  Three  probabilistic models with different CPDs for sparse data of a ORR PGM vertex $\omega_{d0}$: the red curve is used to build a baseline Gaussian model denoted by $P$, the gray curve is another parametric model (Generalized Extreme Value (GEV) distribution) which fits the data better, and the yellow curve is a non-parametric model (Kernel Density Estimation (KDE) with normal kernel). }
\label{fig: MFUQ}
\end{figure}

We define  the \textit{predictive uncertainty} (or bias)  for the QoI $f$ when using the baseline model $P$ instead of any alternative model $Q \in \mathcal{Q}$ as the  two worst case scenarios:
\begin{equation}\label{eq:Predictive:Uncertainty:0}
        I^\pm(f,P;\mathcal{Q}) := \underset{Q \in \mathcal{Q}}{\mathrm{sup/inf}}\  \{\MEANNN{Q}{f} -\MEANNN{P}{f} \}
    \end{equation}
    where $\MEANNN{Q}{f}$ denotes the expected value of the QoI $f$.  Therefore, \eqref{eq:Predictive:Uncertainty:0} provides {\em a robust performance guarantee} for the predictions of the baseline model $P$  for the QoI $f$ within
    the ambiguity set  $\mathcal{Q}$. This robust perspective for general probabilistic models $P$ is also known in Operations Research as \textit{Distributionally Robust Optimization} (DRO), e.g. 
     \cite{doi:10.1287/opre.1090.0741,doi:10.1287/opre.1090.0795,doi:10.1287/opre.2014.1314,Jiang2016,2016arXiv160402199G,2016arXiv160509349L,MohajerinEsfahani2018,doi:10.1137/16M1094725,doi:10.1287/moor.2018.0936}, where  optimal-transport (Wasserstein)  metrics were recently proposed for \eqref{eq:ambiguity:0}. Note that the predictive uncertainty represents the robustness of the model $P$ with respect to $\mathcal{Q}$, i.e. all the biases between the predictions of $f$ with $Q \in \mathcal{Q}$ and $P$ are bounded by the predictive uncertainty.
     
 \subsection{Existing results on model uncertainty}
While the definition \eqref{eq:Predictive:Uncertainty:0} 
is rather natural and  intuitive, at least  based on the model uncertainty  challenge depicted in Figure~\ref{fig: MFUQ} (Left), it is not obvious that it is  practically computable.  
However it becomes  tractable  if we use for metric $d$ in \eqref{eq:ambiguity:0}  the KL divergence
$R(Q\|P)$. Accordingly, $\eta$ is a  measure of the confidence  we put in the baseline model $P$ measured using KL divergence.
%
In recent work \cite{dupuis2011uq, dupuis2016path, gourgoulias2017biased,katsoulakis2017scalable}, it has been shown that $I^\pm(f,P;\mathcal{D}^\eta)$ (an infinite dimensional optimization problem) can be directly computable by a one dimensional optimization problem: 
\begin{equation}
\label{eq:UQ variational form}
\quad\quad    I^{\pm}(f, P; {\mathcal D}^\eta) = \pm \inf_{c>0}\Big[\frac{1}{c} \log \int e^{\pm c(f - \MEANNN{P}{f})}P(dx) + \frac{\eta}{c}\Big]=\MEANNN{Q^\pm}{f} -\MEANNN{P}{f} \, .
\end{equation}
which is derived by using the Gibbs variational principle \cite{dupuis2016path} for KL divergence. In the first equality of this formula we recognize two ingredients: $\eta$ is  model uncertainty from \eqref{eq:ambiguity:0} while the Moment Generating Function (MGF) $\int e^{\pm cf }P(dx)$ encodes the QoI $f$ at the baseline model $P$. In \cite{dupuis2016path, gourgoulias2017biased,katsoulakis2017scalable} techniques are developed to compute (exactly or approximately via asymptotics \cite{dupuis2016path}) as well as provide explicitly upper and lower bounds on $I^{\pm}(f, P; {\mathcal D}^\eta)$ in terms of concentration inequalities \cite{gourgoulias2017biased}. 
A  key point in \eqref{eq:UQ variational form} is that the parameter $\eta$ is not  necessarily small, allowing {\it global \& non-parametric} sensitivity analysis. 

\medskip

Moreover, in \cite{gourgoulias2017biased} the authors have proven that the second equality of \eqref{eq:UQ variational form} holds. In fact, this shows that $I^{\pm}(f, P; {\mathcal D}^\eta)$ is also tight, i.e when the sup and inf in \eqref{eq:Predictive:Uncertainty:0} are attained by appropriate measures $Q^\pm$. Formally, the authors have shown that there exist $0< \eta_{\pm} \le \infty$, such that for any $\eta \le \eta_{\pm}$, $Q^\pm(\cdot)=Q^\pm(\cdot\;;\pm c_{\pm})$ depend on $\eta$ and are given by
\begin{equation}
\label{eq:optimizer}
    dQ^{\pm} = \frac{e^{\pm c_\pm f}}{\MEANNN{P}{e^{\pm c_\pm f}}}\ dP
\end{equation}
where $c_\pm\equiv c_\pm(\eta)$  are the unique solutions of 
\begin{equation}
R(Q^{\pm}\|P)
=\eta.\label{eq:cond:re}%
\end{equation}

\subsection{Some fundamental  Lemmas}
In this subsection, we include Lemma~\ref{lemma:bounds} and \ref{lemma:tightness} for completeness  of the background presentation. These results were    proved  in \cite{dupuis2016path,dupuis2018sensitivity,gourgoulias2017biased}
and we present them here for the convenience of the reader.

\begin{lemma}
\label{lemma:bounds}
Let $P$ be a probability measure and let $f(X)$ be such that its MGF is finite in a neighborhood of the origin. Then for any $Q$ with $R(Q||P) < \infty$, we have
\begin{equation}\label{ineq:GO bounds}
-\inf_{c>0}\Big[\frac{1}{c} \log \MEANNN{P}{e^{- c\bar{f}(X)} } + \frac{\eta}{c}\Big]\leq \MEANNN{Q}{f(X)}-\MEANNN{P}{f(X)} \leq \inf_{c>0}\Big[\frac{1}{c} \log \MEANNN{P}{e^{c\bar{f}(X)} } + \frac{\eta}{c}\Big]
\end{equation}
\end{lemma}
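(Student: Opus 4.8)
The plan is to derive both inequalities from the single Gibbs (Donsker--Varadhan) variational inequality, specialized to exponential tilts of the centered QoI. The one ingredient I need is the elementary bound
\begin{equation*}
\MEANNN{Q}{g} - \log\MEANNN{P}{e^{g}} \;\le\; R(Q\|P),
\end{equation*}
valid for any measurable $g$ with $\MEANNN{P}{e^{g}}<\infty$ and any $Q$ with $R(Q\|P)<\infty$. First I would establish this by introducing the tilted probability measure $P^{g}$ with $dP^{g}/dP = e^{g}/\MEANNN{P}{e^{g}}$ and computing
\begin{equation*}
R(Q\|P)-R(Q\|P^{g})=\MEANNN{Q}{\log\tfrac{dP^{g}}{dP}}=\MEANNN{Q}{g}-\log\MEANNN{P}{e^{g}}.
\end{equation*}
Since relative entropy is nonnegative, $R(Q\|P^{g})\ge 0$, and the claimed inequality follows at once.

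Next I would specialize to $g=c\bar{f}(X)$ for $c>0$, where the MGF hypothesis guarantees $\MEANNN{P}{e^{c\bar{f}(X)}}<\infty$ for $c$ in a neighborhood of the origin (values of $c$ for which the MGF is infinite contribute $+\infty$ and are automatically excluded from the infima). Because $\MEANNN{Q}{c\bar{f}}=c\bigl(\MEANNN{Q}{f}-\MEANNN{P}{f}\bigr)$, dividing the variational inequality by $c>0$ gives
\begin{equation*}
\MEANNN{Q}{f}-\MEANNN{P}{f}\;\le\; \frac{1}{c}\log\MEANNN{P}{e^{c\bar{f}(X)}}+\frac{R(Q\|P)}{c}.
\end{equation*}
Taking the infimum over admissible $c>0$, and using $R(Q\|P)\le\eta$ together with the monotonicity of the right-hand side in $\eta$, yields the upper bound of \eqref{ineq:GO bounds}. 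The lower bound follows from the same argument applied to $-f$, i.e. with $g=-c\bar{f}$: this produces $\MEANNN{P}{f}-\MEANNN{Q}{f}\le \inf_{c>0}\bigl[\frac{1}{c}\log\MEANNN{P}{e^{-c\bar{f}(X)}}+\eta/c\bigr]$, which after negation is exactly the left inequality of \eqref{ineq:GO bounds}.

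I do not expect a genuine obstacle here, as this is the classical Gibbs inequality; the only points requiring care are bookkeeping ones. I would make sure the tilted measure $P^{g}$ is well defined (finiteness of the normalizer, guaranteed by the MGF assumption near $0$), that each infimum is effectively restricted to the set of $c$ where $\MEANNN{P}{e^{\pm c\bar{f}(X)}}<\infty$ so the logarithms are finite, and that the step replacing $R(Q\|P)$ by the (possibly larger) $\eta$ is legitimate. The latter holds because for each fixed $c>0$ the bracketed expression is nondecreasing in $\eta$, hence so is its infimum over $c$.
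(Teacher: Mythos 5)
Your proposal is correct and follows essentially the same route as the paper: both derive the two-sided bound from the Gibbs (Donsker--Varadhan) variational principle applied to $g=\pm c\bar f$, divide by $c$, and optimize over $c>0$. The only difference is that the paper cites the variational identity $\log\MEANNN{P}{e^{g}}=\sup_{Q\ll P}\{\MEANNN{Q}{g}-R(Q\|P)\}$ as a known fact, whereas you supply the standard proof of the inequality direction via the tilted measure $P^{g}$ and nonnegativity of relative entropy, and you are slightly more careful about the passage from $R(Q\|P)$ to $\eta$.
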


\noindent
{\it Proof of Lemma \ref{lemma:bounds}.} For any general QoI $f(X)$ which has finite moment generating function (MGF), $\MEANNN{P}{e^{\pm c\bar{f}(X)} }:=\MEANNN{P}{e^{c(f(X) - \MEANNN{P}{f(X)})}}$, in a neighborhood of the origin, there is a known fact in statistics and large deviation theory \cite{dupuis2011weak,dupuis2016path} that
\begin{equation}
    \log \MEANNN{P}{e^{f(X)}} = \sup_{Q \ll P} \{ \MEANNN{Q}{f(X)} - R(Q||P)\}\ .
\end{equation}
Changing $f(X)$ to $c\bar{f}(X)=c(f(X) - \MEANNN{P}{f(X)})$, we get
\begin{equation}
    \MEANNN{P}{e^{\pm c\bar{f}(X)} } = \sup_{Q \ll P} \{ c(\MEANNN{Q}{f(X)}-\MEANNN{P}{f(X)}) - R(Q||P)\}
\end{equation}
which gives us the following upper and lower bounds with $c>0$,
\begin{equation}
   - \inf_{c>0}\Big[\frac{1}{c} \log \MEANNN{P}{e^{-c\bar{f}(X)} } + \frac{\eta}{c}\Big] \leq \MEANNN{Q}{f(X)}-\MEANNN{P}{f(X)} \leq \inf_{c>0}\Big[\frac{1}{c} \log \MEANNN{P}{e^{c\bar{f}(X)} } + \frac{\eta}{c}\Big]
\end{equation}
where $\eta = R(Q||P)$.

\begin{lemma}
\label{lemma:tightness}
Suppose $(d_-,d_+)$ is the largest open set such that cumulant generating function $\Lambda(c)=\log \MEANNN{P}{e^{c\bar{f}(X)}} <\infty$ for all $c \in (d_-,d_+)$.
\begin{enumerate}
\item For any $\eta \ge0$ the optimization problems
\begin{equation*}
\inf_{c>0} \frac{\Lambda(\pm c)+\eta}{c}
\end{equation*}
have unique minimizers $c_{\pm}\in[0, \pm d_\pm]$.   
Let $\eta_\pm$ be defined by 
\[
\eta_\pm = \lim_{c \nearrow \pm d_\pm}   \pm c \Lambda'( \pm c) - \Lambda(\pm c) \,.
\]
Then 
the minimizers $c_{\pm}=c_{\pm}(\eta)$
are finite for $\eta < \eta_{\pm}$ and $c_{\pm}(\eta) = \pm d_\pm$ if $\eta \geq \eta_{\pm}$.

\item If $c_{\pm}(\eta) < \pm d_{\pm}$ then
\begin{equation}
\frac{\Lambda(\pm c_{\pm})+\eta}{c_{\pm}}=
\inf_{c>0} \frac{\Lambda(\pm c)+\eta}{c}
\,=\,\pm \Lambda^{\prime}%
(\pm c_{\pm})=\pm \left( \mathbb{E}_{P_{\pm c_{\pm}}}[f] - \mathbb{E}_{P}[f]\right) \,,\label{eq:Bpmeqn}%
\end{equation}
where $c_{\pm}(\eta)$ is strictly increasing in $\eta$ and is determined by the
equation
\begin{equation}
R\left(  {P_{\pm c_{\pm}}}{\,||\,}{P}\right)
=\eta\,.%
\end{equation}

\item $\eta_{\pm}$ is finite in two distinct cases.
\begin{enumerate}
\item If $\pm d_\pm < \infty$ (in which case $g$ must be unbounded above/below) $\eta_\pm$ is finite if $\lim_{c \to \pm d_\pm} \Lambda(\pm c) := \Lambda( d_\pm) < \infty$ and  $\lim_{c \to \pm d_\pm} \pm \Lambda'(\pm c) := \pm \Lambda'(d_\pm) < \infty$, and  for $\eta \ge \eta_{\pm}$ we have 
\begin{equation}
\label{eq:dplus}
\inf_{c>0} \frac{\Lambda(\pm c)+\eta}{c} = \frac{\Lambda( d_{\pm})+\eta}{ \pm d_\pm}= \pm\left( \mathbb{E}_{P_{ d_{\pm}}}[f]-\mathbb{E}_{P}[f]\right) +  \frac{\eta - \eta_\pm}{\pm d_\pm} \,.
\end{equation}

\item If $\pm d_\pm = \infty$ and $\eta_\pm$ is finite then $f$ is $P$-a.s. bounded above/below and for $\eta \ge \eta_{\pm}$ we have 
\begin{equation}
\label{eq:gplus}
\inf_{c >0} \frac{\Lambda(\pm c)+\eta}{c}
=\mathrm{ess\,sup}%
_{x\in\mathcal{X}}\{\pm (f(x)- \MEANNN{P}{f(X)})\}\,.
\end{equation}
\end{enumerate}
\end{enumerate}
\end{lemma}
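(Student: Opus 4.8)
The plan is to treat the ``$+$'' branch, i.e. the infimum of $g(c):=(\Lambda(c)+\eta)/c$ over $c>0$; the ``$-$'' branch then follows verbatim after replacing $f$ by $-f$, which turns $\Lambda(-c)$ into the cumulant generating function of $-\bar f$ and swaps the roles of $d_+$ and $-d_-$ (one checks the stated $\eta_-$ is exactly the ``$+$'' threshold for this reflected problem). First I would record the structural properties of $\Lambda(c)=\log\MEANNN{P}{e^{c\bar f(X)}}$ on $(d_-,d_+)$: it satisfies $\Lambda(0)=0$ and $\Lambda'(0)=\MEANNN{P}{\bar f}=0$, it is real-analytic on the open interval, and differentiating under the integral gives $\Lambda'(c)=\MEANNN{P_c}{\bar f}$ and $\Lambda''(c)=\VAR_{P_c}(\bar f)\ge 0$, where $P_c$ is the exponential tilt $dP_c/dP=e^{c\bar f}/\MEANNN{P}{e^{c\bar f}}$. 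Away from the trivial degenerate case (where $\bar f=0$ $P$-a.s. and all indices vanish) we have $\Lambda''(c)>0$, so $\Lambda$ is strictly convex. These are the only analytic inputs; the rest is convex calculus.

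For Part 1 I would introduce $H(c):=c\Lambda'(c)-\Lambda(c)$ and note that $g'(c)=c^{-2}(H(c)-\eta)$, so the sign of $g'$ is governed by $H(c)-\eta$. Since $H(0)=0$ and $H'(c)=c\Lambda''(c)>0$ for $c>0$, the map $H$ is a strictly increasing bijection of $(0,d_+)$ onto $(0,\eta_+)$, where $\eta_+=\lim_{c\nearrow d_+}H(c)$ is precisely the quantity in the statement. Hence if $\eta<\eta_+$ there is a unique interior root $c_+\in(0,d_+)$ of $H(c_+)=\eta$, giving the unique minimizer; and if $\eta\ge\eta_+$ then $H(c)<\eta$ throughout, so $g$ is strictly decreasing and the minimizer sits at the boundary $c_+=d_+$. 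For Part 2, at an interior critical point $H(c_+)=\eta$ reads $\Lambda(c_+)+\eta=c_+\Lambda'(c_+)$, so dividing by $c_+$ gives $g(c_+)=\Lambda'(c_+)$, the first equalities of \eqref{eq:Bpmeqn}; the identity $\Lambda'(c_+)=\MEANNN{P_{c_+}}{f}-\MEANNN{P}{f}$ is just $\Lambda'(c)=\MEANNN{P_c}{\bar f}$ from Step 1. The relative-entropy characterization comes from the direct computation $R(P_c\|P)=\MEANNN{P_c}{c\bar f-\Lambda(c)}=c\Lambda'(c)-\Lambda(c)=H(c)$, so $R(P_{c_+}\|P)=\eta$ coincides with $H(c_+)=\eta$, and $c_+(\eta)=H^{-1}(\eta)$ is strictly increasing because $H$ is.

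For Part 3 I would analyze the boundary regime $\eta\ge\eta_+$ in the two cases. When $d_+<\infty$, finiteness of $\Lambda(d_+)$ and $\Lambda'(d_+)$ makes $g$ continuous up to $c=d_+$, so the (decreasing) infimum equals $g(d_+)=(\Lambda(d_+)+\eta)/d_+$; rewriting $\Lambda(d_+)=d_+\Lambda'(d_+)-\eta_+$ and using $\Lambda'(d_+)=\MEANNN{P_{d_+}}{f}-\MEANNN{P}{f}$ yields \eqref{eq:dplus}. When $d_+=\infty$, I would invoke the standard asymptotic $\lim_{c\to\infty}\Lambda(c)/c=\operatorname{ess\,sup}_x\bar f(x)$ together with $\eta/c\to0$, so that $\inf_{c>0}g(c)=\lim_{c\to\infty}g(c)=\operatorname{ess\,sup}_x\{f(x)-\MEANNN{P}{f}\}$, which is \eqref{eq:gplus}.

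The main obstacle is the remaining assertion in case (b): that $d_+=\infty$ together with $\eta_+<\infty$ forces $f$ to be bounded above $P$-a.s. Here the convex calculus of Parts 1--2 is not enough, and I would argue by concentration of the tilted family: $\eta_+=\sup_{c}R(P_c\|P)<\infty$ confines $P_c$ to a relative-entropy ball around $P$, so by lower semicontinuity of $R(\cdot\|P)$ the $P_c$ admit a weak limit $P_\infty$ with $R(P_\infty\|P)\le\eta_+<\infty$, hence $P_\infty\ll P$. On the other hand, as $c\to\infty$ the tilt $P_c$ concentrates on $\{\bar f\text{ near its essential supremum}\}$, so absolute continuity of $P_\infty$ forces a $P$-atom at the maximum, which is impossible unless $\operatorname{ess\,sup}\bar f<\infty$. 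This limiting/absolute-continuity step is where the genuine care lies, and I would lean on the corresponding arguments in the cited references to make the weak-limit and atom claims rigorous.
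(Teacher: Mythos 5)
Your treatment of Parts 1, 2, and 3(a) is correct and is essentially the paper's own argument: both hinge on the sign analysis of $B'(c;\eta)=c^{-2}\bigl(H(c)-\eta\bigr)$ with $H(c):=c\Lambda'(c)-\Lambda(c)$ strictly increasing from $0$ to $\eta_+$ (your $H$ is exactly the function the paper tracks in its case analysis), on the identity $R(P_c\|P)=H(c)$ for the exponential tilt, and on continuity of $\Lambda$ and $\Lambda'$ up to the boundary $d_+$ (which the paper justifies by a monotone/dominated convergence argument you use implicitly) to get \eqref{eq:dplus}. The reduction of the ``$-$'' branch to the ``$+$'' branch by $f\mapsto -f$ is also the paper's reduction.

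The genuine gap is in your Part 3(b), the assertion that $d_+=\infty$ together with $\eta_+<\infty$ forces $f$ to be $P$-a.s.\ bounded above. Your weak-limit sketch does not go through as stated: the state space $\mathcal{X}$ carries no topology in this setting, so ``weak limit'' and ``atom at the maximum'' are not available notions; moreover, when $\mathrm{ess\,sup}\,\bar f=+\infty$ there is no maximum point at which an atom could sit --- the mass of $P_c$ escapes along the sets $\{\bar f>M\}$, and lower semicontinuity of $R(\cdot\|P)$ gives no limit point by itself (one would need compactness of entropy sublevel sets, which again presupposes a Polish-space framework). The paper avoids all of this: the claim is the contrapositive of its elementary case analysis. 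If $\mathrm{ess\,sup}\,\bar f=\infty$ and $d_+=\infty$, then $\Lambda'(c)=\MEANNN{P_c}{\bar f}\nearrow\infty$, and since $H(c)=\int_0^c\bigl(\Lambda'(c)-\Lambda'(s)\bigr)\,ds\ge c_0\bigl(\Lambda'(c)-\Lambda'(c_0)\bigr)$ for any fixed $c_0$, one gets $H(c)\to\infty$, i.e.\ $\eta_+=\infty$ --- using only the monotonicity facts you already established in Part 1. If you insist on a measure-theoretic route, your idea can be repaired without topology: $\sup_{c}R(P_c\|P)\le\eta_+<\infty$ gives uniform integrability of the densities $dP_c/dP$ (de la Vall\'ee Poussin with $x\log x$), hence a $\delta>0$ with $P(A)<\delta\Rightarrow\sup_c P_c(A)<1/2$; but the Chernoff-type bound $P_c(\bar f\le M)\le e^{-c(M'-M)}/P(\bar f\ge M')$ shows $P_c(\bar f>M)\to1$ for every fixed $M$ while $P(\bar f>M)\to0$ as $M\to\infty$, a contradiction. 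Either fix closes the gap; the paper's is the shorter one and costs nothing beyond what your Part 1 already contains.
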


\noindent
{\it Proof of the Lemma \ref{lemma:tightness}.}
For notational ease, in the proof, let 
us set $\Lambda(c)= \log \MEANNN{P}{e^{c\bar{f}(X)}}$ so that the UQ indices is 
$$   I^\pm(f(X),P;\mathcal{D}^\eta) = \inf _{c>0}\left\{ \frac{ \Lambda(\pm c) +   \eta}{c} \,  \right\}$$
Note that $\Lambda(c)$ is convex function which we assume to 
be finite on an interval $(d_-, d_+)$ with $d_- < 0 <d+$. On that interval $\Lambda(c)$ is infinitely differentiable and strictly convex.   Since we centered the QoI we have $\Lambda(0)=\Lambda'(0)=0$ and $\Lambda''(0)=\VAR_P(f)$. 

\noindent
First note that it is enough to prove the result for $\Lambda(c)$ since the result for $\Lambda(-c)$ is obtained by replacing $f$ by $-f$. We also use the notation $\tilde{f}_+=\mathrm{ess\,sup}\{f(x)-\MEANNN{P}{f(X)}\}$. 

\noindent
We first claim that automatically%
\[
\Lambda(d_{+})=\lim_{c\nearrow d_{+}}\Lambda(c),
\]
where $\Lambda(d_{+})$ may be infinite. By monotone convergence
\[
\mathbb{E}_{P}[1_{\{ \tilde{f} \geq0\}}e^{c \tilde{f}}]\nearrow\mathbb{E}_{P}%
[1_{\{\tilde{f}\geq0\}}e^{d_{+} \tilde{f}}]
\]
as $c\nearrow d_{+}$. By dominated convergence%
\[
\mathbb{E}_{P}[1_{\{\tilde{f}<0\}}e^{c \tilde{f}}]\searrow\mathbb{E}_{P}[1_{\{\tilde{f}<0\}}%
e^{d_{+} \tilde{f}}]
\]%
as $c\nearrow d_{+}$, and the claim follows. A very similar argument shows that $\Lambda^{\prime
}(c)$ also has a limit as $c\nearrow d_{+}$.

\noindent
Let
\begin{equation}\label{eq:B:Appendix}
B(c;\eta)=\frac{\Lambda(c)+\eta}{c}.
\end{equation}
We divide into cases. 

\begin{enumerate}
\item $\tilde{f}_{+}<\infty$. In this case $\Lambda^{\prime}(c)\nearrow \tilde{f}_{+}<\infty$
as $ c \rightarrow\infty$ and $\Lambda^{\prime}(0)<\tilde{f}_{+}$. If $\eta=0$ then the
infimum is $\Lambda^{\prime}(0)$ and attained at $c_{+}=0$ since $\Lambda(c)/c$ is an increasing function. If $\eta>0$ then
\[
B^{\prime}(c;\eta)=\frac{c \Lambda^{\prime}(c)-\Lambda(c)-\eta}{c^{2}}%
\]
for $c \geq0$. The function $c  \Lambda^{\prime}(c )-\Lambda(c )$ strictly
increases from $0$ at $c =0$ to some limit $\eta_{+}>0$ at $c =\infty$,
and the minimizer is at the unique finite root of $c  \Lambda^{\prime}%
(c )-\Lambda(c )=\eta$ for $\eta<\eta_{+}$ and $c _{+}=\infty$ for $\eta\geq \eta_{+}$.

\item $\tilde{f}_{+}=\infty$. In this case there are two subcases.

\begin{enumerate}
\item $d_{+}=\infty$. In this case since $\tilde{f}_{+}=\infty$ we have $\Lambda^{\prime
}(c )\nearrow\infty$ as $c \rightarrow\infty$ and $c  \Lambda^{\prime
}(c )-\Lambda(c )\rightarrow\infty$ as $c \rightarrow\infty$. Since  
$0\Lambda^{\prime}(0)-\Lambda(0)=0$, in all cases of $\eta\geq0$ there is a unique root
to $c  \Lambda^{\prime}(c )-\Lambda(c )=\eta$ and hence a unique minimizer.

\item $d_{+}<\infty$. We know that $\Lambda^{\prime}(c )$ converges as
$c \nearrow d_{+}$ to a well defined left hand limit which we call
$\Lambda^{\prime}(d_{+})$ (note that this value could be $\infty$). Thus we have
that $c  \Lambda^{\prime}(c )-\Lambda(c )$ ranges from $0$ at $c =0$ to
$\eta_{+}=d_{+}\Lambda^{\prime}(d_{+})-\Lambda(d_{+})$. For $\eta\in\lbrack0,\eta_{+})$ there is a
unique minimizer in $[0,d_{+})$. For $\eta\geq \eta_{+}$ the unique minimizer is at
$c _{+}=d_{+}$.
\end{enumerate}
\end{enumerate}

\noindent
 To conclude the proof we note that if 
 $c _+ < d_+$ then an easy computation shows that 
 $$c _+ \Lambda'(c _+)- \Lambda(c _+) = R(P_{c _+}\,||\,P)=\eta
\,, $$
 and thus 
 \[
 B(c _+, \eta) = \Lambda'(c _+)=\mathbb{E}_{P_{c_+}}[f] - \MEANNN{P}{f(X)}
 \]
which proves \eqref{eq:Bpmeqn} and \eqref{eq:cond:re}.  Finally if $d_+=\infty$ and $f$ is $P$-a.s. bounded above then the infimum is equal to $\lim_{c  \to \infty}\frac{\Lambda(c )}{c }$ and this establishes \eqref{eq:gplus}.
If $d_+ < \infty$ and $\eta_+ < \infty$ then the bound takes the form \eqref{eq:dplus}.

\section{A simple example  for Bayesian networks}
\begin{expl}\label{ex:gBayesian network}
 In this example, we focus on the construction of the graph structure and CPDs of the optimal distributions provided by Theorem~\ref{thm:MFUQ PGM} (b) following the strategy of its proof. Note that in the next subsection by assuming
that each $X_i$ is linear Gaussian of its parents, we also compute the model uncertainty indices given by \eqref{eq:MFUQ PGM} in Theorem~\ref{thm:MFUQ PGM} (a). Let us consider a Bayesian network as shown in Figure~\ref{fig: MFUQ optimizer} (a), with density given by
\begin{equation}\label{eq:Ex:Thm2.1}
    p(x) = p(x_1)p(x_2)p(x_3|x_2,x_1)p(x_4)p(x_5|x_3)p(x_6|x_4,x_3)p(x_7|x_6,x_5)p(x_8|x_6)
\end{equation}
For a QoI $f(X_6)$, 
the optimizers in Theorem~\ref{thm:MFUQ PGM} (b) are obtained when the CPDs of $X_5$, $X_7$ and $X_8$ are the same with the corresponding CPDs of $P$ as these vertices are not ancestors of $X_6$ while
\begin{eqnarray}\label{ex161}
    q^\pm (x_6|x_{\pi_6^{Q^\pm}}) &=& \frac{e^{\pm c_\pm f(x_{6})}}{\MEANNN{P_{6|\{4,3\}}}{e^{\pm c_\pm f(X_6)}}} \cdot p(x_6|x_4,x_3)
\end{eqnarray}
where $\pi_6^{Q^\pm} \equiv \pi_6^P = \{4,3\}$,  then for $i \in \rho_6 = \{1,2,3,4\}$
\begin{eqnarray}
     q^\pm(x_{4}|x_{\pi_4^{Q^\pm}})  &=& \frac{\MEANNN{P_{6|\{4,3\}}}{e^{\pm c_\pm f(X_6)}}}{\MEANNN{P_{4}}{\MEANNN{P_{6|\{4,3\}}}{e^{\pm c_\pm f(X_6)}}}}  p(x_4)
\end{eqnarray}
since both normalization factors on the numerator and denominator depend on $X_{\pi_6}  = \{X_4, X_3\}$, so in general, we have $\pi_4^{Q^\pm} = \pi_4^P \cup \{3\} = \{3\}$, i.e., there is a new connection $X_3 \to X_4$ in $Q^\pm$, and 
\begin{eqnarray}\label{ex162}
     q^\pm(x_{3}|x_{\pi_3^{Q^\pm}})  &=& \frac{\MEANNN{P_{4}}{\MEANNN{P_{6|\{4,3\}}}{e^{\pm c_\pm f(X_6)}}}}{\MEANNN{P_{3|\{2,1\}}}{\MEANNN{P_{4}}{\MEANNN{P_{6|\{4,3\}}}{e^{\pm c_\pm f(X_6)}}}}}  p(x_{3}|x_2,x_1)
\end{eqnarray}
where $\pi_3^{Q^\pm} \equiv \pi_3^P = \{2,1\}$ since the normalization factors do not contain other variables. We can similarly do the same for $X_2$ and $X_1$ to get the entire structure of $Q^\pm$ which has another new connection $X_1 \to X_2$, and the results are shown in Figure~\ref{fig: MFUQ optimizer} (b).

For $A=\{3,6,7\}$, we consider a QoI  $f(X_A)=f(X_3,X_6, X_7)$ and by Theorem \ref{thm:MFUQ PGM} (a), the following holds:
\begin{eqnarray}
    I^{\pm}(f(X_3,X_6, X_7), P; \mathcal{D}^\eta) &=&   \pm \inf_{c>0}\Big[\frac{1}{c} \log \MEANNN{P_{A}}{ e^{ \pm c\bar{f}(X_A)} } + \frac{\eta}{c}\Big]\nonumber\\
    &=& \MEANNN{Q^\pm}{f(X_A)} -\MEANNN{P}{f(X_A)}
\end{eqnarray}
where $Q^\pm$ are the optimizers with CPDs given by \eqref{exnode8}-\eqref{exgen*} and 
\[
\MEANNN{P_{A}}{ e^{ \pm c\bar{f}(X_A)}}= \int e^{ \pm c\bar{f}(x_3,x_6,x_7)} \prod_{i=1}^{7}p(x_i|x_{\pi_{i}})dx_i
\]
We recall  \eqref{eq:MFUQ optimizer:1}  of Theorem \ref{thm:MFUQ PGM} (b), and we obtain the CPDs of $Q^{\pm}$ and the new parents of each vertex as follows:
\begin{equation}\label{exnode8}
    q^\pm (x_8|x_{\pi_8^{Q^\pm}}) \equiv p(x_8|x_{\pi_8^{Q^\pm}})\equiv p(x_8|x_6)
\end{equation}
\begin{eqnarray}
    q^\pm (x_7|x_{\pi_7^{Q^\pm}}) &=& \frac{e^{\pm c_\pm f(x_7,x_6,x_3)}}{\MEANNN{p_{7|\{6,5\}}}{e^{\pm c_\pm f(X_7,X_6,X_3)}}} \cdot p(x_7|x_6,x_5)
\end{eqnarray}
with $\pi_7^P\subset\pi_7^{Q^\pm}=\pi_7^P\cup\{3\}=\{6,5,3\}$. Let $\{l_1,\dots,l_6\}\equiv \rho_{3}^{P}\cup\rho_{6}^{P}\cup\rho_{7}^{P}\cup \{3,6\}=\{1,2,3,4,5,6\}$. We start with $X_6$ as it is indexed by the $\max \{l_j:j\in 1,\dots,6\}$
\begin{eqnarray}
    q^\pm (x_6|x_{\pi_6^{Q^\pm}}) &=&\frac{\mathbb{E}_{P_{7|\pi_7}}[e^{\pm c_\pm f(X_{A})}]}{\mathbb{E}_{P_{6|\pi_6},P_{7|\pi_7}}[e^{\pm c_\pm f(X_7,X_6,X_3)}]} \cdot p(x_6|x_4,x_3)
\end{eqnarray}
with $ \pi_6^P\subset\pi_6^{Q^\pm}\subset\pi_6^P\cup\{5\}=\{3,4,5\}$. Similarly, the CPD of $X_1,\cdots, X_5$ are given by 
\begin{align}
     q^\pm(x_{5}|x_{\pi_5^{Q^\pm}})  &= \frac{\mathbb{E}_{P_{6|\pi_6},P_{7|\pi_7}}[e^{\pm c_\pm f(X_{A})}]}{\mathbb{E}_{P_{5|\pi_{5}},P_{6|\pi_6},P_{7|\pi_7}}[e^{\pm c_\pm f(X_7,X_6,X_3)}]} \cdot p(x_5|x_3),\\ 
     q^\pm(x_{4}|x_{\pi_4^{Q^\pm}})  &= \frac{\mathbb{E}_{P_{5|\pi_{5}},P_{6|\pi_6},P_{7|\pi_7}}[e^{\pm c_\pm f(X_{A})}]}{\mathbb{E}_{P_{4|\pi_{4}},P_{5|\pi_{5}},P_{6|\pi_6},P_{7|\pi_7}}[e^{\pm c_\pm f(X_7,X_6,X_3)}]} \cdot p(x_4),\\
     q^\pm(x_{3}|x_{\pi_3^{q^\pm}})  &= \frac{\mathbb{E}_{P_{4|\pi_{4}},\cdots,P_{7|\pi_7}}[e^{\pm c_\pm f(X_{A})}]}{\mathbb{E}_{P_{3|\pi_{3}},\cdots,P_{7|\pi_7}}[e^{\pm c_\pm f(X_7,X_6,X_3)}]} \cdot p(x_3|x_2,x_1),  \\
     q^\pm(x_{2}|x_{\pi_2^{q^\pm}})  &= \frac{\mathbb{E}_{P_{3|\pi_{3}},\cdots,P_{7|\pi_7}}[e^{\pm c_\pm f(X_{A})}]}{\mathbb{E}_{P_{2|\pi_{2}},\cdots,P_{7|\pi_7}}[e^{\pm c_\pm f(X_7,X_6,X_3)}]} \cdot p(x_2|x_1), \\
     q^\pm(x_{1}|x_{\pi_1^{q^\pm}})  &= \frac{\mathbb{E}_{P_{2|\pi_{2}},\cdots,P_{7|\pi_7}}[e^{\pm c_\pm f(X_{A})}]}{\mathbb{E}_{P_{1|\pi_{1}},\cdots,P_{7|\pi_7}}[e^{\pm c_\pm f(X_7,X_6,X_3)}]} \cdot p(x_1),  \label{exgen*}
\end{align}
where the expectations involved in the above formulas are given by \eqref{eq:notation:conditionals:ancestors}. The corresponding structures are:
\begin{align}
&\qquad \pi_5^P\subset\pi_5^{Q^\pm}\subset\pi_5^P\cup\{4\}=\{3,4\}\\
&\qquad \pi_4^P\subset\pi_4^{Q^\pm}\subset\pi_4^P\cup\{3\}=\{3\}\\
&\qquad \pi_3^P\subset\pi_3^{Q^\pm}\subset\pi_3^P=\{1,2\}\\
&\qquad \pi_2^P\subset\pi_2^{Q^\pm}\subset\pi_2^P\cup\{1\}=\{1\}\\
&\qquad \pi_1^{Q^\pm}=\pi_1^P=\emptyset.
\end{align}
As a result, the structure of the associated graph to $Q^{\pm}$ may change and in particular, the vertices-with potentially extra parents-are $X_2,X_4, X_5, X_6$ and $X_7$ as illustrated in Figure \ref{fig: MFUQ optimizer} (c).
\end{expl}

\section{A simple example for  Gaussian Bayesian networks}\label{sec:GBNsup}

\begin{expl} [Continuation of  Example~\ref{ex:gBayesian network}]\label{ex:cont} We assume that CPDs of Example~\ref{ex:gBayesian network} with graph structure as in Figure~\ref{fig: MFUQ optimizer}, (a) are given by:
\begin{equation*}
\begin{aligned}[c]
p(x_8|x_6)&=\mathcal{N}(\beta_{80}+\beta_{86}x_6, \sigma_8^2)\\
p(x_7|x_6,x_5)&=\mathcal{N}(\beta_{70}+\beta_{76}x_6+\beta_{75}x_5, \sigma_7^2)\\
p(x_6|x_4,x_3)&=\mathcal{N}(\beta_{60}+\beta_{64}x_4+\beta_{63}x_3, \sigma_6^2)\\
p(x_5|x_3)&=\mathcal{N}(\beta_{50}+\beta_{53}x_3, \sigma_5^2)
\end{aligned}
\quad 
\begin{aligned}[c]
p(x_4)&=\mathcal{N}(\beta_{40}, \sigma_4^2)\\
p(x_3|x_2,x_1)&=\mathcal{N}(\beta_{30}+\beta_{32}x_2+\beta_{31}x_1, \sigma_3^2)\\
p(x_2)&=\mathcal{N}(\beta_{20}, \sigma_2^2)\\
p(x_1)&=\mathcal{N}(\beta_{10}, \sigma_1^2)
\end{aligned}
 \end{equation*}  
\noindent
Then, $p(x) = \mathcal{N}(\mu,\mathcal{C})$, \cite[Theorem 7.3]{koller2009probabilistic}. As before, the QoI depends on $X_6$ and for simplicity, we consider $f(X_6)=X_6$. For  $c>0$, we compute the MGF of $f$ with respect to $P$: 
\begin{eqnarray*}
\MEANNN{P}{e^{\pm c\bar{f}(X_6)}}=\exp\left\{ \frac{c^2}{2}\left(\sigma_6^2+\beta_{64}^2\sigma_4^2+\beta_{63}^2\sigma_{3}^2+\beta_{63}^2\beta_{32}^2\sigma_{2}^2+\beta_{63}^2\beta_{31}^2\sigma_{1}^2\right)\right\}\equiv e^{ \frac{c^2}{2}\mathcal{C}_{66}}
\end{eqnarray*}
where $\mathcal{C}_{66}=\sigma_6^2+\beta_{64}^2\sigma_4^2+\beta_{63}^2\sigma_{3}^2+\beta_{63}^2\beta_{32}^2\sigma_{2}^2+\beta_{63}^2\beta_{31}^2\sigma_{1}^2$ since $ \MEANNN{P}{X_6}=\beta_{60}+\beta_{64}\beta_{40}+\beta_{63}\beta_{30}+\beta_{63}\beta_{32}\beta_{20}+\beta_{63}\beta_{31}\beta_{10}$. We minimize  \eqref{eq:MFUQ PGM} in Theorem \ref{thm:MFUQ PGM} with respect to $c$:
\begin{eqnarray*}
I^{\pm}(f(X_6), P; \mathcal{D}^\eta)&=&\pm \inf_{c>0}\Big[\frac{1}{c} \log \MEANNN{P}{e^{\pm c\bar{f}(X_6)}}+ \frac{\eta}{c}\Big]=\pm \inf_{c>0}\Big[c\,\mathcal{C}_{66}+ \frac{\eta}{c}\Big]
\end{eqnarray*}
which in turn gives us the optimizer $c=\sqrt{\frac{\eta}{\mathcal{C}_{66}}}$, and thus
\begin{equation}
\label{eq:GBayesian network MFUQ ex}
 I^{\pm}(f(X_6), P; \mathcal{D}^\eta) = \pm \sqrt{2\mathcal{C}_{66}\eta} = \pm \sqrt{2(\sigma_6^2+\beta_{64}^2\sigma_4^2+\beta_{63}^2\sigma_{3}^2+\beta_{63}^2\beta_{32}^2\sigma_{2}^2++\beta_{63}^2\beta_{31}^2\sigma_{1}^2)\eta}
\end{equation}
By \eqref{eq:MFUQ optimizer:1}, the optimizers in Theorem \ref{thm:MFUQ PGM} are obtained when 
\begin{equation}
    q^{\pm}(x_i|x_{\pi^{Q^{\pm}}_i}) = p(x_i|x_{\pi_i})=\mathcal{N}(\beta_{i0}+\beta_i^T x_{\pi_i}, \sigma_i^2)
\end{equation}
for $i=5,7,8$, since they are not ancestors of $X_6$, and by recalling  \eqref{ex161}-\eqref{ex162}  we further compute the CPDs of  the remaining vertices as follows: Since $f(X_6)=X_6$ is linear and all random variables are linearly depended on their parents, we appropriately pair the factor $e^{\pm c_\pm x_{6}}$ with the exponential of the Gaussian CPD $p(x_6|x_{\pi_6^P})$ and we get a new quadratic term in the exponential as well as a term which linearly depends on the parents of $X_6$. The latter term is canceled out with the corresponding one in the normalizing factor $\MEANNN{P_{6|\pi_6^P}}{e^{\pm c_\pm X_6}}$ as the parents of $X_6$ are given. Precisely,
\begin{eqnarray*}
q^\pm (x_6|x_{\pi_6^{Q^\pm}}) &=& \frac{e^{\pm c_\pm x_{6}}}{\MEANNN{P_{6|\pi_6^P}}{e^{\pm c_\pm X_6}}} \cdot p(x_6|x_{\pi_6^P})\nonumber\\
&=&\frac{\exp\left\{-\frac{(x_6-\beta_{60}-\beta_{64}x_4-\beta_{63}x_2 \mp c_{\pm}\sigma_6^2 )^2}{2\sigma_6^2}\pm c_\pm (\beta_{64}x_4+\beta_{63}x_3)\right\}}{\int_{\mathcal{X}_6} \exp\left\{-\frac{(x_6-\beta_{60}-\beta_{64}x_4-\beta_{63}x_2 \mp c_{\pm}\sigma_6^2 )^2}{2\sigma_6^2}\pm c_\pm (\beta_{64}x_4+\beta_{63}x_3)\right\}dx_6}
\end{eqnarray*}
Thus,
\begin{eqnarray}
  \;\;\;  \;\;\;\;\;q^\pm (x_6|x_{\pi_6^{Q^\pm}}) = \mathcal{N}\left(\beta_{60}+\beta_{64}x_4+\beta_{63}x_3 \pm c_{\pm}\sigma_6^2, \sigma_6^2 \right),\quad \pi_6^{Q^\pm} \equiv \pi_6^P = \{4,3\}.
\end{eqnarray}
Similarly, for $4 \in \rho_6 $ $=\{4,3,2,1\}$
\begin{equation*}
     q^\pm(x_{4}|x_{\pi_4^{Q^\pm}})  = \frac{\MEANNN{P_{6|\{4,3\}}}{e^{\pm c_\pm X_6}}}{\MEANNN{P_{4}}{\MEANNN{P_{6|\{4,3\}}}{e^{\pm c_\pm X_6}}}}  p(x_4)=
     \frac{e^{-\frac{(x_4-\beta_{40} \mp c_{\pm} \beta_{43}\sigma_4^2)^2}{2\sigma_4^2}}e^{\pm c_\pm (\beta_{63}x_3)}}{ \int_{\mathcal{X}_4} e^{-\frac{(x_4-\beta_{40} \mp c_{\pm} \beta_{43}\sigma_4^2)^2}{2\sigma_4^2}}\ dx_4\ e^{\pm c_\pm (\beta_{63}x_3)}}
\end{equation*}
Using same the same argument as before, we get 
\begin{eqnarray}
    q^\pm(x_{4}|x_{\pi_4^{Q^\pm}}) = \mathcal{N}\left(\beta_{40} \pm c_{\pm}\sigma_4^2, \sigma_4^2 \right),\qquad \pi_4^{Q^\pm} \equiv \pi_4^P = \emptyset.
\end{eqnarray}
Furthermore, 
\begin{align}
    q^\pm (x_3|x_{\pi_3^{Q^\pm}})&=\mathcal{N}\left(\beta_{30}+\beta_{32}x_2+\beta_{31}x_1\pm c_{\pm}\sigma_3^2, \sigma_3^2 \right),\qquad &\pi_3^{Q^\pm} \equiv \pi_3^P =\{1,2\}\\
   q^\pm(x_{2}|x_{\pi_2^{Q^\pm}}) &= \mathcal{N}\left(\beta_{20} \pm c_{\pm}\sigma_2^2, \sigma_2^2 \right),\qquad &\pi_2^{Q^\pm} \equiv \pi_2^P = \emptyset\\
   q^\pm(x_{1}|x_{\pi_1^{Q^\pm}}) &= \mathcal{N}\left(\beta_{10} \pm c_{\pm}\sigma_1^2, \sigma_1^2 \right),\qquad &\pi_1^{Q^\pm} \equiv \pi_1^P = \emptyset.
\end{align}
By using the equation $\pm c_\pm \MEANNN{Q^\pm}{X_6} - \log \MEANNN{P}{e^{\pm c_\pm X_6}} =\eta$, the parameters $c_{\pm}$ are given  by
\begin{equation}
    c_\pm = \pm \sqrt{\frac{2\eta}{\mathcal{C}_{66}}} = \pm \sqrt{\frac{2\eta}{\sigma_6^2+\beta_{64}^2\sigma_4^2+\beta_{63}^2\sigma_{3}^2+\beta_{63}^2\beta_{32}^2\sigma_{2}^2+\beta_{63}^2\beta_{31}^2\sigma_{1}^2}} 
\end{equation}
\end{expl}
\begin{expl}[Computation of $F$ for Example~\ref{ex:cont}] \label{ex:contF}  For Example~\ref{ex:cont}, we compute $F(x_3,\rho_{3})$ with $f(X_6)=X_6$ and $l=3$ (and thus $\rho_6=\{1,2,3,4\}$ and $\rho_{3}=\{1,2\}$) as

\begin{eqnarray}\label{eq:F_3}
F(x_3,x^P_{\rho_{3}})\equiv F(x_3,x_2,x_1)&=&\int_{\mathcal{X}_{\{4,6\}}}x_6  p(x_6|x_4,x_3)p(x_4)dx_6dx_4\\
&=& \beta_{60}+\beta_{64}\beta_{40}+\beta_{63}x_3= F(x_3)\nonumber
\end{eqnarray}

\end{expl}

\begin{expl}[Computation of $\beta_{kl}$ and  $\tilde{\beta}_{kl}$ for Example~\ref{ex:cont}]\label{ex:contBeta}
 Let us now revisit Example~\ref{ex:cont} and compute $\beta_{kl}$ and $\tilde{\beta}_{kl}$ of Corollary~\ref{cor:Gaussian Bayesian network MFSI} when $l \in \pi_k^P$, e.g $l=3$  and  $l\in\rho_{k}\setminus \pi_k$, e.g. $l=2$ respectively. In the first case, $P_{3|\pi_{3}}$ is  perturbed under the constraint $R(Q_{3|\pi_3^Q}\|P_{3|\pi_{3}}) \leq \eta_3$ or $R(Q_{3|\pi_{3}}\|P_{3|\pi_{3}}) \leq \eta_3$, i.e. consider $Q \in \mathcal{D}^{\eta_3}_3$ or $\mathcal{D}^{\eta_3}_{3,P}$ and $f(X_6)=X_6$. $F(x_3,x_{\rho_3})$ is given by \eqref{eq:F_3} and by Theorem~\ref{thm:MFSI general}, \ref{thm:MFSI} and  \eqref{eq:MFSI general}, we can conclude that
\begin{equation}
    I^{\pm}(f(X_6), P; \mathcal{D}_3^{\eta_3}) = I^{\pm}(f(X_6), P; \mathcal{D}_{3,P}^{\eta_3}) = \pm |\beta_{63}|\sqrt{2\sigma_3^2\eta_3} 
\end{equation}
In the second case, $P_{2|\pi_{2}}$ is  perturbed under the constraint $R(Q_{2|\pi_2^Q}\|P_{2|\pi_{2}}) \leq \eta_2$ or $R(Q_{2|\pi_{2}}\|P_{3|\pi_{2}}) \leq \eta_2$. We compute $F(x_2,x_{\rho_2})=\beta_{60}+\beta_{64}\beta_{40}+\beta_{63}\beta_{30}+\beta_{63}\beta_{32}x_2+\beta_{63}\beta_{31}\beta_{10}=F(x_2)$ and $\tilde{\beta}_{62}=\beta_{63}\beta_{32}$.

\end{expl}
\section{Model uncertainty for inhomogeneous Markov chains}
We consider the Markov chain models shown in Figure \ref{fig: Markov chain}, and the QoI $f(X_k)$. Then we only perturb $P_{l|l-1}$ with $l \leq k$, under the constraint $R(Q_{l|\pi_l^Q}\|P_{l|l-1}) \leq \eta_l$. The function $F(x_l,x_{\rho_l^P})$ defined in \eqref{eq:F} depends only on $x_l$ and by Theorem \ref{thm:MFSI general},  we have
    \begin{equation}
    I^{\pm}(f(X_k), P; \mathcal{D}_l^{\eta_l})  = \pm \MEANNN{P_{\{l-1\}}}{
\inf_{c>0}\Big[ \frac{1}{c} \log \MEANNN{P_{l|l-1}}{e^{\pm c\bar{F}(X_l,X_{\rho_l})}}+\frac{\eta_l}{c} \Big]}. 
\end{equation}
\noindent Since $F(x_l,x_{\rho_l^P})=F(x_l)$ the condition on Theorem \ref{thm:MFSI} is satisfied, and therefore we have $I^{\pm}(f(X_k), P; \mathcal{D}^{\eta_l}_{l,P}) = I^{\pm}(f(X_k), P; \mathcal{D}_l^{\eta_l})$. To obtain the optimizers in both Theorem \ref{thm:MFSI general} and \ref{thm:MFSI}, we use  \eqref{eq:opt:MFSI:c}-\eqref{eq:opt:MFSI:1} and thus 
\begin{equation}
    q^\pm (x_i|x_{i-1}) \equiv p(x_i|x_{i-1}) \qquad \textrm{ for all $i \neq l$}
\end{equation}
and
\begin{equation}
    q^\pm (x_l|x_{l-1}) = \frac{e^{\pm c_\pm(x_{l-1}) F(x_l)}}{\MEANNN{P}{e^{\pm c_\pm(x_{l-1}) F(X_l)}|x_{l-1}}} p(x_l|x_{l-1}) 
\end{equation}
where $c_{\pm}(x_{l-1})$ are the unique solutions of $R(P^{ \pm c_{\pm}}_{l|l-1}\|P_{l|l-1}) =\eta_l$
for all $x_{l-1}$. Moreover, by perturbing $P_{l|l-1}$, $l>k$, with the constraint $$R(Q_{l|\pi_l^Q}\|P_{l|l-1}) \leq \eta_l \textrm{ or } R(Q_{l|l-1}\|P_{l|l-1}) \leq \eta_l,$$and by Theorem \ref{thm:MFSI general} and \ref{thm:MFSI}, we have $I^{\pm}(f(X_k), P; \mathcal{D}_l^{\eta_l}) = I^{\pm}(f(X_k), P; \mathcal{D}^{\eta_l}_{l,P}) = 0$. Note that when the ambiguity set is given by \eqref{eq:set:MFUQ}, it includes also $Q$'s that are \textit{non-Markovian}. However, the optimizers are inhomogeneous Markov chains and are provided by Theorem~\ref{thm:MFUQ PGM}.
\begin{figure}[ht]
\centering
\includegraphics[width=1\textwidth]{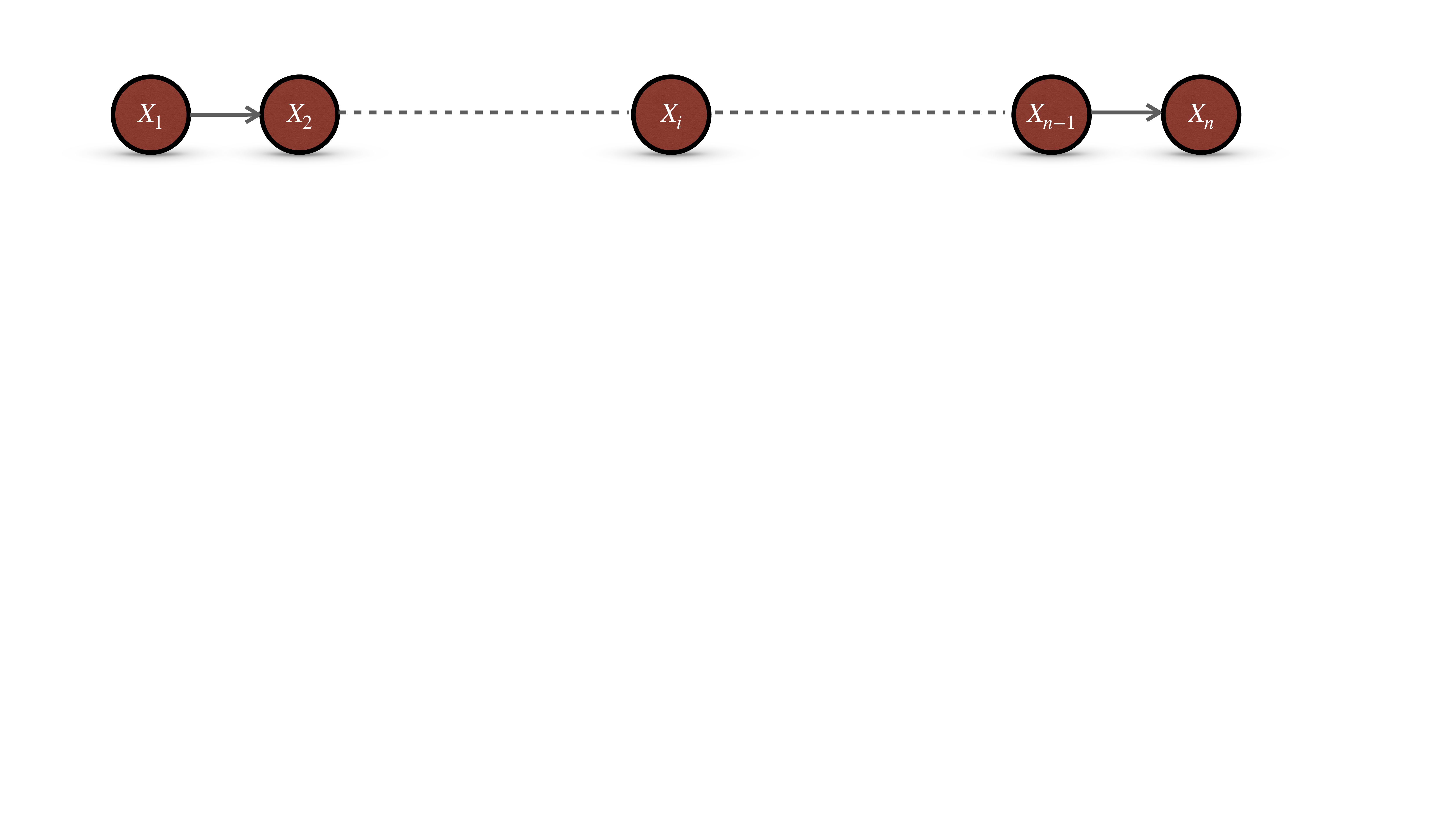}
\vspace{-6cm}
\caption{An inhomogeneous Markov chain consists of $X=\{X_1, X_2,\dots,X_n\}$ with $p(x)=\prod_{i=1}^n p(x_i|x_{i-1})$.}
\label{fig: Markov chain}
\end{figure}

\section{Proof of Lemma~\ref{lem:UpperF}}\label{app:lem:UpperF}
Since for any $Q \in \mathcal{D}_l^{\eta_l}$, we have $\pi_j^Q \equiv \pi_j^P =\pi_j$ and $Q_{j|\pi_j} \equiv P_{j|\pi_j}$ for all $j \neq l$, therefore, we can rewrite the bias $\MEANNN{Q}{f(X_k)} - \MEANNN{P}{f(X_k)}$ as
\begin{eqnarray}\label{compsecondmainthm}
    \;\;\;\;\;\;\;\;\;\;\;\;\;&=&    \int_{\mathcal{X}} f(x_k) \prod_{i=1}^n Q(dx_i|x_{\pi_i^Q}) -  \int_{\mathcal{X}} f(x_k) \prod_{i=1}^n P(dx_i|x_{\pi_i^P}) \\
     &=&   \int_{\mathcal{X}_k} \int_{\mathcal{X}_{\rho_k^Q}} f(x_k) \prod_{i \in  \rho_k^Q\cup\{k\}} Q(dx_i|x_{\pi_i^Q}) -  \int_{\mathcal{X}_k} \int_{\mathcal{X}_{\rho_k^P}} f(x_k) \prod_{i \in \rho_k^P\cup\{k\}} P(dx_i|x_{\pi_i^P})\notag\\
     &=&\MEANNN{Q_{\{k\}}}{f(X_k)} - \MEANNN{P_{\{k\}}}{f(X_k)}\nonumber
\end{eqnarray}
If $l \notin \bar{\rho}_k^P$, we have $\pi_i^Q \equiv \pi_i^P =: \pi_i$ and $Q(dx_i|x_{\pi_i}) \equiv P(dx_i|x_{\pi_i})$ for all $i \in\bar{\rho}_{k}$, therefore $Q_{\{k\}}\equiv P_{\{k\}}$, and thus $\MEANNN{Q}{f(X_k)} - \MEANNN{P}{f(X_k)} =0$. Based on this calculation for $Q \in \mathcal{D}_l^{\eta_l}$, we stress that our indices capture the graph structure correctly, e.g. perturbations on disconnected vertices do not affect the QoI $f=f(X_k)$. Since $Q(dx_j|x_{\pi_j}) \equiv P(dx_j|x_{\pi_j})$ for all $j \neq l$, \eqref{compsecondmainthm} equals to
\begin{eqnarray}
&& \;\;\;\;\;\;\;\;\;\;\;\MEANNN{Q}{f(X_k)} - \MEANNN{P}{f(X_k)} =\MEANNN{Q_{\{k\}}}{f(X_k)} - \MEANNN{P_{\{k\}}}{f(X_k)}\\
    &&\qquad\;\;= \int_{\mathcal{X}_{\rho_k^Q\cup\{k\}}} f(x_k) \prod_{i \in \rho_k^Q \cup \{k\} \setminus \rho_l^Q\cup\{l\}} Q(d x_i|x_{\pi_i^Q}) \cdot Q(dx_l|x_{\pi_l^Q}) \cdot \prod_{i \in \rho_l^Q } Q(d x_i|x_{\pi_i^Q})\notag\\
    & &\qquad\;\; \;\;- \int_{\mathcal{X}_{\rho_k^P\cup\{k\}}} f(x_k) \prod_{i \in \bar{\rho}_k^P \setminus \rho_l^P\cup\{l\}} P(d x_i|x_{\pi_i}^P) \cdot P(dx_l|x_{\pi_l}^P) \cdot \prod_{i \in \rho_l^P } P(d x_i|x_{\pi_i}^P)\notag\\
     && \qquad\;\;=  \int_{\mathcal{X}_{\rho_k^P\cup\{k\}}} f(x_k) \prod_{i \in \bar{\rho}_k^P \setminus \rho_l^P\cup\{l\}} P(d x_i|x_{\pi_i^P}) \cdot Q(dx_l|x_{\pi_l^Q}) \cdot \prod_{i \in \rho_l^P } P(d x_i|x_{\pi_i^P})\notag\\
     & & \qquad\;\;\;\;- \int_{\mathcal{X}_{\rho_k^P\cup\{k\}}} f(x_k) \prod_{i \in \bar{\rho}_k^P \setminus \rho_l^P\cup\{l\}} P(d x_i|x_{\pi_i^P}) \cdot P(dx_l|x_{\pi_l^P}) \cdot \prod_{i \in \rho_l^P} P(d x_i|x_{\pi_i^P})\notag\\
     &&\qquad\;\;=  \int_{\mathcal{X}_{\rho_l^P}}  \left[\int_{\mathcal{X}_{l}} F(x_l,x_{\rho_l^P}) Q(dx_l|x_{\pi_l^Q}) - \int_{\mathcal{X}_{l}} F(x_l,x_{\rho_l^P}) P(dx_l|x_{\pi_l^P})\right] \prod_{i \in \rho_l } P(d x_i|x_{\pi_i^P})\notag\\
     &&\qquad\;\;= \MEANNN{P_{\rho_l^P}}{ \MEANNN{Q_{l|\pi_l^Q}}{F(X_l,X_{\rho_l^P})} - {\MEANNN{P_{l|\pi_l^P}}{F(X_l,X_{\rho_l^P})}}}\notag
     \end{eqnarray}

\section{KL-divergence Chain Rule for Bayesian networks}
\label{subsec:eta in PGM} 
In this subsection, we discuss the KL chain rule \cite{cover2012elements} in the context of  Bayesian networks as it paves the way for considering suitable ambiguity sets (different than \eqref{eq:set:MFUQ})  and applying  model sensitivity analysis to each component on a baseline Bayesian network. We remind that  $P_{i|\pi_i^P}$ is the conditional distribution of $X_i$ with given parents $X_{\pi_i^P} = x_{\pi_i}$, i.e. $P_{i|\pi_i^P}(dx_i)=P(dx_i|x_{\pi_i})$ and for clarity purposes and stressing the given values,  we  write $P_{i|X_{\pi_i^P}=x_{\pi_i}}(dx_i)$ 
instead. 

\begin{definition}\label{def:CondKL}
Let $P$ and $Q$ be two PGMs with densities $p$ and $q$ respectively defined as \eqref{eq:PGM:def}.  For each $i\in\{1,\dots,n\}$, we define the conditional KL divergence between $Q_{i|X_{\pi_i^Q}}$ and $P_{i|X_{\pi_i^P}}$ with given $X_{\pi_i^Q}=x_{\pi_i^Q}$ and $X_{ \pi_i^P}=x_{\pi_i^P}$ as
\begin{equation}
    R(Q_{i|X_{\pi_i^Q}=x_{\pi_i^Q}}\|P_{i|X_{\pi_i^P}=x_{\pi_i^P}}) = \int_{\mathcal{X}_i} \log \frac{Q(dx_i|x_{\pi_i^Q})}{P(dx_i|x_{\pi_i^P})}Q(dx_i|x_{\pi_i^Q}) 
\end{equation}
\end{definition}
\begin{lemma}[Chain Rule of Relative Entropy for PGMs] \label{lem:CKLPGMs}For any two PGMs $P$ and $Q$ with densities $p(x)=\prod_{i=1}^n p(x_i|x_{\pi_i^P})$ and $q(x)=\prod_{i=1}^n q(x_i|x_{\pi_i^Q})$, the KL divergence can be expressed as:
\begin{equation}
\label{eq:chain rule for KL}
   R(Q\|P) = \sum_{i=1}^n \MEANNN{Q_{\pi_i^Q \cup \pi_i^P}}{R(Q_{i|X_{\pi_i^Q}}\|P_{i|X_{\pi_i^P}})}
    \end{equation}
 where $R(Q_{i|X_{\pi_i^Q}}\|P_{i|X_{\pi_i^P}})$ is the conditional KL divergence given in Definition~\ref{def:CondKL}  and  $\mathbb{E}_{Q_{\pi_i^Q \cup \pi_i^P}}$ is the expectation with respect to $Q_{A}$ defined in Section~\ref{sec:MFUQ} with $A=\pi_i^Q \cup \pi_i^P$.
\end{lemma}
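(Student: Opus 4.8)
The plan is to derive the graphical chain rule from the elementary (non-graphical) chain rule of relative entropy together with the local Markov property of Bayesian networks. First I would fix a labeling $1,\dots,n$ that is simultaneously a topological order for both $P$ and $Q$; this is available in all the settings where Lemma~\ref{lem:CKLPGMs} is applied, since the alternative $Q$ either retains each parent set $\pi_i^P$ or augments it with vertices drawn from the ancestors $\rho_i^P$ (as dictated by Theorems~\ref{thm:MFUQ PGM} and \ref{thm:MFSI general}), so the common order $j<i$ for all $j\in\pi_i^P\cup\pi_i^Q$ is respected. I would also record the standing hypothesis $R(Q\|P)<\infty$ (equivalently $Q\ll P$ with absolutely continuous conditionals), which makes the manipulations below legitimate; since every summand that appears is a nonnegative conditional relative entropy, no cancellation issues arise.

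With such an order fixed, I would start from the ordinary chain rule of probability, writing $q(x)=\prod_{i=1}^n q(x_i\mid x_{1:i-1})$ and likewise $p(x)=\prod_{i=1}^n p(x_i\mid x_{1:i-1})$, so that
$$\log\frac{q(x)}{p(x)}=\sum_{i=1}^n\log\frac{q(x_i\mid x_{1:i-1})}{p(x_i\mid x_{1:i-1})}\,.$$
Taking $\MEANNN{Q}{\cdot}$ of both sides and using the tower property to identify each summand as a conditional relative entropy averaged over the predecessors yields the generic chain rule
$$R(Q\|P)=\sum_{i=1}^n \MEANNN{Q}{R\big(Q_{i\mid X_{1:i-1}}\,\big\|\,P_{i\mid X_{1:i-1}}\big)}\,.$$

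Next I would invoke the local Markov property: because the chosen order is topological for each graph, every predecessor of $i$ is a non-descendant, and $X_i$ is conditionally independent of its non-descendants given its parents under each model. Hence $Q_{i\mid X_{1:i-1}}=Q_{i\mid X_{\pi_i^Q}}$ and $P_{i\mid X_{1:i-1}}=P_{i\mid X_{\pi_i^P}}$, so each conditional relative entropy $R\big(Q_{i\mid X_{\pi_i^Q}}\,\big\|\,P_{i\mid X_{\pi_i^P}}\big)$ depends on $x_{1:i-1}$ only through $x_{\pi_i^Q\cup\pi_i^P}$. Reducing the expectation over all predecessors to the $Q$-marginal on $\pi_i^Q\cup\pi_i^P$ — legitimate precisely because the integrand depends on nothing else, and matching the definition of $Q_A$ with $A=\pi_i^Q\cup\pi_i^P$ — produces the stated formula, with $\MEANNN{Q_{\pi_i^Q\cup\pi_i^P}}{\cdot}$ in front of each term.

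I expect the only genuinely delicate point to be this replacement of conditioning-on-all-predecessors by conditioning-on-parents: one must justify that a single labeling is topological for both $P$ and $Q$ (guaranteed in the relevant ambiguity sets because any new parents are constrained to be ancestors) and that the local Markov property then applies factor by factor. Once that is in place, the remaining work — the telescoping of the probability chain rule and the marginalization of each nonnegative summand — is routine.
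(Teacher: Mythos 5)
Your proof is correct, and it arrives at the same vertex-by-vertex decomposition as the paper, but by a longer route. The paper's proof is a one-line expansion: since $p$ and $q$ are \emph{given} in factored form, $\log\frac{q(x)}{p(x)}=\sum_{i=1}^n\log\frac{q(x_i|x_{\pi_i^Q})}{p(x_i|x_{\pi_i^P})}$ immediately, and one then integrates each summand against $Q$ and marginalizes out everything except $x_i$ and $x_{\pi_i^Q\cup\pi_i^P}$ to land on $\sum_i\MEANNN{Q_{\pi_i^Q\cup\pi_i^P}}{R(Q_{i|X_{\pi_i^Q}}\|P_{i|X_{\pi_i^P}})}$. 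You instead pass through the sequential chain rule with full-predecessor conditioning and then invoke the local Markov property to collapse $Q_{i|X_{1:i-1}}$ to $Q_{i|X_{\pi_i^Q}}$ and $P_{i|X_{1:i-1}}$ to $P_{i|X_{\pi_i^P}}$; this buys nothing for the first step (the factorizations are hypotheses, not something to be re-derived) and costs you the extra hypothesis of a \emph{common} topological order for the two DAGs, which need not exist for "any two PGMs" as the lemma is stated (e.g.\ if $1\to2$ in $P$ but $2\to1$ in $Q$). To your credit, you flag exactly this as the delicate point and correctly observe that it holds in every ambiguity set where the lemma is used; and in fact both proofs quietly rely on a compatibility of this kind at the marginalization step, since reducing $\MEANNN{Q}{\,\cdot\,}$ to $\MEANNN{Q_{\pi_i^Q\cup\pi_i^P}}{\,\cdot\,}$ with the inner conditional taken to be $Q_{i|X_{\pi_i^Q}}$ requires $X_i\perp X_{\pi_i^P\setminus\pi_i^Q}\mid X_{\pi_i^Q}$ under $Q$, i.e.\ that the extra conditioning variables are non-descendants of $i$ in $Q$'s graph. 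So your version is slightly less economical but makes explicit an assumption the paper leaves implicit; either argument is acceptable.
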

\begin{proof}
\begin{eqnarray}
\;\;\;\;\;\;\;\;\;\;\;R(Q\|P) &=&\int_{\mathcal{X}} \sum_{i=1}^n \log \frac{Q(dx_i|x_{\pi_i^Q})}{P(dx_i|x_{\pi_i^P})}\prod_{j=1}^n Q(dx_j|x_{\pi_j^Q}) \\
&=& \sum_{i=1}^n \int_{\mathcal{X}_{ \rho_i^P\cup\rho_i^Q}}\int_{\mathcal{X}_i} \log \frac{Q(dx_i|x_{\pi_i^Q})}{P(dx_i|x_{\pi_i^P})}Q(dx_i|x_{\pi_i^Q})\prod_{j \in \{\rho_i^Q \cup \rho_i^P\}} Q(dx_j|x_{\pi_j^Q}) \notag\\
&=& \sum_{i=1}^n \MEANNN{Q_{\pi_i^Q \cup \pi_i^P}}{R(Q_{i|X_{\pi_i^Q}}\|P_{i|X_{\pi_i^P}})} \nonumber
\end{eqnarray}
\end{proof}

    
    \section{Schematic for Model Sensitivity Indices}    
 This schematic refers to the main theorems of Section~\ref{sec:QUSI}.
     \begin{figure}[ht]
\centering
\includegraphics[width=1.1\textwidth]{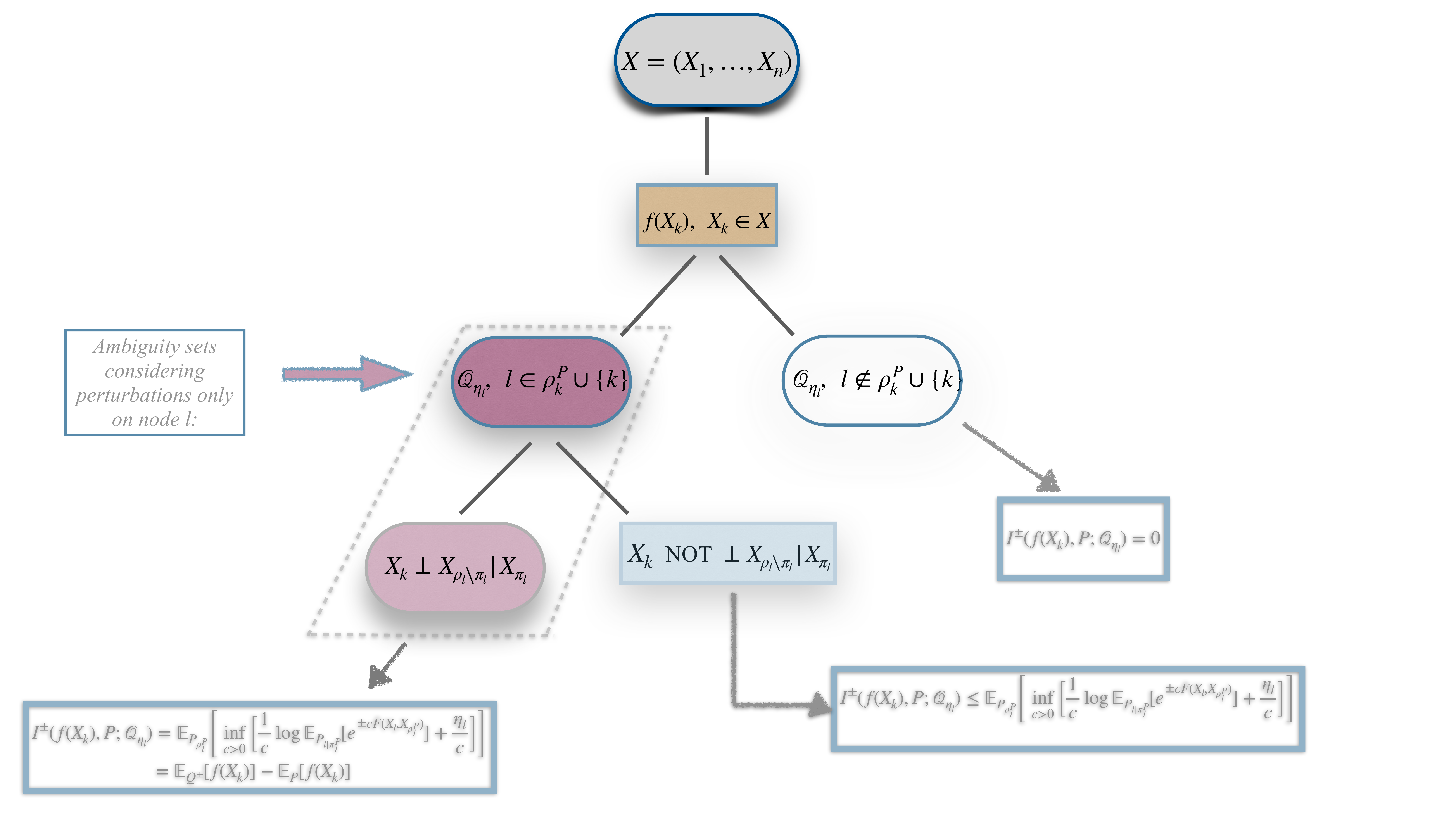}
\caption{\small{A schematic representation of how the set of vertices of a graph can be decomposed according to the relative position of vertex $k$ that corresponds to the QoI $f(X_k)$, and a vertex $l$ such that perturbations of $P_{l|\pi_l}$ are considered. By Lemma~\ref{lem:UpperF}, Theorem~\ref{thm:MFSI general} and ~\ref{thm:MFSI}, the predictive uncertainty given by \eqref{eq:PU general MFSI} with $l$ being in different parts of the decomposition varies: The set of vertices is first split as $X=(\bar{\rho}_k^P)\cup(\bar{\rho}_k^P)^c$.  The predictive uncertainty $I^{\pm}(f(X_k), P; \mathcal{D}^{\eta_l}_l)$ over models with only perturbed $P_{l|\pi_l}$ when $l\in\bar{\rho}_k^P$ is given by \eqref{eq:MFSI general} and is tight in $\mathcal{D}^{\eta_l}_l$,while perturbation on $P_{l|\pi_l}$ when $l\notin\bar{\rho}_k^P$ do not affect the QoI and thus $I^{\pm}(f(X_k), P; \mathcal{Q}_{\eta_l})=0$. We then decompose the set of vertices $\bar{\rho}_k^P$ into $\{l: X_k \perp X_{\rho_l \setminus \pi_l} | X_{\pi_l}\}$ and its complement. The predictive uncertainty over $\mathcal{D}^{\eta_l}_{l,P}$ with $l$ in the former set  is same as the one for $l\in\bar{\rho}_k^P$ with the difference that is tight on $\mathcal{D}^{\eta_l}_{l,P}$ contrary to the one in the latter set where the bound is not attained as well as is not tight.}}\label{fig:readgraph}
\end{figure}

\section{Data-informed stress tests for Gaussian Bayesian networks} 
In this section, we explain with detail Data-informed stress tests analysis when the baseline model $P$ is a Gaussian Bayesian network. Let $P$ be a Gaussian Bayesian network with conditional probability densities $p(x_i|x_{\pi_i})$ satisfying $p(x_i|x_{\pi_i}) = \mathcal{N}(\beta_{i0}+\beta_i^T x_{\pi_i}, \sigma_i^2)$ for some $\beta_{i0}$, $\beta_i$, and $\sigma_i^2$, i.e. $P_{i|\pi_i}$ is the conditional distribution of $X_i = \beta_{i0}+\beta_i^T X_{\pi_i} + \epsilon_i$.  The random variable $\epsilon_i$ has density  $p_{\epsilon_i}(x) = \mathcal{N}(0, \sigma_i^2)$ and comes from fitting data with Maximum-Likelihood-Estimation. Let us  consider alternative models to $P$ such as
\begin{equation}\label{GBayesian networkstress1}
Q_{i|\pi_i}:\quad    X_i = \beta_{i0}+\beta_i^T X_{\pi_i} + \tilde{\epsilon}_i
\end{equation}
where $\tilde{\epsilon}_i$ follows another approximate distribution of the data with density $q_{\tilde{\epsilon}_i}(x)$. For instance, we can consider  $Q_{\tilde{\epsilon}_i}$ with density $q_{\tilde{\epsilon}_i}$ as the histogram, that is
\begin{equation}
\label{eq:hist}
q_{\tilde{\epsilon}_i}^{hist}(x)=\sum_{k=1}^{m} \frac{\nu_k}{n h} I(x \in B_k)\, ,
\end{equation}
where ${B_1, \dots, B_m}$ are the histogram bins, $h$ is the bin width, $n$ is the number of observations and $\nu_k$ is the number of observations in $B_k$. Alternatively, we can consider the model $Q_{\tilde{\epsilon}_i}$ given by a KDE viewed here as a high resolution but smooth approximation of the histogram, namely
\begin{equation}
\label{eq:KDE}
q_{\tilde{\epsilon}_i}^{KDE}(x)=\sum_{k=1}^{n} \frac{1}{n h} K(\frac{x-x_{i}}{h})\, ,
\end{equation}
where $K(\cdot)$ is the normal kernel smoothing function with bin width $h$, $(x_{1},\dots,x_{n})$ are the samples of $\epsilon_i$. Other KDE kernels can be considered here (see \cite{w06}) or any other probabilistic representations of the data in the histogram. Then, based on the following computation:Therefore, for given $x_{\pi_i}$, we have
\begin{eqnarray}
\label{eq:eta_i_Pa}
\eta_i^{\pi_i}  &=& \int \log \frac{q(x_i|x_{\pi_i})}{p(x_i|x_{\pi_i})}q(x_i|x_{\pi_i}) dx_i\notag\\
&=&  \int \log \frac{q(x_i-\beta_{i0}-\beta_i^T x_{\pi_i}|x_{\pi_i})}{p(x_i-\beta_{i0}-\beta_i^T x_{\pi_i}|x_{\pi_i})}q(x_i-\beta_{i0}-\beta_i^T x_{\pi_i}|x_{\pi_i}) dx_i\notag\\
&=&\int \log \frac{q_{\tilde{\epsilon}_i}(x)}{p_{\epsilon_i}(x)}q_{\tilde{\epsilon}_i}(x) dx\, ,
\end{eqnarray}
Based on the above computation, $\eta_i^{\pi_i}$ is independent of  $\pi_i$ and hence $\eta_i^{\pi_i} \equiv \eta_i$.

\section{Model sensitivity indices for the ORR Bayesian network} 
\begin{table}[ht]\centering\setlength{\tabcolsep}{16pt}\renewcommand{\arraystretch}{1.5}
\caption{Outcomes of MLE for the parameters involved in  \eqref{eq: omega dist}-\eqref{eqyi'} for the ORR Bayesian network in Section~\ref{sec: ORR}.}
\begin{tabular}{|c|c|}
\hline
\hline
$\beta_{y_1,0}$ = 0.0595 & $\beta_{e0,0}$, $\beta_{ei,0}$ = 0 \\
\hline
$\sigma_{e0}^2$ = 0.0329 & $\sigma_{ei}^2$ = 0.0065\\
\hline
$\beta_{y_2,0}$ = 1.8231 & $\beta_{d0,0}$ = -0.0754 \\
\hline
$\beta_{di,0}$ = -0.0222 & $\sigma_{di}^2$ = 0.0354\\
\hline
$\beta_{y_1,x_0}$ = 0.5111 & $\sigma_{d0}^2$ = 0.1032 \\
\hline
$\beta_{s1,0}$ = -0.2967 & $\sigma_{s1}^2$ = 0.0046\\
\hline
$\beta_{y_2,x_0}$  = -0.5564 & $\beta_{s0,0}$ = 0.0067 \\
\hline
$\beta_{s2,0}$ = -0.1209 & $\sigma_{s2}^2$ = 0.0054\\
\hline
$\beta_{ci,0}$  = 0 & $\sigma_{s0}^2$ = 0.0010\\
\hline
$\sigma_{c1}^2$ = 0.0347 & $\sigma_{c2}^2$ = 0.0204 \\
\hline
\hline
\end{tabular}
\label{tab:MLE beta}
\end{table}


\subsection{Calculation of  model sensitivity indices}\label{sebsec:important calculation for qoi}
We remind that the optimal oxygen binding energy is defined as
\[
x_{O^*}^P=\mathrm{argmax}_{x_0} \left[\min\{\MEANNN{P}{y_1|x_0},\MEANNN{P}{y_2|x_0}\}\right]
\]
We compute $\MEANNN{P}{y_i|x_0}$ for $i=1,2$ by using \eqref{eqyi} and \eqref{eqyi'} as follows
\begin{eqnarray*}
\MEANNN{P}{y_i|x_0}&=&\beta_{y_i,0}+\beta_{y_i,x}(x_0+\beta_{s0,0}+\beta_{e0,0}+\beta_{d0,0})\\
&&\qquad+\beta_{ci,0}+\beta_{si,0}+\beta_{ei,0}+\beta_{di,0}\nonumber
\end{eqnarray*}
Then 
\begin{eqnarray}\label{CQOI}
x_{O^*}^P=\frac{\beta_{y_2,0}+\bar\beta_2-\beta_{y_1,0}-\bar\beta_1}{\beta_{y_1,x}-\beta_{y_2,x}}
\end{eqnarray}
where 
\begin{align}
\bar\beta_i=\beta_{y_i,x}(\beta_{s0,0}+\beta_{e0,0}+\beta_{d0,0})+\beta_{ci,0}+\beta_{si,0}+\beta_{ei,0}+\beta_{di,0}
\end{align}
It is a straightforward calculation that for $i=1,2$ and  $l=e0,d0,s0,e1,d1,s1,c1,e2,d2,s2$ and $c2$.
\[
\beta_{y_i,0}+\bar\beta_i=\MEANNN{P_{\omega_l}}{F_{l,i}},\qquad \textrm{ for any $i$ and $l$}
\]
where $F_{l,i}=\MEANNN{P_{y_i|\omega_l}}{y_i|x_0}$ and $p(y_i|\omega_{l},x_0) = \mathcal{N}(\tilde{\beta}_{y_i,0}+\tilde{\beta}_{y_i,\omega_{l}}\omega_{l},\tilde{\sigma}_{y_i}^2)$. Hence \eqref{CQOI} equals to
\[
x_{O^*}^P=\frac{\MEANNN{P_{\omega_l}}{F_{l,2}}-\MEANNN{P_{\omega_l}}{F_{l,1}}}{\beta_{y_1,x}-\beta_{y_2,x}}
\]
Note that since we compute the model sensitivity indices over $\mathcal{D}_{l,P}^{\eta_l}$ for any $l\in\{e0,d0,s0,e1,d1,s1,c1,e2,d2,s2\}$, the alternative Bayesian networks $Q$ have the same structure given by \eqref{eqyi}, same CPDs as the Bayesian network $P$ except from the CPD of $\omega_l$. Let us denote  its conditional distribution by $Q_{\omega_l}$(since $\rho_{\omega_l}=\pi_{\omega_l}=\emptyset$) and  its CPD by $q_{\omega_l}$.  Then, 
\begin{equation}
x_{O^{*}}^Q - x_{O^{*}}^P=
\frac{\MEANNN{Q_{\omega_l}}{F_{l,2}}-\MEANNN{P_{\omega_l}}{F_{l,2}}-(\MEANNN{Q_{\omega_l}}{F_{l,2}}-\MEANNN{P_{\omega_l}}{F_{l,1}})}{\beta_{y_1,x}-\beta_{y_2,x}}
\end{equation} 
which further gives us
\[
 \frac{I^\mp(y_2,P; \mathcal{D}^{\eta_l}_{l,P})-I^\pm(y_1,P; \mathcal{D}^{\eta_l}_{l,P})}{\beta_{y_1,x}-\beta_{y_2,x}}\leq x_{O^{*}}^Q - x_{O^{*}}^P\leq \frac{I^\pm(y_2,P; \mathcal{D}^{\eta_l}_{l,P})-I^\mp(y_1,P; \mathcal{D}^{\eta_l}_{l,P})}{\beta_{y_1,x}-\beta_{y_2,x}}
\]
In the above inequality, by combining Corollary~\ref{cor:Gaussian Bayesian network MFSI} and Table~\ref{tab:tildebeta} we get \eqref{eq:MFSI opt1} and \eqref{eq:MFSI opt2}.
\begin{table}[ht]\centering
\caption{The values of $\tilde{\beta}_{y_i,\omega_{l}}$ involved in $p(y_i|\omega_{l},x_0) = \mathcal{N}(\tilde{\beta}_{y_i,0}+\tilde{\beta}_{y_i,\omega_{l}}\omega_{l},\tilde{\sigma}_{y_i}^2)$. They are used to evaluate the model sensitivity indices $I^\pm(y_i,P; \mathcal{D}^{\eta_l}_{l,P})$, $i=1,2$ provided by Corollary~\ref{cor:Gaussian Bayesian network MFSI}.}
\begin{tabular}{|c|c|c|c|}
\hline
\hline
  & $\omega_l = \omega_{e0},\omega_{d0},\omega_{s0}$ & $\omega_l = \omega_{e1},\omega_{d1},\omega_{s1},\omega_{c1}$ & $\omega_l = \omega_{e2},\omega_{d2},\omega_{s2},\omega_{c2}$\\
\hline
$f = y_1$ & $\tilde{\beta}_{y_1,\omega_{l}} = \beta_{y_1,x}$ & $\tilde{\beta}_{y_1,\omega_{l}} = 1$ & $\tilde{\beta}_{y_1,\omega_{l}} = 0$\\
\hline
$f = y_2$ & $\tilde{\beta}_{y_2,\omega_{l}} = \beta_{y_2,x}$ & $\tilde{\beta}_{y_2,\omega_{l}} = 0$ & $\tilde{\beta}_{y_2,\omega_{l}} = 1$\\
\hline
\hline
\end{tabular}
\label{tab:tildebeta}
\end{table}

\subsection{Propagation of 
model uncertainties to the QoIs} We note the discrepancies  in the  propagation of model misspecification to the QoI  between  different Bayesian network components, as  demonstrated in  Figure~\ref{fig:Pie}. In particular, in Figure~\ref{fig:Pie} (Left)  the same uncertainty (described by model misspecification $\eta_l$) is applied on all ORR Bayesian network vertices, however not all propagate and affect the same the QoI:  see Figure \ref{fig:propagation} for  examples of propagation ($22\%$) and non-propagation ($5\%$ and $0\%$) of model misspecification to the QoI.

\begin{figure}[ht]
\centering
\includegraphics[width=0.7\textwidth]{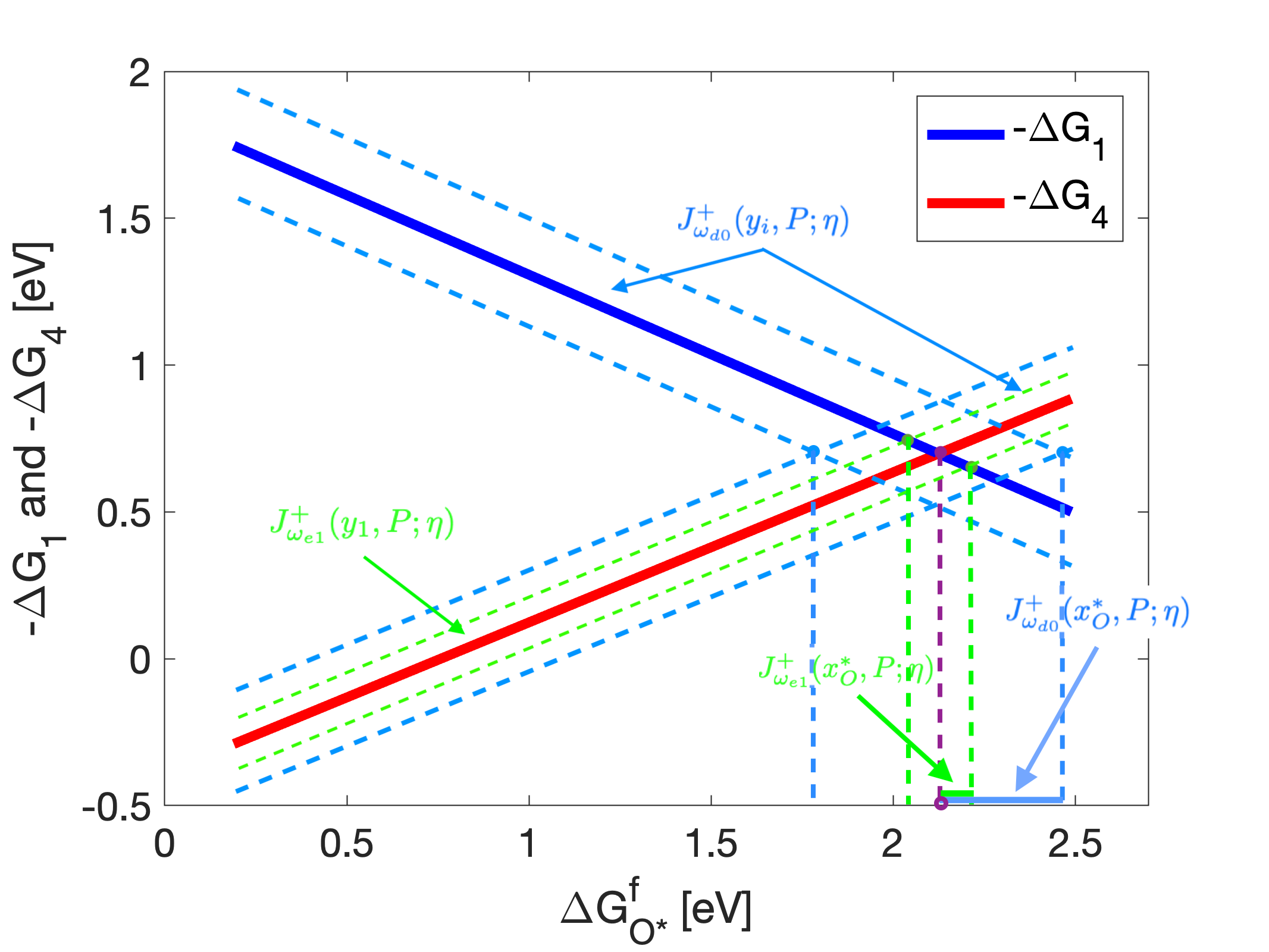}
\caption{\small Propagation 
vs. Non-propagation
of model misspecification of the Bayesian network vertices $\omega_{d0}$ and $\omega_{e1}$ respectively,  to the predictions of the QoI $x_{O^\ast}^P$; model misspecification is set to $\eta=1$ for both Bayesian network vertices. First, note that $I^+(y_2,P;\mathcal{D}^\eta_{\omega_{e1}})=0$ i.e., the model misspecification of $\omega_{e1}$ only affects the prediction of $y_1$, but not $y_2$, see Figure \ref{fig:bounds of QoI}; therefore  the uncertainty of $\omega_{e1}$ only propagates to $x_{O^\ast}^P$ through $y_1$, while  $I^+(y_1,P;\mathcal{D}^\eta_{\omega_{e1}})$ is small since $\omega_{e1}$ has a lower variance which is associated with  more informative available data. Thus, it results in a small corresponding  uncertainty in $x_{O^\ast}^P$. Meanwhile, the uncertainty of $\omega_{d0}$ propagates to $x_{O^\ast}^P$ through both $y_1$ and  $y_2$, (i.e., the model misspecification of $\omega_{d0}$ affects both the predictions of $y_1$ and $y_2$), and $I^+(y_i,P;\mathcal{D}^\eta_{\omega_{d0}})$ is larger since $\omega_{d0}$ has a higher variance (due to insufficient informative data available). Therefore we have  a   larger corresponding uncertainty in $x_{O^\ast}^P$ predictions, as shown in the figure.}
\label{fig:propagation}
\end{figure}

%

 \newpage

\bibliographystyle{plain} 
\bibliography{SIAM_BN_BFKR}
\end{document}